\newtheorem*{rep@theorem}{\rep@title}
\newcommand{\newreptheorem}[2]{%
\newenvironment{rep#1}[1]{%
 \def\rep@title{#2 \ref{##1}}%
 \begin{rep@theorem}}%
 {\end{rep@theorem}}}
\newtheorem{definition}{Definition}[section]
\newtheorem{theorem}{Theorem}
\newtheorem{assumption}{Assumption}
\newtheorem{lemma}{Lemma}
\newcommand{\RNum}[1]{\uppercase\expandafter{\romannumeral #1\relax}}
\DeclareMathOperator*{\argmax}{\arg\!\max}
\DeclareMathOperator*{\topk}{top_k}
\newcommand{\xhdr}[1]{{\noindent\bfseries #1}.}
\newcommand{\cut}[1]{}
\newcommand{\removelatexerror}{\let\@latex@error\@gobble}
\def\eqref#1{Eq.~\ref{#1}}
\def\floor#1{\lfloor #1 \rfloor}
\def\1{\bm{1}}
\DeclareMathAlphabet{\mathsfit}{\encodingdefault}{\sfdefault}{m}{sl}
\SetMathAlphabet{\mathsfit}{bold}{\encodingdefault}{\sfdefault}{bx}{n}
\def\gF{{\mathcal{F}}}
\def\gX{{\mathcal{X}}}
\def\sP{{\mathbb{P}}}
\newcounter{ggiCounter}
\colorlet{pink}{red!40}
\colorlet{blue}{cyan!60}
\definecolor{salmon}{HTML}{FF8C94}
\definecolor{mydarkblue}{rgb}{0,0.08,0.45}
\definecolor{grannysmithapple}{rgb}{0.79, 0.92, 0.77}
\definecolor{darkgrannysmithapple}{rgb}{0.79, 0.99, 0.77}
\definecolor{g1}{HTML}{8FC771}
\definecolor{g2}{HTML}{B4D375}
\definecolor{g3}{HTML}{DADF79}
\definecolor{g4}{HTML}{FFEB7D}
\newcommand{\algoname}{\textsc{Virtual+}}
\newcommand{\setvalue}{$\mathbb{V}$}
\newcommand{\setvaluemath}{\mathbb{V}}
\newtheorem*{theo}{Online Threat Model}
\newenvironment{ftheo}
  {\begin{mdframed}\begin{theo}}
  {\end{theo}\end{mdframed}}
\title{Online Adversarial Attacks}
\author{Andjela Mladenovic$^*$ \\
  Mila, Université de Montréal  \\
  \And
  Avishek Joey Bose\thanks{Equal Contribution. Corresponding authors: \{\texttt{joey.bose,andjela.mladenovic\}@mila.quebec}} \\
  Mila, McGill University 
  \And
%  \texttt{} \\
  \And
  Hugo Berard$^{*}$\thanks{Work done while an intern at Meta AI Research}\\
  Mila, Université de Montréal \\
  %Meta AI Research
    \And
  William L. Hamilton$^{\ddagger}$\\
  Mila, McGill University \\
  
    \And
  Simon Lacoste-Julien$^{\ddagger}$ \\
  Mila, Université de Montréal
%  \texttt{} \\
  \And
  Pascal Vincent\thanks{Canada CIFAR AI Chair} \\
  Mila, Université de Montréal\\
  Meta AI Research\\[-6mm]

%  \texttt{} \\

  \And
  Gauthier Gidel$^{\ddagger}$ \\
  Mila, Université de Montréal 
}
\begin{document}

\maketitle
\vspace{-10pt}
\begin{abstract}
Adversarial attacks expose important vulnerabilities of deep learning models, yet little attention has been paid to settings where data arrives as a stream. In this paper, we formalize the online adversarial attack problem, emphasizing two key elements found in real-world use-cases: attackers must operate under partial knowledge of the target model, and the decisions made by the attacker are irrevocable since they operate on a transient data stream. We first rigorously analyze a deterministic variant of the online threat model by drawing parallels to the well-studied $k$-secretary problem in theoretical computer science and propose \algoname, a simple yet practical online algorithm. Our main theoretical result shows \algoname \ yields provably the best competitive ratio over all single-threshold algorithms for $k<5$---extending the previous analysis of the $k$-secretary problem. We also introduce the \textit{stochastic $k$-secretary}---effectively reducing online blackbox transfer attacks to a $k$-secretary problem under noise---and prove theoretical bounds on the performance of \algoname \ adapted to this setting. Finally, we complement our theoretical results by conducting experiments on MNIST, CIFAR-10, and Imagenet classifiers, revealing the necessity of online algorithms in achieving near-optimal performance and also the rich interplay between attack strategies and online attack selection, enabling simple strategies like FGSM to outperform stronger adversaries.
\vspace{-5pt}
\end{abstract}

\section{Introduction}
\looseness=-1
\vspace{-10pt}
In adversarial attacks, an attacker seeks to maliciously disrupt the performance of deep learning systems by adding small but often imperceptible noise to otherwise clean data~\citep{szegedy2013intriguing,goodfellow2014explaining}. Critical to the study of adversarial attacks is specifying the threat model~\cite{akhtar2018threat}, which outlines the adversarial capabilities of an attacker and the level of information available in crafting attacks. Canonical examples include the \textit{whitebox} threat model~\cite{madry2017towards}, where the attacker has complete access, and the less permissive \textit{blackbox} threat model where an attacker only has partial information, like the ability to query the target model \citep{chen2017zoo,ilyas2017query,papernot2016transferability}. 

Previously studied threat models (e.g., whitebox and blackbox) implicitly assume a static setting that permits full access to instances in a target dataset at all times~\citep{tramer2017ensemble}. However, such an assumption is unrealistic in many real-world systems. Countless real-world applications involve streaming data that arrive in an online fashion (e.g., financial markets or real-time sensor networks). Understanding the feasibility of adversarial attacks in this {\em online} setting is an essential question. 

As a motivating example, consider the case where the adversary launches a man-in-the-middle attack depicted in Fig.~\ref{fig:man_in_the_middle}. Here, data is streamed between two endpoints---i.e., from sensors on an autonomous car to the actual control system. 
An adversary, in this example, would intercept the sensor data, potentially perturb it, and then send it to the controller. Unlike classical adversarial attacks, such a scenario presents two key challenges that are representative of all online settings. 

\begin{wrapfigure}{o}{0.4\textwidth}
  \vspace{-12pt}
  %\begin{center}
    %\includegraphics[width=0.48\textwidth]{birds}
    \includegraphics[width=0.38\textwidth]{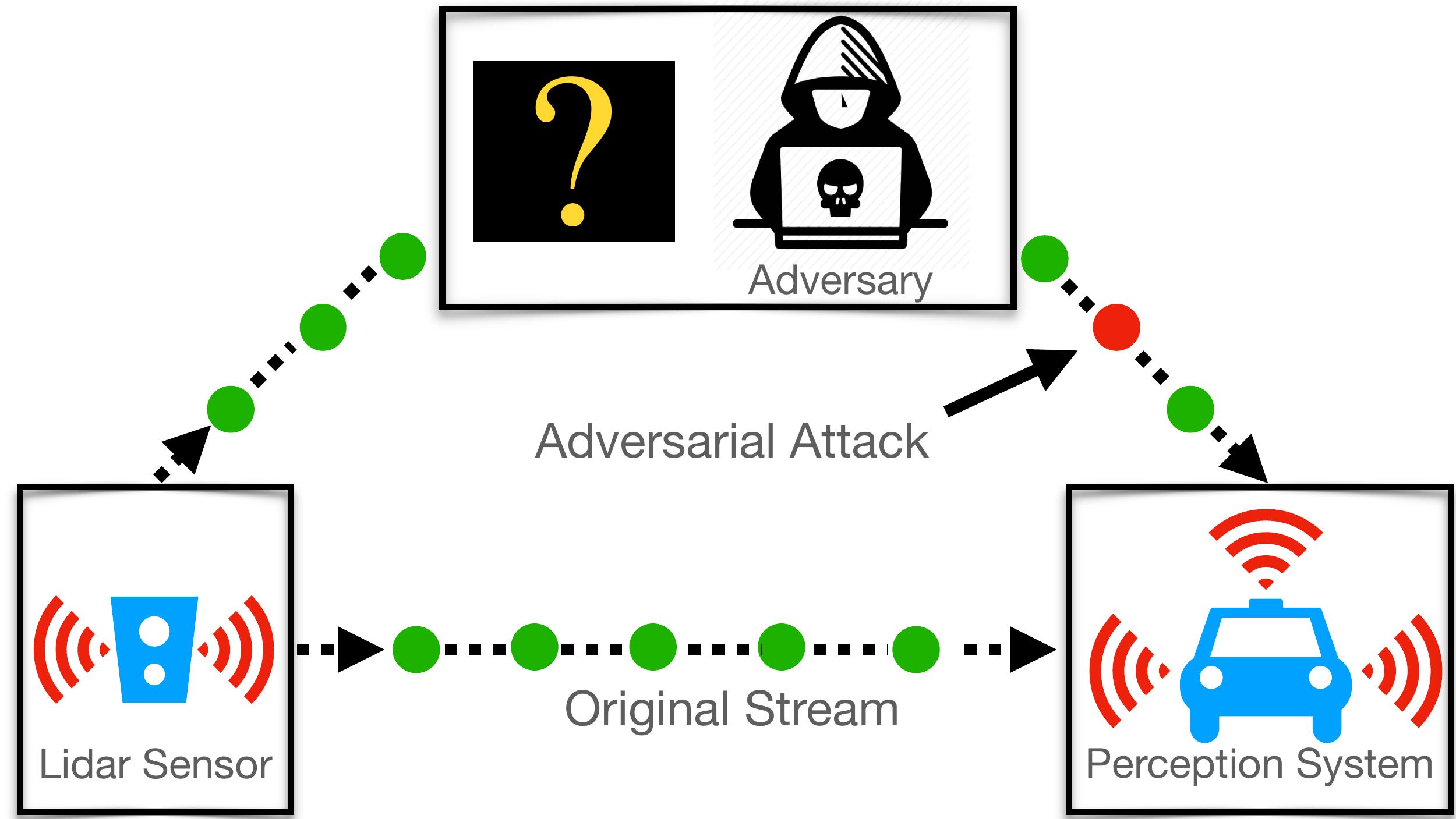}
  %\end{center}
  \vspace{-2pt}
\caption{Man-in-the-Middle Attack.}
\label{fig:man_in_the_middle}
\end{wrapfigure}

\begin{enumerate}[noitemsep,topsep=0pt,parsep=0pt,partopsep=0pt, leftmargin=*] 
    \item \textbf{Transiency:} At every time step, the attacker makes an irrevocable decision on whether to attack, and if she fails, or opts not to attack, then that datapoint is no longer available for further attacks.
    \item \textbf{Online Attack Budget:} The adversary---to remain anonymous from stateful defenses ---is restricted to a small selection budget and must optimally balance a passive exploration phase before selecting high-value items in the data stream (e.g. easiest to attack) to submit an attack on.
\end{enumerate}

To the best of our knowledge, the only existing approaches that craft adversarial examples on streaming data \citep{gong2019real,lin2017tactics,sun2020stealthy} require multiple passes through a data stream and thus cannot be applied in a realistic online setting where an adversary is forced into irrevocable decisions. Moreover, these approaches do not come with theoretical guarantees. Consequently, assessing the practicality of adversarial attacks---to better expose risks---in a truly online setting is still an open problem, and the focus of this paper.

\xhdr{Main Contributions} 
\looseness=-1
We formalize the online threat model to study adversarial attacks on streaming data. In our online threat model, the adversary must execute $k$ successful attacks within $n$  streamed data points, where $k \ll n$. As a starting point for our analysis, we study the deterministic online threat model in which the actual value of an input---i.e., the likelihood of a successful attack---is revealed along with the input. Our first insight elucidates that such a threat model, modulo the attack strategy, equates to the $k$-secretary problem known in the field of optimal stopping theory \cite{dynkin1963optimum,kleinberg2005multiple}, allowing for the application of established online algorithms for picking optimal data points to attack. We then propose a novel online algorithm \algoname\ that is both practical, simple to implement for any pair $(k,n)$, and requires no additional hyperparameters.

Besides, motivated by attacking blackbox target models, we also introduce a modified secretary problem dubbed the \textit{stochastic $k$-secretary problem}, which assumes the values an attacker observes are stochastic estimates of the actual value. We prove theoretical bounds on the competitive ratio---under mild feasibility assumptions---for \algoname in this setting. Guided by our theoretical results, we conduct a suite of experiments on both toy and standard datasets and classifiers (i.e., MNIST, CIFAR-10, and Imagenet). Our empirical investigations reveal two counter-intuitive phenomena that are unique to the online blackbox transfer attack setting: 1.) In certain cases attacking robust models may in fact be easier than non-robust models based on the distribution of values observed by an online algorithm. 2.) Simple attackers like FGSM can seemingly achieve higher online attack transfer rates than stronger PGD-attackers when paired with an online algorithm, demonstrating the importance of carefully selecting which data points to attack. We summarize our key contributions:

\begin{itemize}[noitemsep,topsep=0pt,parsep=0pt,partopsep=0pt,label={\large\textbullet},leftmargin=*]
\item We formalize the online adversarial attack threat model as an online decision problem and rigorously connect it to a generalization of the k-secretary problem.
\item We introduce and analyze \algoname, an extension of \textsc{Virtual} for the $k$-secretary problem yielding a significant practical improvement ($60\%$).  

We then provide, via novel techniques, a tractable formula for its competitive ratio, partially answering one of \citet{albers2020new}'s open questions (see footnote~\footref{foot:closed_form}) and achieving a new state-of-the-art competitive ratio for $k<5$.
\item We propose Alg.~\ref{alg:online_adv_attack} that leverages (secretary) online algorithms to perform efficient online adversarial attacks. We compare different online algorithms including \algoname\ on MNIST, CIFAR-10, and Imagenet in the challenging Non-Interactive BlackBox transfer (NoBox) setting.
\end{itemize}

\section{Background and Preliminaries}
\looseness=-1
\vspace{-10pt}

\xhdr{Classical Adversarial Attack Setup}
We are interested in constructing adversarial examples against some fixed 
target classifier $f_{t}: \mathcal{X} \to \mathcal{Y}$ which consumes input data points $x \in \mathcal{X}$ and labels them with a class label $y \in \mathcal{Y}$. The goal of an adversarial attack is then to produce an adversarial example $x' \in \mathcal{X}$, such that $f_t(x') \neq y$, and where the distance $d(x,x') \leq \gamma$. Then, equipped with a loss $\ell$ used to evaluate $f_{t}$, an attack is said to be optimal if \citep{carlini2017magnet, madry2017towards}, 
\begin{equation}\label{eq:opt_attack}
    \textstyle x' \in \argmax_{x'\in \gX}\ell(f_{t}(x'),y) \,, \;\; \text{s.t.} \;\; d(x,x') \leq \gamma \,.
\end{equation}
Note that the formulation above makes no assumptions about access and resource restrictions imposed upon the adversary. Indeed, if the parameters of $f_t$ are readily available, we arrive at the familiar whitebox setting, and problem in \eqref{eq:opt_attack} is solved by following the gradient $\nabla_x f_{t}$ that maximizes $\ell$. 

\xhdr{k-Secretary Problem}
The secretary problem is a well-known problem in theoretical computer science \cite{dynkin1963optimum,ferguson1989solved}. Suppose that we are tasked with hiring a secretary from a randomly ordered set of $n$ potential candidates to select the secretary with maximum value. The secretaries are interviewed sequentially and reveal their actual value on arrival. Thus, the decision to accept or reject a secretary must be made immediately, irrevocably, and without knowledge of future candidates. While there exist many generalizations of this problem, in this work, we consider one of the most canonical generalizations known as the \textit{$k$-secretary problem} \cite{kleinberg2005multiple}. Here, instead of choosing the best secretary, we are tasked with choosing $k$ candidates to maximize the expected sum of values. Typically, online algorithms that attempt to solve secretary problems are evaluated using the competitive ratio, which is the value of the objective achieved by an online algorithm compared to an optimal value of the objective that is achieved by an ideal ``offline algorithm,” i.e., an algorithm with access to the entire candidate set. Formally, an online algorithm $\mathcal{A}$ that selects a subset of items $S_{\mathcal{A}}$  is said to be $C$-competitive to the optimal algorithm $\textrm{OPT}$ which greedily selects a subset of items $S^*$ while having full knowledge of all $n$ items, if asymptotically in $n$
\begin{equation}
    \mathbb{E}_{\pi\sim \mathcal{S}_n}[\setvaluemath(S_{\mathcal{A}})] \geq  (C + o(1)) \setvaluemath(S^*) \,,
    \label{eq:comp_ration}
\end{equation}
where \setvalue\ is a set-value function that determines the sum utility of each algorithm's selection, and the expectations are over permutations sampled from the symmetric group of $n$ elements, $\mathcal{S}_n$, acting on the data. In \S\ref{stochastic_k_secretary}, we shall further generalize the $k$-secretary problem to its stochastic variant where the online algorithm is no longer privy to the actual values but must instead choose under uncertainty.

\section{Online Adversarial Attacks}
\looseness=-1
\vspace{-10pt}
\label{online_adversarial_attacks}
\looseness=-1
Motivated by our more realistic threat model, we now consider a novel adversarial attack setting where the data is no longer static but arrives in an online fashion.

\subsection{Adversarial Attacks as Secretary Problems}
\looseness=-1
\vspace{-5pt}
\label{adv_sec_problem}
The defining feature of the online threat model---in addition to streaming data and the fact that we may not have access to the target model $f_t$---is the online attack budget constraint.
Choosing when to attack under a fixed budget in the online setting can be related to a secretary problem. We formalize this online adversarial attack problem in the boxed online threat model below.

In the online threat model we are given a data stream $\mathcal{D}=\{(x_1,y_1),\ldots,(x_n,y_n)\}$ of $n$ samples ordered by their time of arrival. In order to craft an attack against the target model $f_t$, the adversary selects, using its online algorithm $\mathcal{A}$, a subset $S_{\mathcal{A}} \subset \mathcal{D}$ of items to maximize:
\begin{equation}\label{eq:asp}
   \setvaluemath(S_\mathcal{A}) \! := \!\!\! \sum_{ (x,y) \in S_\mathcal{A}} \ell(f_{t}(\textsc{Att}(x)),y) \ \text{ s.t. } 
   |S_A| \leq k,\hspace{-1mm} 
\end{equation}
\cut{
\begin{equation}\label{eq:asp}
   \setvaluemath(S_\mathcal{A}) \! := \!\!\! \sum_{ i \in S_\mathcal{A}}v_i \text{ s.t. }  v_i = \ell(f_{t}(x_i'),y_i) \text{ and }
   |S_A| \leq k,\hspace{-1mm} 
\end{equation}
}
where $\textsc{Att}(x)$ denotes an attack on $x$ crafted by a \emph{fixed} attack method $\textsc{Att}$ that might or might not depend on $f_t$. From now on we define $x_i'=\textsc{Att}(x_i)$. 
Intuitively, the adversary chooses $k$ instances that are the ``easiest" to attack, i.e. samples with the highest value. Note that selecting an instance to attack does not guarantee a successful attack.
Indeed, a successful attack vector may not exist if the perturbation budget $\gamma$ is too small. \cut{even though the value is maximized.} However, stating the adversarial goal as maximizing the value of $S_{\mathcal{A}}$ leads to the measurable objective of calculating the ratio of successful attacks in $S_{\mathcal{A}}$ versus $S^*$.

If the adversary knows the true value of a datapoint then the online attack problem reduces to the original $k$-secretary. On the other hand, the adversary might not have access to $f_t$, and instead, the adversary's value function may be an estimate of the true value---e.g., \ the loss of a surrogate classifier, and the adversary must make selection decisions in the face of uncertainty.\cut{ yielding a stochastic generalization of the $k$-secretary problem.} The theory developed in this paper will tackle both the case where values $v_i:=\ell(f_t(x_i'),y_i)$ for $i \in \{1,\ldots,n\} :=[n]$ are known (\S\ref{virtual_plus}), as well as the richer stochastic setting with only estimates of $v_i\,,\, i \in [n]$ (\S\ref{stochastic_k_secretary}).

\xhdr{Practicality of the Online Threat Model} It is tempting to consider whether in practice the adversary should forego the online attack budget and instead attack every instance. However, such a strategy poses several critical problems when operating in real-world online attack scenarios. Chiefly, attacking any instance in $\mathcal{D}$ incurs a non-trivial risk that the adversary is detected by a defense mechanism. Indeed, when faced with stateful defense strategies (e.g. \cite{chen2020stateful}), every additional attacked instance further increases the risk of being detected and rendering future attacks impotent. Moreover, attacking every instance may be infeasible computationally for large $n$ or impractical based on other real-world constraints. Generally speaking, as conventional adversarial attacks operate by restricting the perturbation to a fraction of the maximum possible change (e.g., $\ell_{\infty}$-attacks), online attacks analogously restrict the time window to a fraction of possible instances to attack. Similarly, knowledge of $n$ is also a factor that the adversary can easily control in practice. For example, in the autonomous control system example, the adversary can choose to be active for a short interval---e.g., when the autonomous car is at a particular geospatial location---and thus set the value for $n$.
%\vspace{-10pt}
\begin{ftheo}
The online threat model relies on the following key definitions:
\begin{itemize}[leftmargin=*, itemsep=1pt, topsep=1pt, parsep=1pt]
\item 
\textbf{The target model $f_t$}. The adversarial goal is to attack some target model $f_t : \mathcal{X} \rightarrow \mathcal{Y}$, through adversarial examples that respect a chosen distance function, $d$, with tolerance $\gamma$. %--i.e. $d(x,x') \leq \gamma$ 

\item
\textbf{The data stream $\mathcal{D}$}. The data stream $\mathcal{D}$ contains the $n$ examples $(x_i,y_i)$ ordered by their time of arrival. At any timestep $i$, the adversary receives the corresponding item in $\mathcal{D}$ and must decide whether to execute an attack or forever forego the chance to attack this item.

\item
\textbf{Online attack budget $k$}. The adversary is limited to a maximum of $k$ attempts to craft attacks within the online setting, thus imposing that each attack is on a unique item in $\mathcal{D}$.

\item
\textbf{A value function $\mathcal{V}$}. Each item in the dataset is assigned a value on arrival by the value function $\mathcal{V}: \mathcal{X} \times \mathcal{Y} \rightarrow \mathbb{R}_+$ which represents the utility of selecting the item to craft an attack. This can be the likelihood of a successful attack under $f_t$ (true value) or a stochastic estimate of the incurred loss given by a surrogate model $f_s \approx f_t$.
\end{itemize}

The online threat model corresponds to the setting where the adversary seeks to craft adversarial attacks (i) against a target model $f_t \in \gF$, (ii) by observing items in $\mathcal{D}$ that arrive online, (iii) and choosing $k$ optimal items to attack by relying on (iv) an available value function $\mathcal{V}$. The adversary's objective is then to use its value function towards selecting items in $\mathcal{D}$ that maximize the sum total value of selections \setvalue\ (Eq.~\ref{eq:asp}).
\end{ftheo}
%\vspace{-10pt}

\subsection{\algoname\ for Adversarial Secretary Problems}
\looseness=-1
\vspace{-5pt}
\label{virtual_plus}
Let us first consider the deterministic variant of the online threat model, where the true value is known on arrival. For example consider the value function $\mathcal{V}(x_i,y_i) = \ell(f_{t}(x'_i),y_i) = v_i$ i.e. the loss resulting from the adversary corrupting incoming data $x_i$ into $x'_i$. Under a fixed attack strategy, the selection of high-value items from $\mathcal{D}$ is exactly the original $k$-secretary problem and thus the adversary may employ any $\mathcal{A}$ that solves the original $k$-secretary problem.

Well-known single threshold-based algorithms that solve the $k$-secretary problem include the \textsc{Virtual}, \textsc{Optimistic} \cite{babaioff2007knapsack} and the recent \textsc{Single-Ref} algorithm \cite{albers2020new}. In a nutshell, these online algorithm consists of two phases---a \textit{sampling phase} followed by a \textit{selection phase}---and an optimal stopping point $t$ (threshold) that is used by the algorithm to transition between the phases. In the sampling phase, the algorithms passively observe all data points up to a pre-specified threshold $t$. Note that $t$ itself is algorithm-specific and can be chosen by solving a separate optimization problem. Additionally, each algorithm also maintains a sorted reference list $R$ containing the top-$k$ elements. Each algorithm then executes the selection phase through comparisons of incoming items to those in $R$ and possibly updating $R$ itself in the process (see~\S\ref{appendix:classical_online_algorithms}).

Indeed, the simple structure of both the \textsc{Virtual} and \textsc{Optimistic} algorithms---e.g., having few hyperparameters and not requiring the algorithm to involve Linear Program's for varying values of $n$ and $k$---in addition to being $(1/e)$-competitive (optimal for $k=1$) make them suitable candidates for solving \eqref{eq:asp}. However, 
the competitive ratio of both algorithms in the small $k$ regime---but not $k=1$---has shown to be sub-optimal with \textsc{Single-Ref} provably yielding larger competitive ratios at the cost of an additional hyperparameter selected via combinatorial optimization when $n \to \infty$. 

We now present a novel online algorithm, \algoname, that retains the simple structure of \textsc{Virtual} and \textsc{Optimistic}, with no extra hyperparameters, but leads to a new state-of-the-art competitive ratio for $k<5$. Our key insight is derived from re-examining the selection condition in the \textsc{Virtual} algorithm and noticing that it is overly conservative and can be simplified. The \algoname\ algorithm is presented in Algorithm 1, where the removed condition in \textsc{Virtual} (L2-3) is \st{in pink strikethrough}. Concretely, the condition that is used by \textsc{Virtual} but \emph{not} by \algoname\ updates $R$ during the selection phase without actually picking the item as part of $S_{\mathcal{A}}$. Essentially, this condition is theoretically convenient and leads to a simpler analysis by ensuring that the \textsc{Virtual} algorithm never exceeds $k$ selections in $S_{\mathcal{A}}$. \algoname\ removes this conservative $R$ update criteria in favor of a simple to implement condition, $|S_{\mathcal{A}}| \leq k$ line 4 {\color{salmon}(in pink)}. Furthermore, the new selection rule also retains the simplicity of \textsc{Virtual} leading to a painless application to online attack problems.

\begin{minipage}[t]{.49\textwidth}
\vspace{-15pt}
\begin{algorithm}[H]
\small
\textbf{Inputs:} $t\in[k\dots n-k]$, $R = \emptyset$, $S_{\mathcal{A}} = \emptyset$
\newline
\textbf{Sampling phase:} Observe the first $t$ data points and construct a sorted list $R$ with the indices of the top $k$ data points seen. The method $\texttt{sort}$ ensures: $ \mathcal{V}(R[1]) \geq \mathcal{V}(R[2]) \dots \geq \mathcal{V}(R[k]).$
\newline
\textbf{Selection phase}:{\color{salmon} \{//\textsc{Virt}+ removes L2-3 and adds L4 \}} \hspace{-4.0cm}% and adds a condition  L4\}} \hspace{-3cm}
\begin{algorithmic}[1]
\FOR{$i:=t+1$ to $n$ }
    % \newline
    % {\hspace*{-0.5cm}\color{salmon} \small \COMMENT{// \algoname removes lines 5-6 from \textsc{Virtual}}}
    \IF{$\mathcal{V}(i) \geq \mathcal{V}(R[k])$ and $R[k] > t$}
        \STATE $R$ = $\texttt{sort}(R \cup \{i\} \setminus \{R[k]\})$
        \hfill  
        % \{// Update $R$\}
        \tikzmark{start}
        \tikzmark{stop}
        \begin{tikzpicture}[remember picture, overlay]
        \draw[salmon,thick] ([xshift=-155pt,yshift=13pt]pic cs:start) -- ([xshift=-20pt, yshift=12pt]pic cs:stop);        \draw[salmon,thick] ([xshift=-155pt,yshift=2pt]pic cs:start) -- ([xshift=-20pt, yshift=2pt]pic cs:stop);
         \draw[salmon,thick] ([xshift=-155pt,yshift=-8pt]pic cs:start) -- ([xshift=-142pt, yshift=-8pt]pic cs:stop);
        \end{tikzpicture}
    %\tikzmk{A}
    \tikzmark{start1}
    \tikzmark{stop2}
    \begin{tikzpicture}[remember picture, overlay]
    \end{tikzpicture}
    \ELSIF {$\mathcal{V}(i) \geq \mathcal{V}(R[k])$ {\color{salmon} and $ |S_{\mathcal{A}}| \leq  k$} }
    \STATE $R$ = $\texttt{sort}(R \cup \{i\} \setminus \{R[k]\})$ \hfill\COMMENT{// Update $R$}
    \STATE $S_{\mathcal{A}} = S_{\mathcal{A}} \cup \{i \}$ \hfill\COMMENT{// Select element $i$}
    \ENDIF
    %\tikzmk{B}
    %\boxit{pink}
%\STATE
\ENDFOR
\end{algorithmic}
 \caption{\small \textsc{Virtual} and {\color{salmon}\algoname\,} }
 \label{alg:virtual_plus}
\end{algorithm}
\end{minipage}
\hfill
\begin{minipage}[t]{.48\textwidth}
\begin{figure}[H]
 \vspace{-15pt}
    \includegraphics[width=1.01\linewidth]{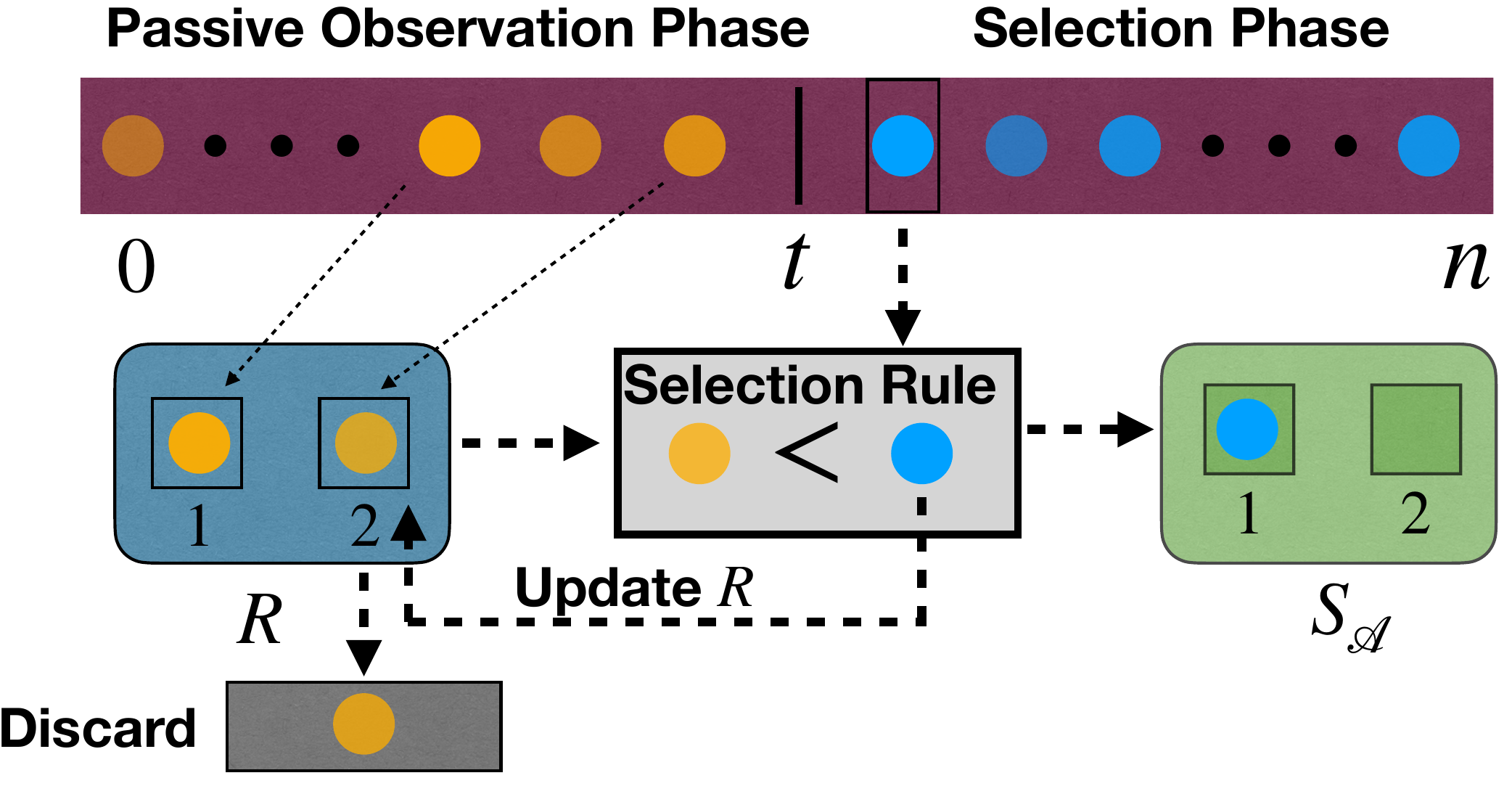}
    \vspace{-10pt}
    \caption{\algoname \ observes $v_i$ (or estimates) and maintains $R$ during the sampling phase. Items are then picked into $S_{\mathcal{A}}$, after threshold $t$, \cut{via comparisons to $R$.}}
    \label{fig:stochastic_secretary}
\end{figure}
\end{minipage}

\cut{By design, \algoname\ also does not exceed $k$ selections and works for any combination of $k$ and $n$.}
\xhdr{Competitive ratio of \algoname}
What appears to be a minor modification in \textsc{Virtual+} compared to \textsc{Virtual} leads to a significantly more involved analysis but a larger competitive ratio. In Theorem 1, we derive the analytic expression that is a tight lower bound for the competitive ratio of \algoname\ for \emph{general}-$k$. We see that \algoname\ provably improves in competitive ratio for $k<5$ over both \textsc{Virtual}, \textsc{Optimistic}, and in particular the previous best single threshold algorithm, \textsc{Single-Ref}.

\begin{theorem}
The competitive ratio of \algoname\ for $k \geq 2$ with threshold $t_k = \alpha n$ can asymptotically be lower bounded by the following concave optimization problem,
\begin{equation}
    C_k \geq  \max_{\alpha \in [0,1]} f(\alpha) :=  {\alpha}^k \sum_{m = 0}^{k - 1} a_m \ln^m (\alpha)- \alpha a_0
    \quad where  \quad
    a_m := \big(\tfrac{k^k}{(k-1)^{k-m}} - k^m\big)\frac{(-1)^{m+1}}{m!}
    \, .
\end{equation}
Particularly, we get $C_2\geq0.427, C_3\geq .457, C_4\geq.4769$ outperforming~\citet{albers2020new}.
\label{thm:general_k_theorem1}
\end{theorem}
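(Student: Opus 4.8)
The plan is to identify exactly which elements \algoname\ keeps, reduce the competitive ratio to a single acceptance probability, and then evaluate that probability asymptotically through a Poisson approximation.

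\textbf{Step 1: combinatorial characterization of \algoname.}
First I would show that, run with threshold $t\in[k,n-k]$, \algoname\ selects precisely the elements occupying the first $k$ positions after $t$ that are \emph{$k$-records} --- positions $j>t$ at which the arriving element ranks among the $k$ largest of positions $1,\dots,j$ (and all of them, if there are fewer than $k$). The reason is that, while the budget is not yet exhausted, the list $R$ always equals the top-$k$ elements seen so far: each successful update deletes $R[k]$ and inserts the newcomer, so the test $\mathcal V(i)\ge\mathcal V(R[k])$ is exactly ``$i$ is a $k$-record''; and because every accepted item costs one unit of budget, the budget is used up exactly at the $k$-th $k$-record past $t$. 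This is where \algoname\ really differs from \textsc{Virtual}: in \textsc{Virtual} the count of selections is controlled by the extra bookkeeping updates, whereas here it is dictated by the number of $k$-records, which is what makes the analysis heavier.

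\textbf{Step 2: reduction to one acceptance probability and the worst case.}
Let $e_i$ be the item of $i$-th largest value and $p_i=\Pr[e_i\text{ selected}]$. The key observation is that for $i\le k$, $e_i$ is \emph{always} a $k$-record at its own position, since only $i-1<k$ items beat it; hence $e_i$ is selected iff it arrives after $t$ and strictly fewer than $k$ $k$-records occur before it. As the number of $k$-records in $(t,\mathrm{pos}(e_i))$ depends only on the permutation and is independent of where $e_i$ sits, $p_i$ equals a common value $q=q(n,k,t)$ for every $i\le k$, \emph{regardless of the value profile}. Writing the competitive ratio as $\min_{v_1\ge\cdots\ge v_n\ge0}\tfrac{\sum_i p_i v_i}{\sum_{i\le k}v_i}$ and taking extreme points, the minimum is attained on a $0/1$ profile $(1^\ell,0^{n-\ell})$ and equals $q$ (exactly $q$ for $\ell\le k$, at least $q$ for $\ell>k$). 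So the competitive ratio of \algoname\ with threshold $t$ equals $q(n,k,t)$, and we are free to pick $t$.

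\textbf{Step 3: evaluating $q$ and the closed form.}
Here I would invoke the classical fact that the $k$-record indicators $X_j$ of a uniformly random permutation are independent with $\Pr[X_j=1]=\min(k,j)/j$, and that this persists after conditioning on the position of one fixed element. Conditioning on $\mathrm{pos}(e_i)=P$ gives $q=\tfrac1n\sum_{P=t+1}^{n}\Pr\big[\sum_{j=t+1}^{P-1}X_j\le k-1\big]$, a sum of Poisson-binomial tails. Letting $n\to\infty$ with $t=\alpha n$, $P=\beta n$, the means satisfy $\sum_{t<j<P}k/j\to k\ln(\beta/\alpha)$ while the individual probabilities vanish, so by Le Cam's inequality $\sum_{t<j<P}X_j\Rightarrow\mathrm{Poisson}(k\ln(\beta/\alpha))$ and the sum becomes
\[
 f(\alpha)=\int_\alpha^1\Pr\big[\mathrm{Poisson}(k\ln(\beta/\alpha))\le k-1\big]\,d\beta
  =\int_\alpha^1\sum_{m=0}^{k-1}\Big(\tfrac{\alpha}{\beta}\Big)^{k}\frac{\big(k\ln(\beta/\alpha)\big)^m}{m!}\,d\beta .
\]
Substituting $u=\beta/\alpha$ and then $w=\ln u$ turns each summand into $\int_0^{-\ln\alpha}w^m e^{-(k-1)w}\,dw$, i.e.\ a finite combination of $\alpha$, $\alpha^k$, and $\alpha^k\ln^m\alpha$; resumming (the inner geometric series in $k/(k-1)$ collapsing to $\tfrac{k^k}{(k-1)^{k-m}}-k^m$) produces exactly $f(\alpha)=\alpha^k\sum_{m<k}a_m\ln^m\alpha-\alpha a_0$. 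It then remains to check that $f''\le0$ on $(0,1)$ --- so the bound is indeed the stated concave program --- and to evaluate $\max_\alpha f(\alpha)$ numerically, getting $0.427,0.457,0.4769$ for $k=2,3,4$, each above the corresponding \textsc{Single-Ref} value of \citet{albers2020new}.

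\textbf{Where the difficulty lies.}
The crux is Step 3: making the passage to the Poisson/Riemann limit rigorous enough to interchange it with the sum over $P$ (the Le Cam error degrades as $\beta\to\alpha$, though the integrand stays bounded by $1$ there and those terms contribute $O(1/n)$), and then grinding the integral down to precisely the coefficients $a_m$ --- this closed-form evaluation is the novel piece on which the deterministic half of the paper rests. A lesser subtlety is making Step 1 airtight given the slightly loose budget test ``$|S_{\mathcal A}|\le k$'' in the pseudocode.
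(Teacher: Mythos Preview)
Your proposal is correct and reaches the same closed form, but the route is genuinely different from the paper's in two places.

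For the reduction (your Step 2), the paper does not argue over value profiles. It proves separately that \algoname\ is \emph{monotone} (a case analysis on swapping two adjacent-rank items across a permutation) and then invokes \citet[Lemma~3.3]{albers2020new} to write the competitive ratio as $\tfrac1k\sum_{a\le k}\Pr[i_a\in S_{\mathcal A}]$. Both arrive at the common acceptance probability $q$, but your extreme-point argument is more direct and sidesteps the monotonicity lemma entirely.

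For the evaluation (your Step 3), the paper neither uses the independence of $k$-record indicators nor a Poisson limit. It computes $\Pr[|S_{\mathcal A}|=\nu\text{ at time }j{+}1]$ by explicit combinatorics---placing the top-$k$ elements relative to position $j$ and tracking where the $\nu$ selections can fall---obtaining the exact nested-sum expression
\[
\frac{t(t-1)\cdots(t-k+1)}{j(j-1)\cdots(j-k+1)}\sum_{\nu=0}^{k-1}k^{\nu}\!\!\sum_{t<p_1<\cdots<p_\nu\le j}\prod_{l}\frac{1}{p_l-k}.
\]
These sums are then lower-bounded by nested integrals via a dedicated comparison lemma for decreasing positive summands, and the integrals are collapsed using the antiderivative $\int \tfrac{\ln^a x}{a!\,x^k}\,dx=-x^{1-k}\sum_{m\le a}\tfrac{(k-1)^{m-1-a}}{m!}\ln^m x$. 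The two computations match: the paper's nested sums are precisely the Poisson-binomial tails you write (e.g.\ the $\nu=0$ term $\prod_{t<j'\le j}(1-k/j')$ telescopes to the leading ratio), and its sum-to-integral step is your Riemann/Le~Cam passage. What each buys: your record-indicator/Poisson argument is cleaner and explains structurally why the answer is a Poisson-tail integral; the paper's route yields exact finite-$n$ identities before the limit and avoids the uniformity issue you flag near $\beta\to\alpha$, since the integral bound holds termwise.
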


\xhdr{Connection to Prior Work}
\label{connection_to_prior_work}
The full proof for Theorem~\ref{thm:general_k_theorem1} can be found in~\S\ref{app:general_k_proof} along with a simple but illustrative proof for $k=2$ in~\S\ref{appendix:virtual_plus_proof_k_2}.
Theorem~\ref{thm:general_k_theorem1} gives a tractable way to compute the competitive ratio of \algoname\ for any $k$, that improve the previous state-of-the-art~\citep{albers2020new} in terms of single threshold $k$-secretary algorithms for $k<5$ and $k>100$.\footnote{\label{foot:closed_form}\citet{albers2020new} only provide competitive ratios of \textsc{Single-Ref} for $k \leq 100$ and conclude that ``a closed formula for the competitive ratio for any value
of $k$ is one direction of future work''. We partially answer this open question by expressing \algoname's optimal threshold $t_k$ as the solution of a uni-dimensional concave optimization problem. In Table~\ref{tab:C_k}, we provide this threshold for a wide range of $k \geq 100$.} 
However, it is also important to contextualize \algoname\ against recent theoretical advances in this space. 
Most prominently, \citet{buchbinder2014secretary} proved that the $k$-secretary problem can be solved {\em optimally} (in terms of competitive ratio) using linear programs (LPs), {\em assuming a fixed length of $n$}. But these optimal algorithms are typically not feasible in practice.
Critically, they require individually tuning multiple thresholds by solving a separate LP with $\Omega(n k^2)$ parameters for each length of the data stream $n$, and the number of constraints grows to infinity as  $n\rightarrow\infty$. \citet{chan2014revealing} showed that optimal algorithms with $k^2$ thresholds could be obtained using infinite LPs and derived an optimal algorithm for $k=2$ Nevertheless they require a large number of parameters and the scalability of infinite LPs for $k>2$ remains uncertain. 
In this work, we focus on practical methods with a \emph{single} threshold (i.e., with $O(1)$ parameters, e.g. Algorithm~\ref{alg:virtual_plus}) that do not require involved computations that grow with $n$.

\xhdr{Open Questions for Single Threshold Secretary Algorithms} \citet{albers2020new} proposed new non-asymptotic results on the $k$-secretary problem that outperform asymptotically optimal algorithms---opening a new range of open questions for the $k$-secretary problem. While this problem is considered solved when working with probabilistic algorithms\footnote{At each timestep a deterministic algorithm chooses a candidate according to a deterministic rule depending on some parameters (usually a threshold and potentially a rank to compare with). A probabilistic algorithm choose to accept a candidate according to $q_{i,j,l}$ the probability of accepting the candidate in $i$-th position as the $j^{th}$ accepted candidate given that the candidate is the $l$-th best candidate among the $i$ first candidates $(i \in [n],\, j,l \in [K]$.) See~\citep{buchbinder2014secretary} for more details on probabilistic secretary algorithms. } with $\Theta(nK^2)$ parameters~\citep{buchbinder2014secretary}, finding optimal non-asymptotic single-threshold ($O(1)$ parameters) algorithms is still an open question. As a step towards answering this question, our work proposes a practical algorithm that improves upon~\citet{albers2020new} for $k=2,\ldots,4$ with an optimal threshold that can be computed easily as it has a closed form.   
\vspace{-10pt}
\section{Stochastic Secretary Problem}
\vspace{-10pt}
\looseness=-1
\label{stochastic_k_secretary}
\cut{
\begin{figure}
 %\vspace{-5pt}
    \centering
    \includegraphics[width=0.48\linewidth]{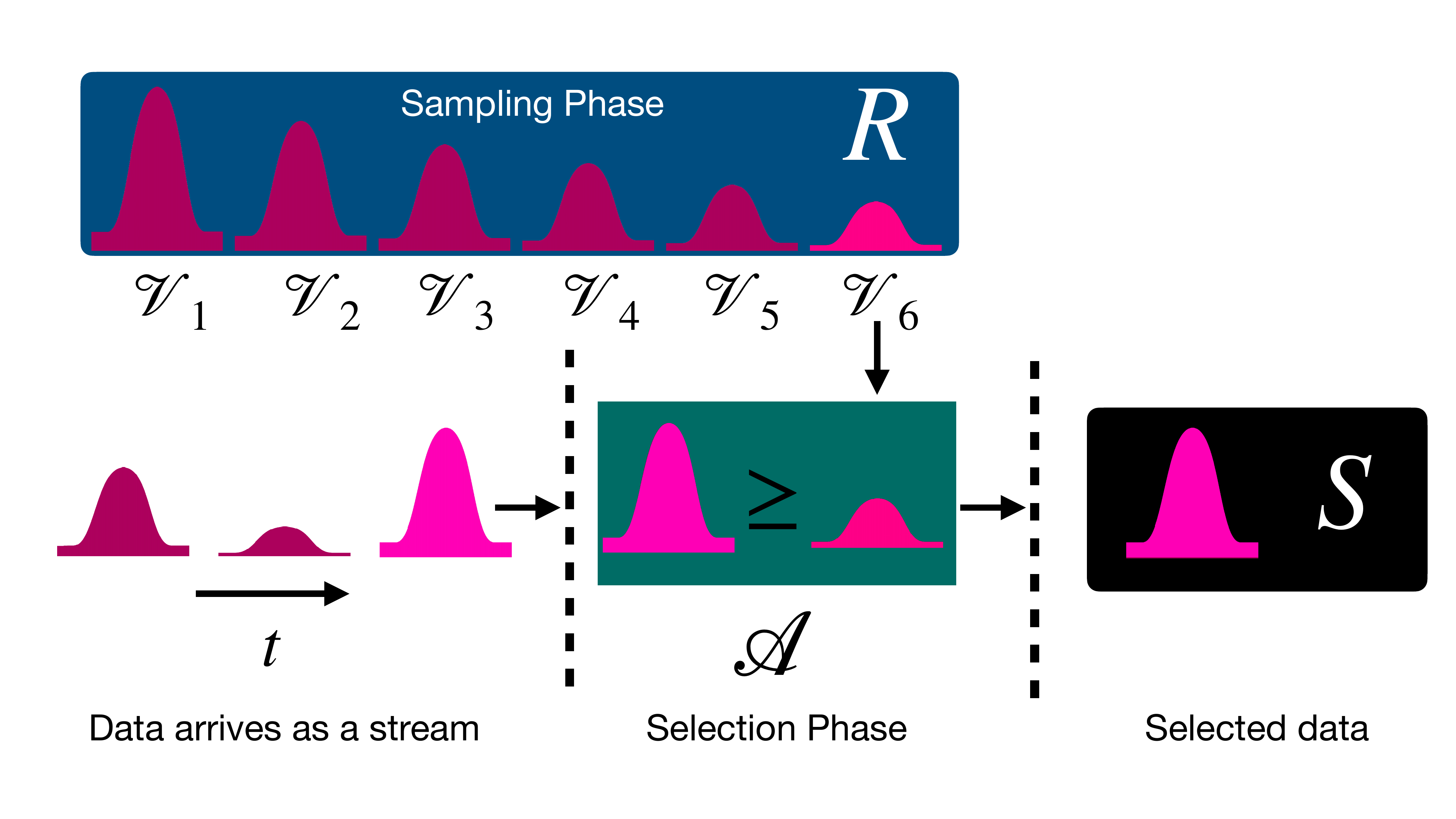}
    \includegraphics[width=0.48\linewidth]{Figures/Virtual+_algo.pdf}
    %\vspace{-10pt}
    \caption{Each online algorithm, $\mathcal{A}$, observes estimates of $v_i$ and maintains a reference list $R$ during the sampling phase. Items are then picked into $S_{\mathcal{A}}$, after threshold $t$, via comparisons to $R$.}
    \label{fig:stochastic_secretary}
\end{figure}
}
In practice, online adversaries are unlikely to have access to the target model $f_t$.
Instead, it is reasonable to assume that they have partial knowledge. 

Following \cite{papernot2016practical,bose2020adversarial} we focus on modeling that partial knowledge by equipping the adversary with a surrogate model or representative classifier $f_s$. Using $f_s$ as opposed to $f_t$ means that we can compute the value $\mathcal{V}_i:= \ell(f_s(x_i'),y_i)$ of an incoming data point. This value $\mathcal{V}_i$ acts as an estimate of the value of interest $v_i:=\ell(f_t(x_i'),y_i)$. The {\em stochastic $k$-secretary problem} is then to pick, under the noise model induced by using $f_s$, the optimal subset $S_{\mathcal{A}}$ of size $k$ from $\mathcal{D}$. 
Thus, with no further assumptions on $f_s$ it is unclear whether online algorithms, as defined in ~\S\ref{virtual_plus}, are still serviceable under uncertainty.

\xhdr{Sources of randomness}
Our method relies on the idea that we can use the surrogate model $f_s$ to estimate the value of some adversarial examples on the target model $f_t$. We justify here how partial knowledge on $f_t$ could provide us an estimate of $v_i$.
For example, we may know the general architecture and training procedure of $f_t$, but there will be inherent randomness in the optimization (e.g., due to initialization or data sampling), making it impossible to perfectly replicate $f_t$.

Moreover, it has been observed that, in practice, adversarial examples \emph{transfer} across models~\citep{papernot2016transferability, tramer2017space}.
In that context, it is reasonable to assume that the random variable $\mathcal{V}_i := \ell(f_s(x'_i),y_i)$ is likely to be close to $v_i := \ell(f_t(x'_i),y_i)$. 
We formalize this idea in Assumption~\ref{assump:feas}

\subsection{Stochastic Secretary Algorithms}
\vspace{-5pt}
\label{stochastic_secretary_algorithms_section}
In the stochastic $k$-secretary problem, we assume access to random variables $\mathcal{V}_i$ and that $v_i$ are fixed for $i = 1, \ldots,n$ and the goal is to maximize a notion of stochastic competitive ratio. This notion is similar to the standard competitive ratio defined in~\eqref{eq:comp_ration} with a minor difference that in the stochastic case, the algorithm does not have access to the values $v_i$ but to $\mathcal{V}_i$ that is an estimate of $v_i$. An algorithm is said to be $C_s$-competitive in the stochastic setting if asymptotically in $n$,
\begin{equation*}
    \mathbb{E}_{\pi \sim \mathcal{S}_n}[\setvaluemath(S_{\mathcal{A}})] \geq ( C_s + o(1)) \setvaluemath(S^*) \,.
    \label{eq:sto_comp_ration}
\end{equation*}
Here the expectation is taken over $\mathcal{S}_n$ (uniformly random permutations of the datastream $\mathcal{D}$ of size $n$) and over the randomness of $\mathcal{V}_i\,,\,i=1,\ldots,n$. $S_{\mathcal{A}}$ and $S^*$ are the set of items chosen by the stochastic online and offline algorithms respectively (note that while the online algorithm has access to $\mathcal{V}_i$, the offline algorithm picks the best $v_i$) and \setvalue\ is a set-value function as defined previously. %that 

\xhdr{Analysis of algorithms}
In the stochastic setting, all online algorithms observe $\mathcal{V}_i$ that is an estimate of the actual value $v_i$. Since the goal of the algorithm is to select the $k$-largest values by only observing random variables $(\mathcal{V}_i)$ it is requisite to make a feasibility assumption on the relationship between values $v_i$ and $\mathcal{V}_i$.\cut{ as otherwise the question of selecting top-$k$, from a theoretical perspective, is ill-posed.} Let us denote $\topk\{v_i\}$ as the set of top-$k$ values among $(v_i)$.
\begin{assumption}[Feasibility]\label{assump:feas}
 $\exists \gamma>0$ such that $\sP[\mathcal{V}_{i} \in \topk\{\mathcal{V}_i\}\,|\,v_i \in \topk\{v_i\}] \geq \gamma,\,\forall n\geq 0$.\footnote{Note that, for the sake of simplicity, the constant $\gamma$ is assumed to be independent of $n$ but a similar non-asymptotic analysis could be performed by considering a non-asymptotic definition of the competitive ratio.}
\end{assumption}
Assumption \ref{assump:feas} is a feasibility assumption as if the ordering of $(\mathcal{V}_{i})$ does not correspond at all with the ordering of $(v_i)$ then there is no hope any algorithm---online or an offline oracle---would perform better than random when picking $k$ largest $v_i$ by only observing $(\mathcal{V}_{i})$. In the context of adversarial attacks, such an assumption is quite reasonable as in practice there is strong empirical evidence between the transfer of adversarial examples between surrogate and target models (see~\S\ref{app:empirical_quantification_assumption_one}, for the empirical caliber of assumption \ref{assump:feas}). We can bound the competitive ratio in the stochastic setting.

\begin{theorem}\label{thm:stochastic_secretary} 
Let us assume that \algoname\ observes independent random variables $\mathcal{V}_{i}$ following Assumption~\ref{assump:feas}. Its stochastic competitive ratio $C_{s}$ can be bounded as follows,
\begin{equation}
  C \geq C_{s} \geq \gamma C 
\end{equation}
\end{theorem}
The proof of Thm.~\ref{thm:stochastic_secretary} can be found in \S\ref{appendix:proof_thm2}. Such a theoretical result is quite interesting as the stochastic setting initially appears significantly more challenging due to the non-zero probability that the observed ordering of historical values, $\mathcal{V}_i$, not being faithful to the true ranking based on $v_i$.

\subsection{Results on Synthetic Data}
\vspace{-5pt}
\looseness=-1
\label{results_on_synthetic_data}
We assess the performance of classical single threshold online algorithms and \algoname\ in solving the stochastic $k$-secretary problem on a synthetic dataset of size $n=100$ with $k \in [1,10]$. The value of a data point is its index in $\mathcal{D}$ prior to applying any permutation $\pi \sim \mathcal{S}_n$ plus noise $\mathcal{N}(0, \sigma^2)$. We compute and plot the competitive ratio over $10k$ unique permutations of each algorithm in Figure \ref{fig:synthetic_data}. 

\begin{minipage}[t]{.48\textwidth}
\begin{figure}[H]
     \vspace{-5pt}
    \includegraphics[width=1.01\linewidth]{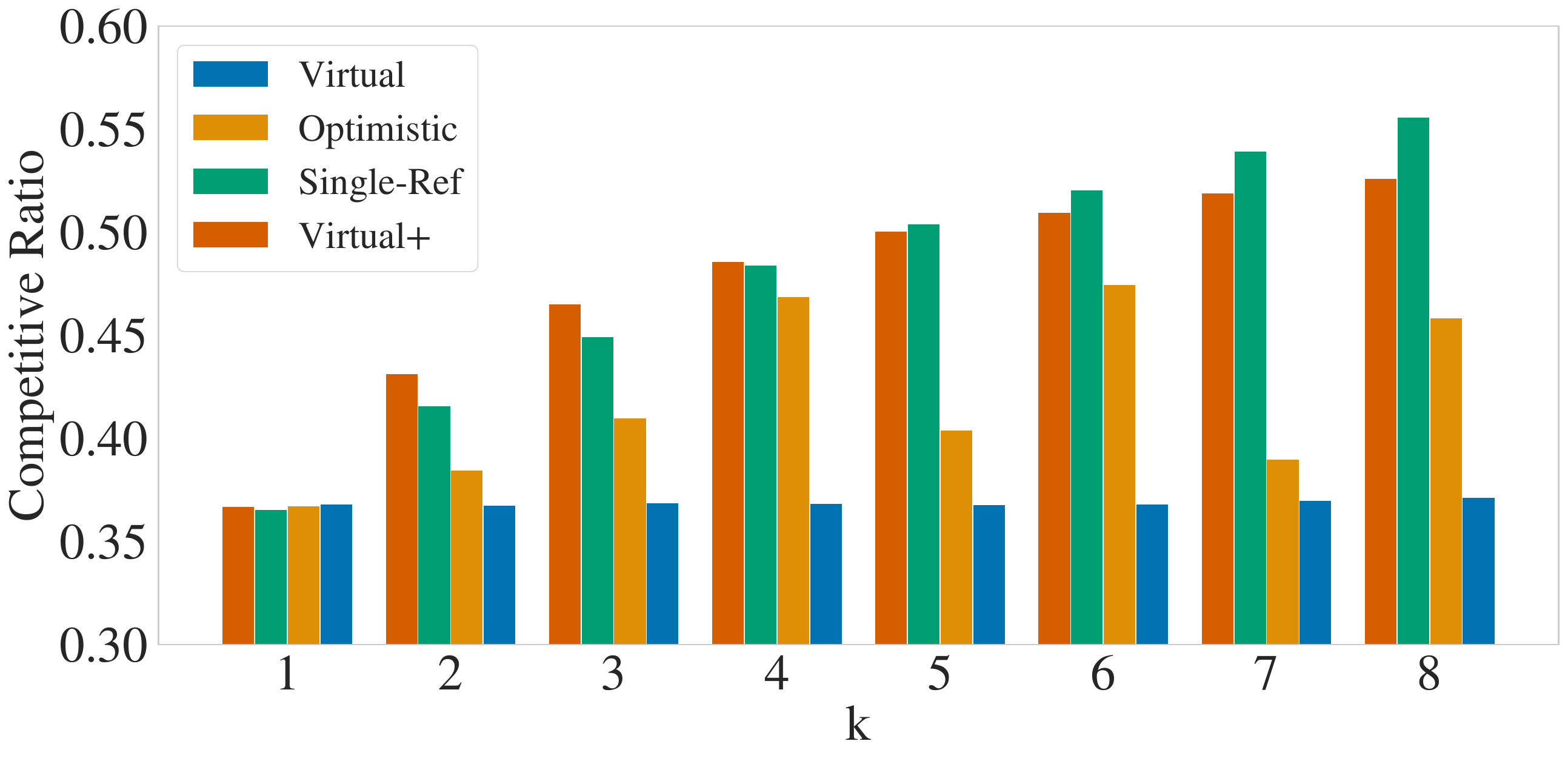}
    \vspace{-15pt}
    \caption{Estimation of the competitive ratio of online algorithms in the stochastic $k$-secretary problem with $\sigma^2=10$.}
    \label{fig:synthetic_data}
\end{figure}
\end{minipage}
\hfill
\begin{minipage}[t]{.48\textwidth}
\vspace{-5pt}  
    \begin{algorithm}[H]
    \small
    \textbf{Inputs:} Permuted Datastream: $\mathcal{D}_\pi$, Online Algorithm: $\mathcal{A}$,  Surrogate classifier: $f_s$, Target classifier: $f_t$,  Attack method: \textsc{Att}, Loss: $\ell$,  Budget: $k$,  
    Online Fool rate: $F^{\mathcal{A}}_\pi=0$.
    \begin{algorithmic}[1]
    \FOR{$(x_i,y_i)$ in $\mathcal{D}_\pi$}
    \STATE $x_i' \leftarrow  \textsc{Att}(x_i)$ \hfill \COMMENT{// Compute the attack}
    \STATE $\mathcal{V}_i \leftarrow \ell(f_s(x_i'),y_i)$  \hfill \COMMENT{// Estimate $v_i$}
    \IF{$\mathcal{A}(\mathcal{V}_1,\ldots, \mathcal{V}_i,k) == \textsc{True}$}
    \STATE  $F^{\mathcal{A}}_\pi \leftarrow F^{\mathcal{A}}_\pi+\tfrac{\mathbf{1}\{f_t(x_i')\neq y_i\}}{k}$  \hfill\COMMENT{// Submit  $x_i'$} 
    \ENDIF
    \ENDFOR
    \STATE \textbf{return:} $F^{\mathcal{A}}_\pi$ \hfill\COMMENT{//$\mathcal{A}$ always submits $k$ attacks} 
    \end{algorithmic}
     \caption{\small Online Adversarial Attack}
     \label{alg:online_adv_attack}
    \end{algorithm}
\end{minipage}

As illustrated for $k=1$ all algorithms achieve the optimal $(1/e)$-deterministic competitive ratio in the stochastic setting. Note that the noise level, $\sigma^2$, appears to have a small impact on the performance of the algorithms (\S\ref{appendix:synthetic_additional_results}). 
This substantiates our result in Thm.~\ref{thm:stochastic_secretary} indicating that $C_n$-competitive algorithms only degrade by a small factor in the stochastic setting. For $k<5$, \algoname\ achieves the best competitive ratio---empirically validating Thm \ref{thm:general_k_theorem1}---after which \textsc{Single-Ref} is superior.

\section{Experiments}
\looseness=-1
\vspace{-10pt}
\label{section:experiments}
We investigate the feasibility of online adversarial attacks by considering an online version of the challenging NoBox setting \citep{bose2020adversarial} in which the adversary must generate attacks without any access, including queries, to the target model $f_t$. Instead, the adversary only has access to a surrogate $f_s$ which is similar to $f_t$. In particular, we pick at random a $f_t$ and $f_s$ from an ensemble of pre-trained models from various canonical architectures. We perform experiments on the MNIST \cite{lecun-mnisthandwrittendigit-2010} and CIFAR-10 \cite{krizhevsky2009learning} datasets where we simulate a $\mathcal{D}$ by generating $1000$ permutations of the test set and feeding each instantiation to Alg.~\ref{alg:online_adv_attack}. In practice, online adversaries compute the value 
%$\mathcal{V}_i$ 
$\mathcal{V}_i=\ell(f_s(x'_i),y_i)$ 
of each data point in $\mathcal{D}$ by attacking $f_s$ using their fixed attack strategy
(where $\ell$ is the cross-entropy),
but the decision to submit the attack to $f_t$ is done using an online algorithm~$\mathcal{A}$ (see Alg.~\ref{alg:online_adv_attack}). 
As representative attack strategies, we use the well-known FGSM attack \citep{goodfellow2014explaining} and a universal whitebox attack in PGD \citep{madry2017towards}. 
We are most interested in evaluating the online fool rate, which is simply the ratio of successfully executed attacks against $f_t$ out of a possible of $k$ attacks selected by $\mathcal{A}$. \cut{Attacks are conducted with respect to the $\ell_\infty$- norm with $\gamma = 0.3$ for MNIST and $\gamma = 0.03125$ for CIFAR-10.} The architectures used for $f_s$, $f_t$, and additional metrics (e.g. competitive ratios) can be found in \S\ref{appendix:additional_results} \footnote{Code can be found at: \texttt{https://github.com/facebookresearch/OnlineAttacks}}.

\cut{
\begin{minipage}[t]{.5\linewidth}
\vspace{0pt}  
    \begin{algorithm}[H]
    \small
    \textbf{Inputs:} Permuted Datastream:$\mathcal{D}_\pi$, \, Online Algorithm:$\mathcal{A}$, \, Surrogate classifier:$f_s$,\, Target classifier:$f_t$,\, Attack method:$\textsc{Att}$,\, Loss:$\ell$, \, Budget:$k$,\, 
    Fool rate: $F^{\mathcal{A}}_\pi=0$.
    \begin{algorithmic}[1]
    \FOR{$(x_i,y_i)$ in $\mathcal{D}_\pi$}
    \STATE $x_i' \leftarrow  \textsc{Att}(x_i)$ \hfill \COMMENT{// Compute the attack}
    \STATE $\mathcal{V}_i \leftarrow \ell(f_s(x_i'),y_i)$  \hfill \COMMENT{// Compute the estimate of $v_i$}
    \IF{$\mathcal{A}(\mathcal{V}_1,\ldots, \mathcal{V}_i,k) == \textsc{True}$}
    \STATE  $F^{\mathcal{A}}_\pi \leftarrow F^{\mathcal{A}}_\pi+\tfrac{\mathbf{1}\{f_t(x_i')\neq y_i\}}{k}$  \hfill\COMMENT{// Submit  $x_i'$ on $f_t$} 
    \ENDIF
    \ENDFOR
    \STATE \textbf{return:} $F^{\mathcal{A}}_\pi$ \hfill\COMMENT{// Note that $\mathcal{A}$ always submits $k$ attacks} 
    \end{algorithmic}
     \caption{\small Online Adversarial Attack}
     \label{alg:online_adv_attack}
    \end{algorithm}
\end{minipage}
}

\xhdr{Baselines} We  rely  on  two  main baselines, first we use a \textsc{naive} baseline--a lower bound--where the data points are picked uniformly at random, and an upper bound with the \textsc{OPT} baseline where attacks, while crafted using $f_s$, are submitted by using the true value $v_i$ and thus utilizing $f_t$.% 
\setlength{\textfloatsep}{5pt}
\begin{figure*}[ht!]
    \centering
    \vspace{-10pt}
    \includegraphics[width=.9\linewidth]{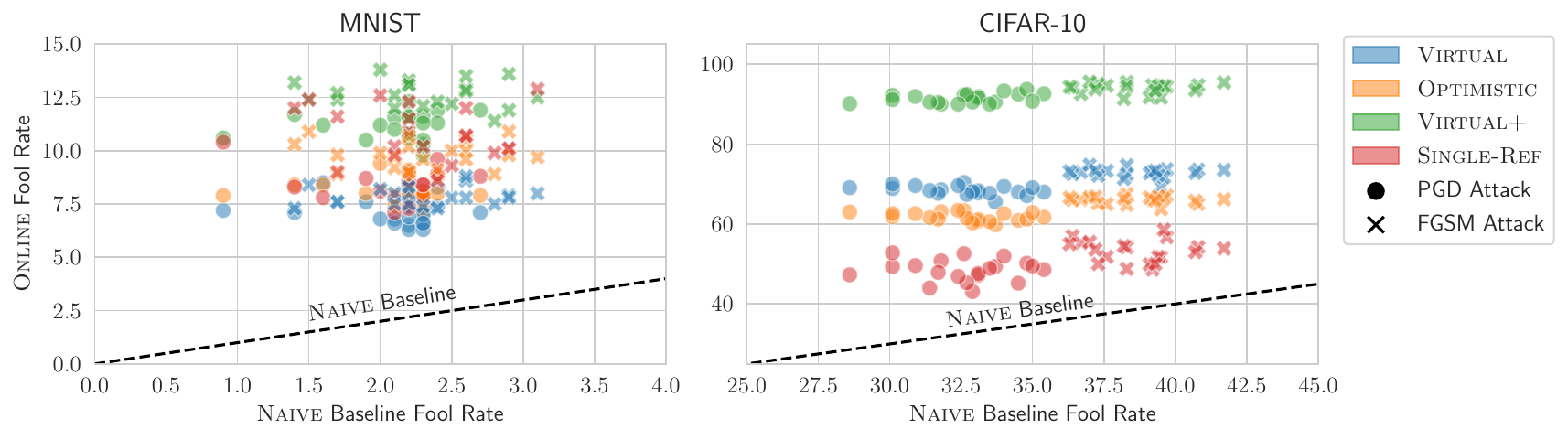}
    \vspace{-10pt}
    \caption{ \small
    Plot of online fool rates for $k=1000$ against PGD-robust models using different online algorithms $\mathcal{A}$, attacks, datasets, and $20$ different permutations. For a given $x$-coordinate, a higher $y$-coordinate is better.} %The line $y=x$ corresponds to the \textsc{Naive} baseline.}
    \label{fig:scatter}
    %\vspace{-5pt}
\end{figure*}

\xhdr{Q1: Utility of using an online algorithm}
We first investigate the utility of using an online algorithm, $\mathcal{A}$, in selecting data points to attack in comparison to the \textsc{Naive} baseline. For a given permutation $\pi$ and an attack method (FGSM or PGD), we compute the online fool rate of the \textsc{Naive} baseline and an $\mathcal{A}$ as $F^{\textsc{Naive}}_\pi$, $F^{\mathcal{A}}_\pi$ respectively. In Fig.~\ref{fig:scatter}, we uniformly sample $20$ permutations $\pi_i \sim \mathcal{S}_n, \, i \in [n],$ of $\mathcal{D}$ and plot a scatter graph of points with coordinates $(F^{\textsc{Naive}}_{\pi_i}, F^{\mathcal{A}}_{\pi_i})$, for different $\mathcal{A}$'s, attacks with $k=1000$,\cut{\footnote{\small \url{https://github.com/MadryLab/[x]_challenge}, for $\texttt{[x]} \in \texttt{\{MNIST, CIFAR10 \}}$ .}} and datasets. The line $y=x$ corresponds to the \textsc{Naive} baseline performance ---i.e. coordinates $(F^{\textsc{Naive}}_\pi, F^{\textsc{Naive}}_\pi)$---and each point above that line corresponds to an $\mathcal{A}$ that outperforms the baseline on a given $\pi_i$. As observed, all $\mathcal{A}$'s significantly outperform the \textsc{Naive} baseline with an average aggregate improvement of 7.5\% and 34.1\% on MNIST and CIFAR-10.

\begin{table*}[ht]
%\vspace{-10pt}
%\begin{small}
%\footnotesize
\scriptsize
% achievable given by \textsc{Opt} in \textbf{bold}.}
%\vspace{-10pt}
\label{table:non_robust_table1}
 \begin{center}\begin{tabular}{ c c c c c c c c c c c}
 \toprule
 & & \multicolumn{3}{c}{MNIST (Online fool rate in \%)} & \multicolumn{3}{c}{CIFAR-10  (Online fool rate in \%)} & \multicolumn{3}{c}{Imagenet  (Online fool rate in \%)}\\
%  \cmidrule{3-8}
 & Algorithm & \!$k=10$\! & \!$k=10^2$\! & \!$k=10^3$\! & \!$k=10$\! & \!$k=10^2$\! & \!$k=10^3$\! & \!$k=10$\! & \!$k=10^2$\! & \!$k=10^3$\!  \\
 \midrule
 \multirow{9}{*}{\rotatebox[origin=c]{90}{FGSM}}
 %\multirow{6}{*}{FGSM}
 & \textsc{Naive}& 64.1 & 47.8 & 45.7  & 60.7 & 59.2 & 59.2 & 66.0 & 66.3 & 65.0\\
 & \textsc{Opt}  & \textbf{87.0} & \textbf{84.7 } &  \textbf{83.6 } & \textbf{86.6} & \textbf{87.3} &  \textbf{86.5} & \textbf{98.7} & \textbf{95.3} & \textbf{96.2} \\
 \cmidrule{2-11}
 & \textsc{Optimistic} & \cellcolor{g3}79.0 & \cellcolor{g3}77.6 &\cellcolor{g3} 75.3 &\cellcolor{g4} 75.3 & \cellcolor{g4} 72.8&\cellcolor{g4} 71.9 &  \cellcolor{g4} 86.0 &  \cellcolor{g4} 80.4 & \cellcolor{g4} 79.9 \\
 & \textsc{Virtual} & \cellcolor{g3}78.6 &\cellcolor{g3} 79.1  &\cellcolor{g3} 77.4 & \cellcolor{g4} 76.1 &\cellcolor{g4} 77.1  & \cellcolor{g4}75.4 &  \cellcolor{g4} 85.3 &  \cellcolor{g4} 84.9 & \cellcolor{g4} 84.3\\
 & \textsc{Single-Ref} &\cellcolor{g2}85.1 & \cellcolor{g1}83.0$^*$ &\cellcolor{g4} 72.3 &\cellcolor{g3} 80.4 &\cellcolor{g2} 84.0 & \cellcolor{g4}66.0 &  \cellcolor{g2}  94.0$^*$ &  \cellcolor{g2} 92.4$^*$ & \cellcolor{g4} 72.5 \\
 & \algoname & \cellcolor{g3}80.4 &\cellcolor{g1} 82.5$^*$ & \cellcolor{g1} 82.9 &\cellcolor{g2} 82.9  & \cellcolor{g1}86.3 &  \cellcolor{g1}85.2 &  \cellcolor{g1} 96.0$^*$ &  \cellcolor{g1} 95.0$^*$ & \cellcolor{g1} 95.8\\
 \midrule
 %\multirow{6}{*}{PGD}
  \multirow{9}{*}{\rotatebox[origin=c]{90}{PGD}}
 & \textsc{Naive} & 69.7  & 67.2 & 67.9 & 72.5 & 70.4 & 68.6 & 72.5 & 72.5 & 73.8 \\
 & \textsc{Opt} & \textbf{73.6 } & \textbf{49.8 } & \textbf{49.6} & \textbf{83.7 } & \textbf{80.6 } & \textbf{79.9} & \textbf{82.5} & \textbf{80.2} & \textbf{76.8}\\
 \cmidrule{2-11}
 %\bottomrule
 & \textsc{Optimistic} & \cellcolor{g3} 66.2 & \cellcolor{g2} 48.2  &\cellcolor{g3} 45.1 &\cellcolor{g3} 79.1 & \cellcolor{g3}76.6 & \cellcolor{g3}76.0 & \cellcolor{g1} 87.5$^*$ & \cellcolor{g1} 78.0$^*$ &\cellcolor{g1} 74.5$^*$\\
 & \textsc{Virtual} &\cellcolor{g4} 63.4 & \cellcolor{g4}46.2 &\cellcolor{g2} 46.8 & \cellcolor{g3}78.3 &\cellcolor{g3} 77.5 &\cellcolor{g3} 76.9 & \cellcolor{g2} 80.0$^*$ & \cellcolor{g3} 74.0$^*$ &\cellcolor{g1} 75.6$^*$ \\
 &\textsc{Single-Ref} & \cellcolor{g1} 71.5 & \cellcolor{g1} 49.7$^*$ & \cellcolor{g4}42.9 & \cellcolor{g2}80.2$^*$ &\cellcolor{g1} 79.6$^*$ & \cellcolor{g4} 74.5 & \cellcolor{g3} 77.5$^*$ & \cellcolor{g1} 79.5$^*$ & \cellcolor{g1}75.2$^*$ \\
 & \algoname         &\cellcolor{g2} 68.2 & \cellcolor{g1}49.3$^*$ & \cellcolor{g1} 49.7 & \cellcolor{g1} 81.2$^*$ & \cellcolor{g1} 80.1$^*$ &\cellcolor{g1}79.5 & \cellcolor{g3} 77.5$^*$ & \cellcolor{g1} 79.0$^*$ & \cellcolor{g1}76.4$^*$ \\
 \bottomrule
\end{tabular}\end{center}
\vspace{-10pt}
\caption{\small
Online fool rate of various online algorithms on non-robust models. For a given attack and value of $k$: {\color{g1} $\mathbf{\bullet}$ } at least 97\%,
\textbf{\color{g2} $\mathbf{\bullet}$} at least 95\%, \textbf{\color{g3}$\mathbf{\bullet}$} at least 90\%, \textbf{\color{g4} $\mathbf{\bullet}$} less than 90\% of the optimal performance. $^*$ indicates when there is several best methods with overlapping error bars. Detailed results with error bars can be found in~\S\ref{appendix:detailed_non_robust_results}.}
%\end{small}
%\vspace{-5mm}
\end{table*}

\xhdr{Q2: Online Attacks on Non-Robust Classifiers}
\cut{
While previous work on online algorithms focused largely on the theoretical contribution, we are equally interested in their performance in practical applications, where the assumptions of the theory may not necessarily hold anymore.}We now conduct experiments on non-robust MNIST, CIFAR-10, and Imagenet classifiers. We report the average performance of all online algorithms, and the optimal offline algorithm \textsc{Opt} in Tab.~\ref{table:non_robust_table1}. For MNIST, we find that the two best online algorithms are \textsc{Single-Ref} and our proposed \algoname\ which approach the upper bound provided by \textsc{Opt}. For experiments with $k<5$ please see~\S\ref{app:additional_results_small_k}. For $k=10$ and $k=100$, \textsc{Single-Ref} is slightly superior while for $k=1000$ \algoname\ is the best method with an average relative improvement of $15.3\%$. This is unsurprising as \algoname\ does not have any additional hyperparameters unlike \textsc{Single-Ref} which appears more sensitive to the choice of optimal thresholds and reference ranks, both of which are unknown beyond $k=100$ and non-trivial to find in closed form (see \S\ref{appendix:additional_results_larger_datasets} for details). On CIFAR-10, we observe that \algoname\ is the best approach regardless of attack strategy and the online attack budget $k$. Finally, for ImageNet we find that all online algorithms improve over the \textsc{Naive} baseline and approach saturation to the optimal offline algorithm, and as a result, all algorithms are equally performant---i.e. within error bars (see~\S\ref{appendix:detailed_non_robust_results} for more details). A notable observation is that even conventional whitebox adversaries like FGSM and PGD become strong blackbox transfer attack strategies when using an appropriate $\mathcal{A}$.%, highlighting the importance of carefully choosing data points to attack in the online setting.  

\begin{table*}[ht]
%\vspace{-10pt}
\footnotesize
\label{table:madry_challenge}
 \begin{center}\begin{tabular}{ c c c c c c c c}
 \toprule
 & & \multicolumn{3}{c}{MNIST (Online fool rate in \%)} & \multicolumn{3}{c}{CIFAR-10 (Online fool rate in \%)}\\
%   \cmidrule{3-8}
 & Algorithm & $k=10$ & $k=100$ & $k=1000$ & $k=10$ & $k=100$ & $k=1000$ \\
 \midrule 
 %\multirow{6}{*}{FGSM}
 \multirow{6}{*}{\rotatebox[origin=c]{90}{FGSM}}
 & \textsc{Naive} & $2.1 \pm 4.5$ & $2.1 \pm 1.4$ & $2.1 \pm 0.4$ & 31.9 $\pm$ 14.2 & 32.6 $\pm$ 4.7 & 32.5 $\pm$ 1.5\\
 & \textsc{Opt} & \textbf{80.0 $\pm$ 0.0} & \textbf{55.0 $\pm$ 0.0} & \textbf{18.9 $\pm$ 0.0} & \textbf{100.0 $\pm$ 0.0} & \textbf{100.0 $\pm$ 0.0} & \textbf{97.2 $\pm$ 0.0}\\
 \cmidrule{2-8}
 & \textsc{Optimistic} & \cellcolor{g3}$49.7 \pm 0.6$ & \cellcolor{g4}$25.7$ $\pm$ $0.1$ &\cellcolor{g3} $9.7$ $\pm$ $0.0$ & \cellcolor{g2}72.4 $\pm$ 0.5 & \cellcolor{g3}64.6 $\pm$ 0.1 & \cellcolor{g3}61.9 $\pm$ 0.0\\
 & \textsc{Virtual} & \cellcolor{g3}49.8 $\pm$ 0.5 & \cellcolor{g3}27.8 $\pm$ 0.1 &\cellcolor{g4} 8.1 $\pm$ 0.0 & \cellcolor{g2}75.1 $\pm$ 0.5 & \cellcolor{g3}74.3 $\pm$ 0.1 & \cellcolor{g2}68.9 $\pm$ 0.0\\
 & \textsc{Single-Ref} & \cellcolor{g2}62.0 $\pm$ 0.7 & \cellcolor{g2}45.2 $\pm$ 0.2 & \cellcolor{g3}10.2 $\pm$ 0.0 & \cellcolor{g2}84.3 $\pm$ 0.6 & \cellcolor{g1}90.9 $\pm$ 0.3 &\cellcolor{g4} 48.6 $\pm$ 0.1\\
 & \algoname & \cellcolor{g2}68.2 $\pm$ 0.5 & \cellcolor{g2}42.2 $\pm$ 0.1 & \cellcolor{g3}{12.7 $\pm$ 0.0} & \cellcolor{g1}91.5 $\pm$ 0.4 & \cellcolor{g1}96.5 $\pm$ 0.1 & \cellcolor{g1}{91.7 $\pm$ 0.0}\\
 \midrule
 %\multirow{6}{*}{PGD}
 \multirow{6}{*}{\rotatebox[origin=c]{90}{PGD}}
 & \textsc{Naive} & 1.8 $\pm$ 4.1 & 1.9 $\pm$ 1.4 & 1.9 $\pm$ 0.4 & 39.1 $\pm$ 14.2 & 38.9 $\pm$ 4.4 & 38.7 $\pm$ 1.5\\
 & \textsc{Opt} & \textbf{58.9 $\pm$ 0.4} & \textbf{39.9 $\pm$ 0.1} & \textbf{16.1 $\pm$ 0.0 }& \textbf{100.0 $\pm$ 0.0} & \textbf{100.0 $\pm$ 0.0} & \textbf{98.0 $\pm$ 0.0}\\
 \cmidrule{2-8}
 & \textsc{Optimistic} & \cellcolor{g2}34.9 $\pm$ 0.5 & \cellcolor{g4}19.2 $\pm$ 0.1 &\cellcolor{g3} 8.2 $\pm$ 0.0 & \cellcolor{g2}75.4 $\pm$ 1.9 & \cellcolor{g3}68.5 $\pm$ 0.4 & \cellcolor{g3}66.0 $\pm$ 0.1\\
 & \textsc{Virtual} & \cellcolor{g2}35.4 $\pm$ 0.5 & \cellcolor{g3}21.8 $\pm$ 0.1 &\cellcolor{g4} 7.2 $\pm$ 0.0 & \cellcolor{g2}78.1 $\pm$ 1.7 &\cellcolor{g2} 77.3 $\pm$ 0.5 & \cellcolor{g2}72.8 $\pm$ 0.1\\
 & \textsc{Single-Ref} &\cellcolor{g1} 44.1 $\pm$ 0.6 & \cellcolor{g2}33.9 $\pm$ 0.2 & \cellcolor{g3}8.3 $\pm$ 0.0 & \cellcolor{g2}86.2 $\pm$ 2.2 &\cellcolor{g1} 91.9 $\pm$ 0.9 & \cellcolor{g3}53.2 $\pm$ 0.3\\
 & \algoname & \cellcolor{g1}48.3 $\pm$ 0.5 & \cellcolor{g2}32.8 $\pm$ 0.1 & \cellcolor{g3}11.1 $\pm$ 0.0 & \cellcolor{g1}92.2 $\pm$ 1.3 & \cellcolor{g1}97.1 $\pm$ 0.4 &\cellcolor{g1} 94.2 $\pm$ 0.1\\
 \bottomrule
\end{tabular}\end{center} 
\vspace{-10pt}
\caption{\small
Online fool rate of various online algorithms on robust models. For a given attack and value of $k$: {\color{g1} $\mathbf{\bullet}$ } at least 90\%,
\textbf{\color{g2} $\mathbf{\bullet}$} at least 80\%, \textbf{\color{g3}$\mathbf{\bullet}$} at least 70\%, \textbf{\color{g4} $\mathbf{\bullet}$} less than 70\% of the optimal performance.}% achievable given by \textsc{Opt} in \textbf{bold}.}
%\vspace{-10pt}
\end{table*}

\xhdr{Q3: Online Attacks on Robust Classifiers}
We now test the feasibility of online attacks against classifiers robustified using adversarial training by adapting the public Madry Challenge \citep{madry2017towards} to the online setting. We report the average performance of each $\mathcal{A}$ in Table ~\ref{table:madry_challenge}. We observe that \algoname\ is the best online algorithms, outperforming \textsc{Virtual} and \textsc{Optimistic}, in all settings except for $k=10$ on MNIST where \textsc{Single-Ref} is slightly better.

\xhdr{Q4: Differences between the online and offline setting} The online threat model presents several interesting phenomena that we now highlight. First, we observe that a stronger attack (e.g. PGD)---in comparison to FGSM---in the offline setting doesn't necessarily translate to an equivalently stronger attack in the online setting. Such an observation was first made in the conventional offline transfer setting by \citet{madry2017towards}, but we argue the online setting further exacerbates this phenomenon. We explain this phenomenon in Fig.~\ref{fig:fgsm_vs_pgd_loss} \&~\ref{fig:fgsm_vs_pgd_ratio} by plotting the ratio of unsuccessful attacks to total attacks as a function of loss values for PGD and FGSM.  We see that for the PGD attack numerous unsuccessful attacks can be found even for high surrogate loss values and as a result, can lead $\mathcal{A}$ further astray by picking unsuccessful data points---which may be top-$k$ in surrogate loss values---to conduct a transfer attack. 
\cut{
We observe that the PGD curve is above the FGSM curve indicating that while PGD is a stronger attacker than FGSM, it is harder for $\mathcal{A}$ to pick successful attacks using PGD.} 
A similar counter-intuitive observation can be made when comparing the online fool rate on robust and non-robust classifiers. While it is natural to expect the online fool rate to be lower on robust models we empirically observe the opposite in Tab.~\ref{table:non_robust_table1} and~\ref{table:madry_challenge}. To understand this phenomenon we plot the ratio of unsuccessful attacks to total attacks as a function $f_s$'s loss in Fig.~\ref{fig:histogram_Vi} and observe non-robust models provide a non-vanishing ratio of unsuccessful attacks for large values of $\mathcal V_i$ making it harder for $\mathcal{A}$ to pick successful attacks purely based on loss (see also~\S\ref{appendix:visualization_of_values_observed}). 
\begin{figure}[h]
\vspace{-10pt}
    \centering
    \captionsetup[subfigure]{justification=centering}
    \begin{subfigure}[b]{0.32\columnwidth}
    \centering
    \includegraphics[width=\textwidth]{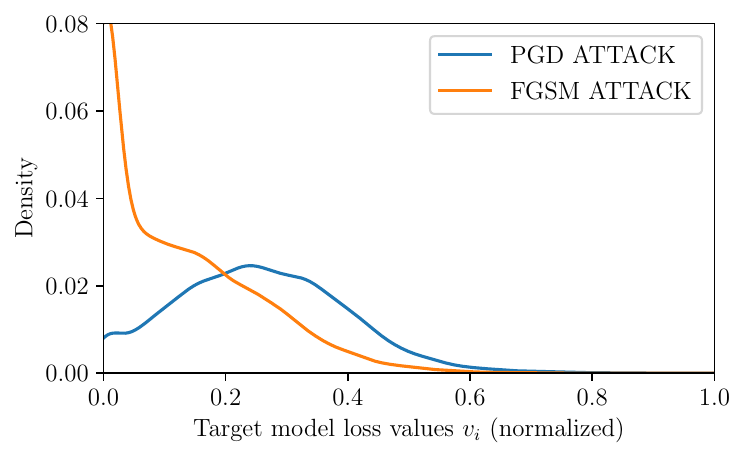}
    \caption{\small Distribution of $f_t$'s loss values.}
    \label{fig:fgsm_vs_pgd_loss}
    \end{subfigure}
    \hfill
    \begin{subfigure}[b]{0.32\columnwidth}
    \centering
    \includegraphics[width=\textwidth]{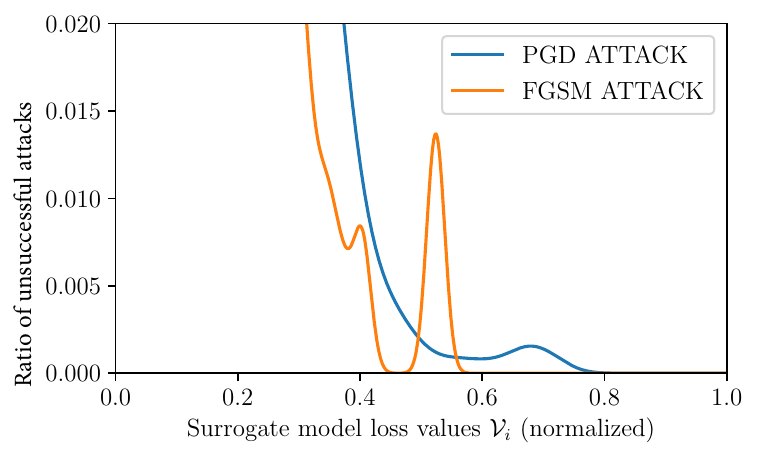}
    \caption{\small Ratio of unsuccessful attacks.}
    \label{fig:fgsm_vs_pgd_ratio}
    \end{subfigure}
    \hfill
    \begin{subfigure}[b]{0.32\columnwidth}
    \centering
    \includegraphics[width=\textwidth]{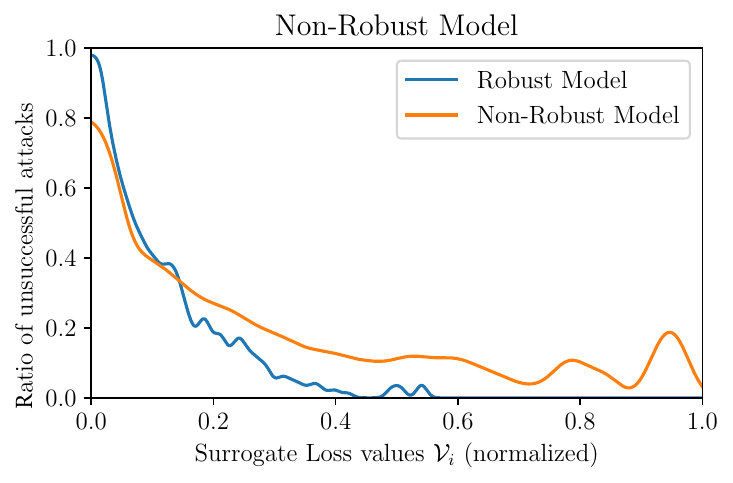}
    \caption{\small Ratio of unsuccessful attacks.}
    \label{fig:histogram_Vi}
    \end{subfigure}
%    \vspace{-6pt}
    \caption{\small For every example in MNIST we compute an attack using $f_s$ and submit it to $f_t$. \textbf{Left}: The distribution of the normalized loss values of $f_t$ for all attacks where a higher loss is a stronger attack. \textbf{Middle}: The percentage of unsuccessful attacks as a function of $f_s$ normalized loss values. \textbf{Right}: smoothed ratio of unsuccessful attacks to total attacks as a function of the $f_s$ normalized loss values.}
    \label{fig:fgsm_vs_pgd}
    \vspace{-5pt}
\end{figure}
%\vspace{-20pt}
%\input{related_work}
\vspace{-10pt}
\section{Conclusion}
\vspace{-10pt}
In this paper, we formulate the online adversarial attack problem, a novel threat model to study adversarial attacks on streaming data. We propose \algoname, a simple yet practical online algorithm that enables attackers to select easy to fool data points while being theoretically the best single threshold algorithm for $k<5$. We further introduce the stochastic $k$-secretary problem and prove fundamental results on the competitive ratio of any online algorithm operating in this new setting. Our work sheds light on the tight coupling between optimally selecting data points using an online algorithm and the final attack success rate, enabling weak adversaries to perform on par with stronger ones at no additional cost. Investigating, the optimal threshold values for larger values of $k$ along with competitive analysis for the general setting is a natural direction for future work.  

\section*{Ethics Statement}
\vspace{-10pt}
\label{broader_impact}
We introduce the online threat model which aims to capture a new domain for adversarial attack research against streaming data. Such a threat model exposes several new security and privacy risks. For example, using online algorithms, adversaries may now tailor their attack strategy to attacking a small subset of streamed data but still cause significant damage to downstream models e.g. the control system of an autonomous car. On the other hand our research also highlights the need and importance of stateful defence strategies that are capable of mitigating such online attacks. On the theoretical side the development and analysis of \algoname \ has many potential applications outside of adversarial attacks broadly categorized as resource allocation problems. As a concrete example one can consider advertising auctions which provide the main source of monetization for a variety of internet services including search engines, blogs, and social networking sites. Such a scenario is amenable to being modelled as a secretary problem as an advertiser may be able to estimate accurately the bid required to win a particular auction, but may not be privy to the trade off for future auctions.

\section*{Reproducibility statement}
\vspace{-10pt}
\label{broader_impact}
Throughout the paper we tried to provide as many details as possible in order for the results of the paper to be reproducible. In particular, we provide a detailed description of \algoname in Alg.~\ref{alg:virtual_plus} and we explain how to combine any attacker (e.g. PGD) with an online algorithm to form an online adversarial attack in Alg.~\ref{alg:online_adv_attack}. We provide a general description of the experimental setup in~\S\ref{section:experiments}, further details with the specific architecture of the models and hyper-parameters used are provided in~\S\ref{appendix:additional_results_larger_datasets}. We also provided confidence intervals with our experiments every time it was possible to do so. Finally the code used to produce the experimental results is provided with the supplementary materials and will be made public after the review process.

\section*{Acknowledgements}

The authors would like to acknowledge Manuella Girotti, Pouya Bashivan, Reyhane Askari Hemmat, Tiago Salvador and Noah Marshall for reviewing early drafts of this work.

\textbf{Funding.} This work is partially supported by the Canada CIFAR AI Chair Program (held at Mila).
Joey Bose was also supported by an IVADO PhD fellowship.
Simon Lacoste-Julien and Pascal Vincent are CIFAR Associate Fellows in the Learning in Machines \& Brains program. Finally, we thank Facebook for access to computational resources.
\section*{Contributions}

% AM and GG formulated the online adverserial attacks setting by drawing parallels to the $k$-secretary problem, with AM leading the theoretical investigation and theoretical results. Joey Bose conceived the idea of online attacks, drove the writing of the paper and helped AM with toy results. Hugo Berard was the chief architect behind all experimental results on MNIST and CIFAR-10.  WH, SLJ and PV provided feedback and guidance over this research while GG supervised the core technical execution of the theory.

\emph{Andjela Mladenovic} and \emph{Gauthier Gidel} formulated the online adversarial attacks setting by drawing parallels to the $k$-secretary problem, with \emph{Andjela Mladenovic} leading the theoretical investigation and theoretical results including the competitive analysis for \algoname\ for the general-$k$ setting. \emph{Avishek Joey Bose} conceived the idea of online attacks, drove the writing of the paper and helped \emph{Andjela Mladenovic} with experimental results on synthetic data. \emph{Hugo Berard} was the chief architect behind all experimental results on MNIST and CIFAR-10.  \emph{William L. Hamilton}, \emph{Simon Lacoste-Julien} and \emph{Pascal Vincent} provided feedback and guidance over this research while \emph{Gauthier Gidel} supervised the core technical execution of the theory.

\clearpage

\bibliography{iclr2022_conference}
\bibliographystyle{iclr2022_conference}

\appendix
\onecolumn
\section{Proof of Competitive Ratio for \algoname Algorithm}
\label{appendix:virtual_plus_proof_k_2}

As an illustrative example that aids in understanding the full general-$k$ proof for for the competitive ratio \algoname\ we now prove Theorem \ref{thm:general_k_theorem1} for $k=2$ from the main paper. 

\begin{theorem}  For $k=2$, the competitive ratio achieved by \algoname\ algorithm is equal to, 
\begin{equation}
    C_n = \frac{t(t-1)}{n} \sum_{j=t}^{n-1}\frac{1}{j(j-1)} \left(1 + 2 \sum_{ p=t+1}^j \frac{1}{p-1}\right)
\end{equation}
Particularly for $t= \alpha \cdot n\,,\, \alpha \in (0,1)$ we get 
\begin{equation} \label{eq:lower_bound_alpha1}
    C_n > \alpha ( 3(1-\alpha) + 2 \alpha \ln(\alpha)) + \mathcal{O}(1/n)
\end{equation}
Thus, asymptotically  we have
\begin{equation}
    C >  \max_{\alpha \in [0,1]} \alpha ( 3(1-\alpha) + 2 \alpha \ln(\alpha)) > .4273 > 1/e \, .
\end{equation}
\end{theorem}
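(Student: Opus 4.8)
The plan is to compute the exact competitive ratio $C_n$ of \algoname\ for $k=2$ and then pass to the $n\to\infty$ limit; the $k=2$ argument is meant to foreshadow the general-$k$ proof. I would begin from the standard reduction for secretary objectives: the value assignment that minimizes the ratio $\mathbb{E}[\setvaluemath(S_{\mathcal A})]/\setvaluemath(S^*)$ is the one where the $k$ largest values coincide and the remaining ones are negligible, so that $C_n=\tfrac1k\,\mathbb{E}[\,|S_{\mathcal A}\cap S^*|\,]$; for $k=2$ this is exactly half the expected number of items of the tied top pair that \algoname\ accepts. The structural fact that makes this tractable is that for $k=2$ the reference value $\mathcal V(R[2])$ is nondecreasing over the selection phase and never exceeds the top value (the list $R$ holds the two best values seen-or-accepted so far, and only the two tied top items attain that value), so a top item reaching position $i>t$ always clears the value test and is accepted \emph{iff} strictly fewer than two items have been accepted among positions $t+1,\dots,i-1$ --- two earlier acceptances would exhaust the budget. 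Hence $C_n$ equals the probability that a fixed top item, placed at a uniformly random position, is accepted, and it remains to evaluate this probability.

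I would evaluate it by conditioning on the position $i\in\{t+1,\dots,n\}$ of the tracked top item and on the number of earlier ``decoy'' acceptances in positions $t+1,\dots,i-1$, which can only be $0$ or $1$. A decoy gets accepted only once some value in the selection window reaches the post-sampling threshold $\mathcal V(R[2])$, i.e. the second-largest value among the first $t$ items; since that threshold only rises, the event ``no decoy accepted before $i$'' is precisely the event that the two largest values among positions $1,\dots,i-1$ both lie in the sampling window $\{1,\dots,t\}$, which by exchangeability has probability $\binom{t}{2}/\binom{i-1}{2}=\tfrac{t(t-1)}{(i-1)(i-2)}$; this produces the ``$1$'' term. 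For ``exactly one decoy'' I would further condition on that decoy's position $p\in\{t+1,\dots,i-1\}$ and on its relative rank, reducing everything to elementary uniform order-statistic probabilities of the form $\tfrac1{p-1}$, and check that once this single decoy is accepted no further decoy can be accepted before position $i$; summing over $p$ yields the inner sum $2\sum_{p=t+1}^{i-1}\tfrac1{p-1}$. Collecting the two contributions, weighting by $\Pr[\text{a fixed top item sits at position }i]=\tfrac1n$ (the factor $\tfrac1k=\tfrac12$ cancelling against the two tied top items), and re-indexing $j=i-1$ gives the claimed closed form $C_n=\tfrac{t(t-1)}{n}\sum_{j=t}^{n-1}\tfrac1{j(j-1)}\bigl(1+2\sum_{p=t+1}^{j}\tfrac1{p-1}\bigr)$.

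For the asymptotics I would put $t=\alpha n$ and replace sums by integrals with an $\mathcal O(1/n)$ error (Euler--Maclaurin): using $\tfrac1{j(j-1)}=\tfrac1{j^2}+\mathcal O(j^{-3})$ and $\sum_{p=t+1}^{j}\tfrac1{p-1}=\ln(j/t)+\mathcal O(1/t)$, the sum becomes $\tfrac1t\int_1^{1/\alpha}\tfrac{1+2\ln u}{u^2}\,du+\mathcal O(1/n)$ after the substitution $u=j/t$. Since $\int\tfrac{1+2\ln u}{u^2}\,du=-\tfrac{3+2\ln u}{u}$, the integral equals $3-3\alpha+2\alpha\ln\alpha$, and multiplying by the prefactor $\tfrac{t(t-1)}{n}\cdot\tfrac1t=\tfrac{t-1}{n}\to\alpha$ gives $C_n>\alpha\bigl(3(1-\alpha)+2\alpha\ln\alpha\bigr)+\mathcal O(1/n)$. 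Finally I would optimize $f(\alpha)=\alpha\bigl(3(1-\alpha)+2\alpha\ln\alpha\bigr)$ over $(0,1)$: $f'(\alpha)=3-4\alpha+4\alpha\ln\alpha$ has a unique zero $\alpha^\star\approx0.382$, at which $f(\alpha^\star)\approx0.4273$, and since $0.4273>1/e\approx0.3679$ this yields the last two displayed inequalities.

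The step I expect to be the main obstacle is the exact ``exactly one decoy'' count: one has to disentangle, inside positions $t+1,\dots,i-1$, which item is the first to clear the initial threshold and what its relative rank is, and then confirm that after its acceptance the (now higher) threshold $\mathcal V(R[2])$ together with the remaining budget prevents any second decoy before position $i$. This is exactly the bookkeeping that is more delicate for \algoname\ than for \textsc{Virtual}, because \algoname\ updates $R$ only on genuine acceptances; by contrast the worst-case reduction, the monotonicity of $\mathcal V(R[2])$, and the sum-to-integral asymptotics are all routine once this combinatorial core is settled.
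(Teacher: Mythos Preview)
Your proposal is correct and follows essentially the same approach as the paper: both reduce to $C_n=\tfrac1k\sum_a\mathbb{P}(i_a\in S_{\mathcal A})$, condition on the position of a top item, decompose $\mathbb{P}(|S_{\mathcal A}|<2)$ into the events $|S_{\mathcal A}|=0$ and $|S_{\mathcal A}|=1$ (your ``zero/one decoy''), and for the latter condition on the selection position $p$ to obtain the inner sum---the paper's two Observations are exactly the bookkeeping you flag as the main obstacle. The only cosmetic difference is in the initial reduction: you invoke a worst-case tied-values argument, whereas the paper proves \algoname\ is monotone and then cites \citet[Lemma~3.3]{albers2020new} to get the same formula; the asymptotic step (your Euler--Maclaurin versus the paper's elementary sum-integral comparison) is likewise equivalent.
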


\begin{figure}[ht]
    \centering
    \includegraphics[width=1.0\linewidth]{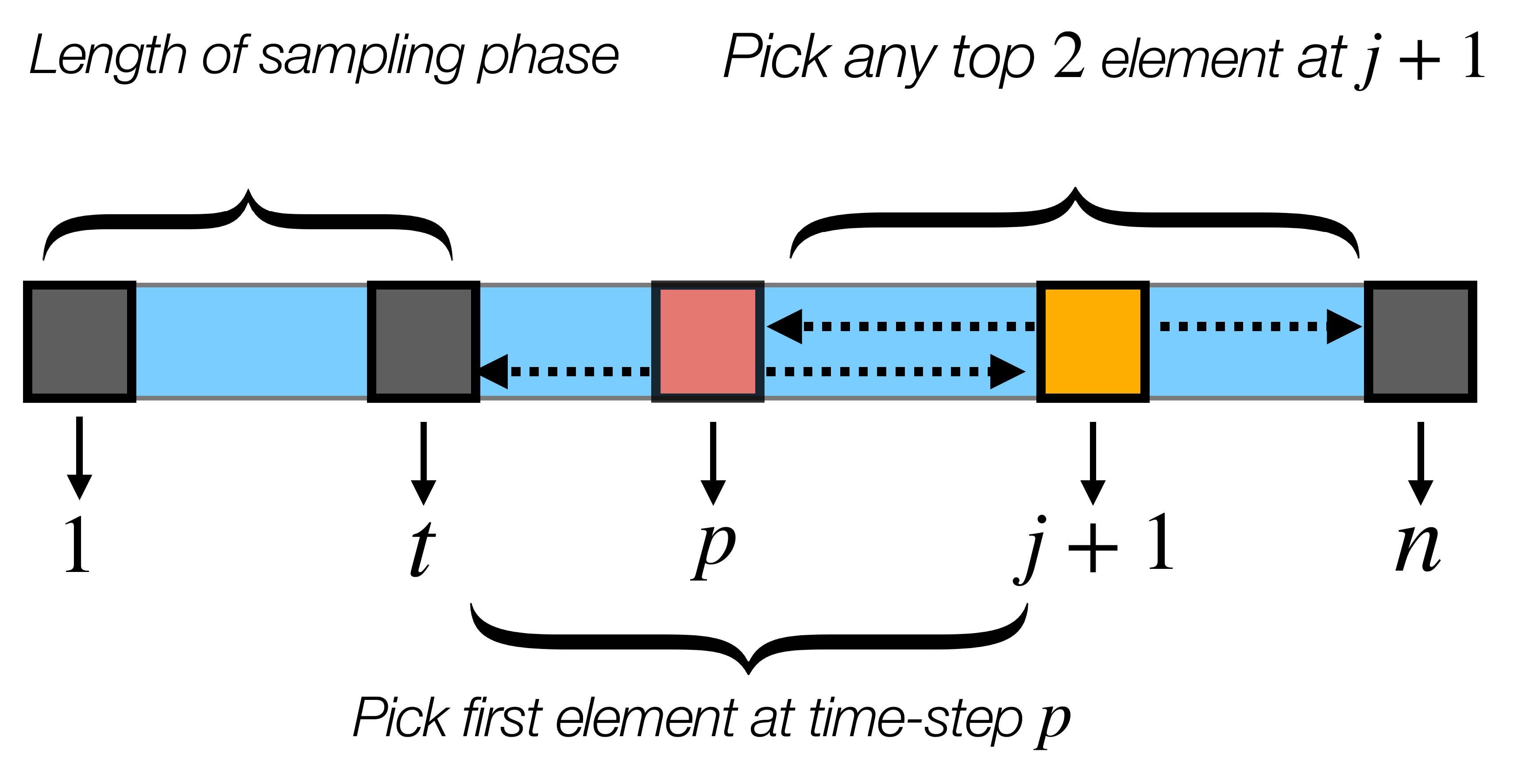}
    \caption{Probability of having only one element in $S_{\mathcal{A}}$ after $j$ time-steps with the \algoname\ algorithm.}
    \label{fig:k_2}
    \vspace{-15pt}
\end{figure}

\begin{proof}
First note that by \citet[Lemma 3.3]{albers2020new} we can show that the competitive ratio for the $k$-secretary problem for a monotone algorithm is equal to 
\begin{equation}
\label{eq:C_as_sum_k_two}
    C = \frac{1}{k}\sum_{a=1}^k \mathbb{P}(i_a \in S_\mathcal{A}), %\label{eq:C_as_sum_prob1}
\end{equation}
where $i_a$ is the index of the $a^{th}$ secretary picked by the offline solution ---i.e. $i_a$ is a top-$k$ secretary of $\mathcal{D}$. By Lemma \ref{lemma_monotone} \algoname \ is a monotone algorithm and we may use ~\eqref{eq:C_as_sum_k_two}.
Now, let us focus on the case $k=2$.
When calculating the probability of either of the top two items in $\mathcal{D}$ being picked by the \algoname\ we must first compute the probability of one of the top-2 items being picked during the selection phase (time step $t+1 \dots n$). Now notice that \algoname\ picks an item at time step $j+1$ if and only if this is a top-2 item with respect to all of $\mathcal{D}$ and $|S_{\mathcal{A}}| \leq 2$ at time-step $j+1$. Let top-$2_j$ denote the two largest elements observed by $\mathcal{A}$ up to and inclusive of time step $j$. Thus, for $a \in \{1,2\}$, we have
\begin{align}
    \mathbb{P}(i_a \in S_\mathcal{A}) 
    &= \sum_{j=t}^{n-1} \mathbb{P}(i_a \in S_\mathcal{A} \text{ at time-step }j+1)\\ %\label{p_picked_equal_not_filled} \\
    &= \frac{1}{n}\sum_{j=t}^{n-1} \mathbb{P}(|S_{\mathcal{A}}| < 2 \text{ at time-step } j + 1) \notag
\end{align}

Now, we compute $\mathbb{P}(|S_{\mathcal{A}}| \leq 2 \text{ at time-step } j+1)$ by decomposing this probability into the following two events: A.) $|S_{\mathcal{A}}| = 0$ where the selection set is empty and B.) the event $|S_{\mathcal{A}}| = 1$ where exactly one item has been picked. We now analyze each event in turn.

\xhdr{Event A} In order for the event $|S_{\mathcal{A}}|=0$ to occur it implies that the algorithm does not select any items in the first $j$ rounds. This means both two top-$2_j$ elements must have appeared in the sampling phase. Thus the probability for this event is exactly $\frac{t(t - 1)}{j (j - 1)}$. 

\xhdr{Event B}
The second event is when $|S_{\mathcal{A}}|=1$ ---i.e. the algorithm picks exactly one element in the first $j$ rounds. The computation of this event is illustrated in Figure~\ref{fig:k_2}. Let's say that an element is picked at time step $p$. Now to compute the probability of Event B occurring we first make the following two observations:

\begin{enumerate}[label={\bf Observation \arabic*:}, topsep=0pt, parsep=0pt, leftmargin=69pt, itemsep=2pt]
    \item  In order for exactly one element to be picked at the time step $p \leq j$, this element must be one of the top-$2_j$ elements. Furthermore, this implies the other of the top-$2_j$ element ---i.e. the one not picked at $p$ must have appeared in the sampling phase. Note that if both top-$2_j$ elements appear after the sampling phase, the condition would be satisfied twice and two elements would be selected instead of exactly one, and if they both appeared during the sampling phase we return to Event A. As a result, the probability for this condition is given by $\frac{t}{j (j - 1)}$.  
    \item By observation 1. we know that the online algorithm $\mathcal{A}$ picks one of the top-$2_j$ at time step $p$ and the fact that the event under consideration is $|S_{\mathcal{A}}|=1$ the reference list $R$ from time step $p$ to $j+1$ must contain both top-$2_j$ elements. However, for $\mathcal{A}$ to pick \emph{only} at $p$ we also need to ensure that no elements are picked prior t to $p$. Therefore, before time step $p$ the reference list must contain  top-$2_{p}$ . Again by observation 1, we know that $R$ already contains one of the top-$2_j$ elements therefore we know it contains one of the  top-$2_p$ elements. Thus the probability of ensuring that the second  top-$2_p$ elements is also within $R$ by time step $p$ is $\frac{(t - 1)}{(p - 2)}$. Finally, since there are two top elements and they may appear in any order we must count the probability of Event B occurring twice.

\end{enumerate}

Overall we get:
\begin{equation}
    \frac{t(t - 1)}{j (j - 1)} + 2 \sum_{p = t + 1}^{j}\frac{1}{j}\frac{t}{j - 1}\frac{t-1}{p-2}
\end{equation}
Total probability:
\begin{align*}
    \frac{1}{n} \sum_{j = t}^{n-1}\left(\frac{t(t - 1)}{j (j - 1)} + 2 \sum_{p = t + 1}^{j}\frac{1}{j}\frac{t}{j - 1}\frac{t-1}{p-2}\right) 
    &= 
   \frac{1}{n}\sum_{j = t}^{n-1}\left(1 + 2\sum_{p=t}^{j - 1} \frac{1}{p - 1} \right) \\
   &= 
   \frac{t(t - 1)}{n}\sum_{j=t}^{n-1}\left(\frac{1}{j(j-1)} + \frac{2}{j(j-1)}\sum_{p = t}^{j-1}\frac{1}{p-1}\right)\\
   & >
   \frac{t(t-1)}{n}\sum_{j = t}^{n - 1}\left(\frac{1}{j^2} + 2\frac{1}{j^2}\sum_{p = t + 1}^{j}\frac{1}{p-1}\right) \\
   & >
   \frac{t(t-1)}{n}\sum_{j = t}^{n - 1}\left(\frac{1}{j^2} + \frac{2}{j^2}\int_{p=t+1}^{j+1} \frac{1}{p-1}\, dp \right)\\
   & >
   \frac{t(t-1)}{n}\sum_{j = t}^{n-1}\left(\frac{1}{j^2} + \frac{2}{j^2}\ln\left(\frac{j}{t}\right)\right)
%   =\frac{t(t-1)}{n}(\sum_{j = t}^{a}(\frac{1}{j^2} + \frac{2}{j^2}\ln(\frac{j}{t})) + \sum_{j = a + 1}^{n - 1}(\frac{1}{j^2} + \frac{2}{j^2}\ln(\frac{j}{t})))\\
%   &\textgreater
%   \frac{t(t-1)}{n}(\int)
%   | \int_n^{n+1} f(t) dt - f(n)| \leq \int |f(t) - f(n)| dt \leq \int sup_{[n,t]}|f'(x)||t-n| dt
\end{align*}
Now we will use the following lemma
\begin{lemma}
For any differentiable function $f$ and any $a<b$, we have,
\begin{equation}
    \sum_{j=a}^{b} f(j) \geq \int_a^{b+1} f(t) dt - |b+1-a|\sup_{t \in [a,b+1]} |f'(t)|
\end{equation}
\end{lemma}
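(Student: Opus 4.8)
The plan is to compare the sum with the integral one unit interval at a time and to control each mismatch via the fundamental theorem of calculus. Since the summation index $j$ runs over the integers $a, a+1, \dots, b$, the intervals $[j,j+1]$ for these $j$ tile $[a,b+1]$ exactly, so $\int_a^{b+1} f(t)\, dt = \sum_{j=a}^{b} \int_j^{j+1} f(t)\, dt$ and hence
\begin{equation}
    \sum_{j=a}^{b} f(j) - \int_a^{b+1} f(t)\, dt = \sum_{j=a}^{b} \left( f(j) - \int_j^{j+1} f(t)\, dt \right).
\end{equation}
(If $\sup_{[a,b+1]}|f'| = \infty$ the asserted bound is vacuous, so one may assume this quantity is finite.)

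Next I would bound a single summand. Writing $f(j) - \int_j^{j+1} f(t)\, dt = \int_j^{j+1} \big( f(j) - f(t) \big)\, dt$ and using $f(t) - f(j) = \int_j^t f'(s)\, ds$, one obtains, for every $t \in [j,j+1]$, that $|f(j) - f(t)| \le (t-j)\sup_{s \in [a,b+1]}|f'(s)| \le \sup_{s \in [a,b+1]}|f'(s)|$. Integrating over the length-one interval $[j,j+1]$ then gives
\begin{equation}
    \left| f(j) - \int_j^{j+1} f(t)\, dt \right| \le \sup_{s \in [a,b+1]}|f'(s)|.
\end{equation}

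Finally, summing this estimate over the $|b+1-a| = b-a+1$ indices $j$ and applying the triangle inequality yields $\left| \sum_{j=a}^{b} f(j) - \int_a^{b+1} f(t)\, dt \right| \le |b+1-a|\sup_{t \in [a,b+1]}|f'(t)|$; discarding the absolute value on the left and keeping only the lower bound produces exactly the claimed inequality. I do not expect any genuine obstacle here: the only points needing mild care are that $a$ and $b$ be integers so that the unit subintervals tile $[a,b+1]$, and that each local supremum $\sup_{[j,j+1]}|f'|$ be replaced by the global $\sup_{[a,b+1]}|f'|$ before summing. (The same argument in fact yields the sharper constant $\tfrac12|b+1-a|$ since $\int_j^{j+1}(t-j)\,dt = \tfrac12$, but the stated weaker bound is all the calling computation requires.)
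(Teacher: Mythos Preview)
Your proof is correct and follows essentially the same approach as the paper: bound $|f(j)-\int_j^{j+1}f(t)\,dt|$ by $\sup|f'|$ on that unit interval via the mean-value/fundamental-theorem estimate, then sum over $j=a,\dots,b$. The only cosmetic difference is that the paper keeps the local supremum $\sup_{[j,j+1]}|f'|$ until the final summation step, whereas you pass to the global supremum first; your remark about the sharper constant $\tfrac12|b+1-a|$ is a nice bonus.
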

\begin{proof}
\begin{equation}
     | f(n) - \int_n^{n+1} f(t) dt | \leq \int_n^{n+1} |f(n)-f(t)| dt \leq \sup_{t \in [n,n+1]} |f'(t)| 
\end{equation}
Thus, 
\begin{equation}
    f(n) \geq  \int_n^{n+1} f(t) dt - \sup_{t \in [n,n+1]} |f'(t)| 
\end{equation}
and by summing for $n = a \ldots b$ we get the desired lemma. 
\end{proof}
Applying this lemma to $f(x) = \frac{1+2\ln(x/t)}{x^2}\,, a = t$ and $b=n-1$, we get 
\begin{align}
    \frac{1}{n} \sum_{j = t}^{n-1}\frac{t(t - 1)}{j (j - 1)} + 2 \sum_{p = t + 1}^{j}\frac{1}{j}\frac{t}{j - 1}\frac{t-1}{p-2} 
    & > \frac{t(t-1)}{n}\sum_{j = t}^{n-1}\left(\frac{1}{j^2} + \frac{2}{j^2}\ln\left(\frac{j}{t}\right)\right) \\
    & \geq \frac{t(t-1)}{n}\left(\int_{t}^{n}  \frac{1+2\ln(x/t)}{x^2} dx - 2(n-t) \sup_{x \in [t,n]} \left| 
    \frac{4 \ln(x/t)}{x ^3}\right| \right) \notag \\
    & \geq  \frac{t(t-1)}{n}\left(\int_{t}^{n}  \frac{1+2\ln(x/t)}{x^2} dx - 2(n-t) \left| \frac{16}{3 t^3 e^4} \right| \right) \notag \\
    & = 
    \frac{t(t -1)}{n} \left(\frac{3}{t}-\frac{2\ln(n/t) + 3}{n} - 2(n-t) \left| \frac{16}{3 t^3 e^4} \right|\right) 
\end{align}
Now for $t = \alpha n$ where $\alpha \in (0, 1)$ and as $n \xrightarrow[]{} \infty$, that lower-bound becomes 
\begin{equation}
C \geq \alpha (3 - \alpha(3 - 2\ln (\alpha))) + \mathcal{O}(1/n) \,,\quad \forall \alpha \in (0,1)
\end{equation}
The constant term of the RHS is a concave function of $\alpha$ that is maximized for $\alpha^* \approx 0.38240$. Thus, our algorithms achieves competitive ratio larger than $0.42737$.
\end{proof}

\clearpage

\section{Competitive Ratio General $k$}
\label{app:general_k_proof}
We now prove our main result for the competitive ratio of \algoname \ for $k \geq 2$. The theorem statement is reproduced here for convenience.  
\begin{reptheorem}{thm:general_k_theorem1}
The competitive ratio of \algoname\ for $k \geq 2$ with threshold $t_k = \alpha n$ can asymptotically be lower bounded by the following concave optimization problem,
%The competitive ratio of \algoname\ for $k \geq 2$ and where $t = \alpha n$ can asymptotically be lower bounded by 
\begin{equation}
    C_k >  \max_{\alpha \in [0,1]} f(\alpha) := {\alpha}^k \left(\sum_{m = 0}^{k - 1} a_m \ln^m (\alpha)\right) - \alpha a_0
    \hspace{0.15cm }where  \hspace{0.15cm }
    a_m = \left(\frac{\frac{k^k}{(k-1)^{k-m}} - k^m}{m!}\right)(-1)^{m+1}
    \, . \notag
\end{equation}
%Moreover, The function $f$ is concave on $[0,1]$.
Particularly, we get $C_2\geq0.427, C_3\geq .457, C_4\geq.4769$ outperforming~\citet{albers2020new}.
\end{reptheorem}
\begin{figure}[ht]
    \centering
    \includegraphics[width=1.0\linewidth]{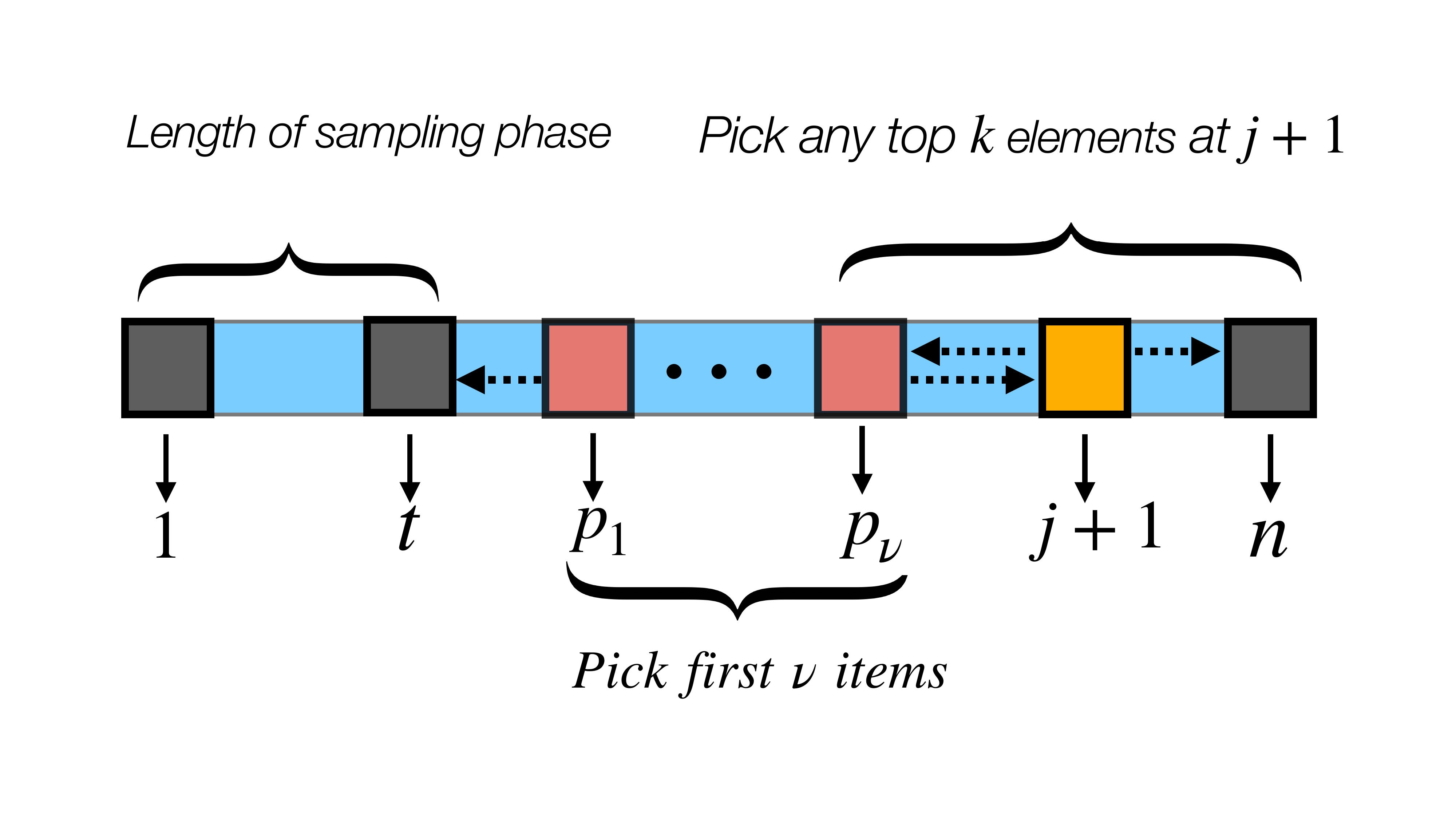}
    \caption{Virtual+ $k \geq 2$ proof.}
    \label{fig:general_k}
    \vspace{-15pt}
\end{figure}
\begin{proof}
First note that by \citet[Lemma 3.3]{albers2020new} we can show that the competitive ratio for the $k$-secretary problem for a monotone algorithm is equal to 
\begin{equation}
    C = \frac{1}{k}\sum_{a=1}^k \mathbb{P}(i_a \in S_\mathcal{A}), \label{eq:C_as_sum_prob1}
\end{equation}
where $i_a$ is the index of the $a^{th}$ secretary picked by the optimal offline solution ---i.e. $i_a$ is a top-$k$ secretary of $\mathcal{D}$. By Lemma \ref{lemma_monotone} \algoname \ is a monotone algorithm and we may use ~\eqref{eq:C_as_sum_prob1}.
\begin{align}
    \mathbb{P}(i_a \in S_\mathcal{A}) 
    &= \sum_{j=t}^{n-1} \mathbb{P}(i_a \in S_\mathcal{A} \text{ at time-step }j+1) \label{p_picked_equal_not_filled} \\
    &= \frac{1}{n}\sum_{j=t}^{n-1} \mathbb{P}(|S_{\mathcal{A}}| < k \text{ at time-step } j + 1) \notag
\end{align}
Now, we compute $\mathbb{P}(|S_{\mathcal{A}}| < k \text{ at time-step } j+1)$ by decomposing this probability into smaller events  $\mathbb{P}(|S_{\mathcal{A}}| = \nu \text{ at time-step } j+1)$ where $\nu \in [0,\dots,k-1]$.

We may compute the probability of $\mathbb{P}(|S_{\mathcal{A}}| = \nu \text{ at time-step } j+1)$ in the following manner. First, let us consider the scenario where $\nu$ elements are selected by \algoname\ at time steps $p_1$, $p_2, \dots, p_{\nu}$. 
Now, in order for an element to be selected at position $p_\nu$ that element must be one of the top $k$ elements up to time-step $j+1$. Therefore we have a factor $k/j$ in our equation. Now, in order to guarantee that no elements are picked after the position $p_\nu$ we additionally need to ensure that the remaining top-$k$ up to $j+1$ elements appear before $p_\nu$ which results in a factor of ${p_{\nu} - 1 \choose k - 1}/{j-1 \choose k - 1}$. 
Similarly, we may recursively calculate the corresponding factor for each position $p_{\nu-1} \dots p_1$. However, we also need to guarantee that no elements are picked within the time interval $[t+1 \dots p_1-1]$
---i.e. before $p_1$. The probability for this occurring is then ${t \choose k}/{p_1-1 \choose k}$ as this corresponds an ordering where the top-$k$ elements up to $p_1 - 1$ all appear in the sampling phase.
Thus, the probability $p_{t,j}^{k,\nu}:=\mathbb{P}(|S_{\mathcal{A}}| = \nu \text{ at time-step } j+1)$ is :

\begin{align}
    p_{t,j}^{k,\nu}
    &= \sum_{t+1 \leq p_1 < p_2 < \dots <p_{k-1} \leq j}\frac{k}{j} \frac{{p_{\nu} - 1 \choose k-1}}{{j-1 \choose k-1}}\frac{k}{p_{\nu} - 1}
    \frac{{p_{{\nu}-1} - 1 \choose k-1}}{{p_{\nu} - 2 \choose k-1}}\frac{k}{p_{\nu-1} - 1}\dots \frac{k}{p_2 - 1}\frac{{p_1 - 1 \choose k - 1}}{{p_2 - 2 \choose k-1}} \frac{{t \choose k}}{{p_1-1 \choose k}}\\
    & = \frac{t(t-1)\dots(t-k+1)}{j(j-1)\dots (j - k + 1)}\sum_{t+1 \leq p_1 < p_2 < \dots <p_{\nu} \leq j}
    \frac{k^{\nu}}{(p_\nu-k)(p_{\nu-1} - k)\dots(p_1 - k)}
\end{align}

Therefore, the probability of not exceeding $k$-selections, $p_{t,j}^k =\sum_{\nu=0}^{k-1} p_{t,j}^{k,\nu} $, to get before time step $j + 1$ is:
\begin{align}
   \notag
    p_{t,j}^k & = \frac{t(t - 1)...(t - k + 1)}{j (j - 1) \dots (j - k + 1)} \bigg( 1 + k\hspace{-1em}\sum_{p_1 = t + 1\dots j}^{} \Lambda_{p_1} + \dots
    + {k^{k - 1}}\hspace{-2em}\sum_{\substack{p_1 = t + 1 \dots p_2 - 1  \\\vdots\\ p_{k-1} = t + 1 \dots j}} \Lambda_{p_1} \dots \Lambda_{p_{k-1}} \bigg),%%\\
\end{align}

where we define $\Lambda_{p_i} := \frac{1}{p_i - k} $. The total competitive ratio is then: 
\cut{
\begin{align}
   \notag
    & \frac{t(t - 1)...(t - k + 1)}{j (j - 1) \dots (j - k + 1)} \bigg(1 + k\hspace{-1em}\sum_{p_1 = t + 1\dots j}^{}\frac{1}{p_1 - k} + \dots
    + {k^{k - 1}}\hspace{-2em}\sum_{\substack{p_1 = t + 1 \dots p_2 - 1  \\\vdots\\ p_{k-1} = t + 1 \dots j}}\frac{1}{(p_1 - k)\dots (p_{k-1} - k)} \bigg)%%\\
    %& = 
    %\frac{t(t - 1)...(t - k + 1)}{j (j - 1) \dots (j - k + 1)} \bigg(1 + k\hspace{-0.5 em}\sum_{p_1 = t + 1}^{j}\frac{1}{p_1 - k} + 
     %\dots  +
     %{k^{k - 1}}\sum_{p_1 = t + 1}^{p_2 -1}\frac{1}{ (p_{1} - k)} \dots \sum_{p_{k-1} = t + 1}^{j}\frac{1}{ (p_{k-1} - k)} \bigg) 
\end{align}
}

\begin{align}
    \label{app_general_k_eqn_23}
   C_k &= \frac{1}{n}\sum_{j = t}^{n - 1}
     \frac{t(t - 1)\dots (t - k + 1)}{j(j - 1)\dots(j - k + 1)} \bigg( 1 + k\hspace{-1em}\sum_{p_1 = t + 1\dots j}^{} \Lambda_{p_1} + \dots
    + {k^{k - 1}}\hspace{-2em}\sum_{\substack{p_1 = t + 1 \dots p_2 - 1  \\\vdots\\ p_{k-1} = t + 1 \dots j}} \Lambda_{p_1} \dots \Lambda_{p_{k-1}} \bigg), 
\end{align}

\cut{
\begin{align}
    & \frac{1}{n}\sum_{j = t}^{n - 1}
     \frac{t(t - 1)\dots (t - k + 1)}{j(j - 1)\dots(j - k + 1)} \bigg(1 + k\hspace{-1em}\sum_{p_1 = t + 1\dots j}^{}\frac{1}{p_1 - k} + \dots
    + {k^{k - 1}}\hspace{-2em}\sum_{\substack{p_1 = t + 1 \dots p_2 - 1  \\\vdots\\ p_{k-1} = t + 1 \dots j}}\frac{1}{(p_1 - k)\dots (p_{k-1} - k)} \bigg)
\end{align}
}

Now using Lemma ~\ref{lemma_three} we can bound it:
\begin{align}
     C_k & \geq
     \frac{1}{n}\int_{j = t}^{n}
     \frac{t(t - 1)\dots (t - k + 1)}{j(j - 1)\dots(j - k + 1)}\bigg(1 + \frac{k}{1!}\ln \Big(\frac{j - k}{t}\Big) + \dots + \frac{k^{k - 1}}{(k-1)!}\ln^{k - 1}\Big(\frac{j - k}{t}\Big)\bigg)\\
     & \geq
     \frac{1}{n}\int_{j = t}^{n}
     \frac{t(t - 1)\dots (t - k + 1)}{j^k}\bigg(1 + \frac{k}{1!}\ln \Big(\frac{j - k}{t}\Big) + \dots + \frac{k^{k - 1}}{(k-1)!}\ln^{k - 1}\Big(\frac{j - k}{t}\Big)\bigg)
\end{align}
Now notice that:
\begin{equation}
    \label{identity_two}
    \int\frac{1}{a!} \frac{\ln^a(x)}{x^k} dx = - \frac{1}{x^{k - 1}}\sum_{m = 0}^{a}\frac{1}{m!}(k -1)^{m - 1 -a }\ln^m(x)
\end{equation}
Using the identity in ~\eqref{identity_two} we compute the competitive ratio as:
\begin{align}
     & \geq\frac{t(t - 1)\dots (t - k + 1)}{n}
     \bigg(\sum_{a = 0}^{k - 1}- \frac{1}{j^{k - 1}}{ k ^ a }\sum_{m = 0}^{a}\frac{1}{m!}(k -1)^{m - 1 -a }\ln^m\Big(\frac{j - k}{t}\Big) \bigg) \Big|_{j = t}^{n}\\
     & =
     \frac{t(t - 1)\dots (t - k + 1)}{n}\bigg(- \frac{1}{j^{k - 1}}\sum_{m = 0}^{k - 1}\frac{1}{m!}\Big(\sum_{a = m}^{k - 1}{k ^ a}(k - 1)^{m - a - 1}\Big)\ln^m{\Big(\frac{j - k}{t}\Big)}\bigg)\Big|_{j = t}^{n}
\end{align}
For threshold $t = \alpha n $ where $\alpha \in (0,1)$ and as $n \xrightarrow{}\infty$ our competitive rate becomes:
\begin{align}
    & \alpha \bigg(\sum_{a = 0}^{k - 1}{ k ^ a }{(k - 1)}^{-1 - a}\Big) - \alpha^k\Big(\sum_{m=0}^{k - 1}\frac{1}{m!}\Big(\sum_{a = m}^{k - 1}{ k ^ a }(k - 1)^{m - a - 1}\Big)\ln^m\Big(\frac{1}{\alpha}\Big)\bigg)\\
    & = \alpha\left({\left(\frac{k}{k-1}\right)}^{k} - 1\right) - \alpha^k\left(\sum_{m-0}^{k-1}\left(\frac{\frac{k^k}{(k-1)^{k-m}} - k^m}{m!}\right)(-1)^{m+1}\ln^m(\alpha)\right)
\end{align}

Finally let us show that 
\begin{equation}
    f(\alpha) :=   {\alpha}^k \left(\sum_{m = 0}^{k - 1} a_m \ln^m (\alpha)\right) - \alpha a_0
    \quad where \quad  
    a_m = \left(\frac{\frac{k^k}{(k-1)^{k-m}} - k^m}{m!}\right)(-1)^{m+1}
\end{equation}
is concave. To do so we just compute it second derivative and show that $f''(\alpha) \leq 0$. We have,
\begin{align*}
    f''(\alpha) 
    &= \alpha^{k-2}\sum_{m = 0}^{k - 3} \left[ k(k-1)a_m + (2k-1) (m+1) a_{m+1} + (m+1)(m+2)a_{m+2} \right] \ln^{m-2} (\alpha) \\
    & \quad + [k(k-1)a_{k-2} + (2k-1) (k-1) a_{k-1}] \ln^{k-2}(\alpha) + k(k-1)a_{k-1} \ln^{k-1}(\alpha) \,.
\end{align*}
By using the definition of $a_m$, we can verify that 
\begin{align*}
     &k(k-1)a_m + (2k-1) (m+1) a_{m+1} + (m+1)(m+2)a_{m+2} =  0\\
     &\text{and} \quad k(k-1)a_{k-2} + (2k-1) (k-1) a_{k-1} = 0 \,.
\end{align*}
Thus we finally get, 
\begin{equation}
    f''(\alpha) =  k(k-1)a_{k-1} \ln^{k-1}(\alpha) = -\frac{k^2 \big(\alpha k \log(\frac1\alpha)\big)^{k-1}}{\alpha k!} \leq 0
\end{equation}
where we use the fact that since $\alpha \in [0,1]$, we have $\alpha \log(1/\alpha) \geq 0$.
\end{proof}
\begin{definition}
\label{monotone_defn}
An algorithm is called monotone if the probabilities of selecting items $i$ and $j$ satisfy $p_i \geq p_j$ whenever the item values $v_i > v_j$ holds for any two items.
\end{definition}

\begin{lemma}
\label{lemma_monotone}
\algoname \ is a monotone algorithm. 
\end{lemma}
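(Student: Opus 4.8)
\textbf{Combinatorial characterization of $S_{\mathcal{A}}$.} The plan is to first replace the operational description of \algoname\ by a purely combinatorial description of its output, and then establish monotonicity by a position-swap coupling on items of \emph{consecutive} value. Given a fixed arrival order, call a position $p$ a \emph{running record} if the item at position $p$ carries one of the $k$ largest values among the first $p$ items. I would first show that \algoname\ selects $p$ if and only if $p>t$, $p$ is a running record, and strictly fewer than $k$ running records occur among positions $t+1,\dots,p-1$; equivalently, $S_{\mathcal{A}}$ is exactly the set of the first (at most) $k$ running records appearing after the sampling phase. The point is that \algoname\ updates $R$ only when it selects, so as long as $|S_{\mathcal{A}}|<k$ the list $R$ coincides with the running top-$k$ and the test $\mathcal{V}(i)\ge\mathcal{V}(R[k])$ is exactly ``$i$ is a running record''; once $|S_{\mathcal{A}}|=k$ the budget is spent and the now-frozen $R$ is irrelevant. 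An induction on $p$ then gives the claim.

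\textbf{Reduction to adjacent ranks.} Since \algoname\ only ever compares values and the arrival order is a uniformly random permutation, the probability that a given item is selected depends only on the rank of its value; write $p_\rho$ for that probability when the item has the $\rho$-th largest value. It then suffices to prove $p_\rho\ge p_{\rho+1}$ for each $\rho$, since composing these inequalities yields $p_i\ge p_j$ whenever $v_i>v_j$, which is exactly Definition~\ref{monotone_defn}. Let $\Phi$ be the involution on arrival orders that exchanges the positions of the rank-$\rho$ and rank-$(\rho+1)$ items; $\Phi$ preserves the uniform measure, so it is enough to show: if the rank-$(\rho+1)$ item is selected under an order $\Pi$, then the rank-$\rho$ item is selected under $\Phi(\Pi)$.

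\textbf{The swap.} Let $u,w$ be the positions of the rank-$\rho$ and rank-$(\rho+1)$ items under $\Pi$. The decisive observation---and the reason the swap is done on \emph{consecutive} ranks---is that the running-record status of every position other than $u$ and $w$ is identical under $\Pi$ and $\Phi(\Pi)$: for a position holding a rank $<\rho$ the relevant count of strictly better values in its prefix involves only ranks $<\rho$, while for a position holding a rank $>\rho+1$ that count changes by $\mathbf 1\{u\le p\}+\mathbf 1\{w\le p\}$ under either order, hence not at all. A short case split on $u<w$ versus $u>w$ then shows that (i) $w$ is still a running record under $\Phi(\Pi)$ (inserting the larger value can only help), and (ii) the number of running records among positions $t+1,\dots,w-1$ is unchanged---if $u<w$ the status of $u$ itself does not change, and if $u>w$ or $u\le t$ then $u$ lies outside the window $\{t+1,\dots,w-1\}$. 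Since the rank-$(\rho+1)$ item being selected under $\Pi$ forces $w>t$ together with fewer than $k$ running records before $w$, the characterization above certifies that the rank-$\rho$ item, which sits at position $w$ under $\Phi(\Pi)$, is selected; applying the bijection $\Phi$ gives $p_\rho\ge p_{\rho+1}$, proving monotonicity.

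\textbf{Expected main obstacle.} The hard part will be the swap step: verifying that exchanging two \emph{adjacent} ranks disturbs the running-record structure only at $u$ and $w$ and, crucially, never increases the running-record count on the prefix $\{t+1,\dots,w-1\}$. Adjacency is essential---swapping two far-apart values could flip the status of many intermediate positions and push the number of selections up to $k$ before position $w$, breaking the implication. Getting the frozen-$R$ bookkeeping right in the first step is the other place that needs some care.
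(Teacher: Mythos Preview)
Your proof is correct and follows the same swap-of-adjacent-ranks injection argument as the paper: fix consecutive ranks $\rho,\rho+1$, couple each arrival order with the one obtained by exchanging their positions, and show that whenever the worse item is selected the better one is selected in the coupled order. The one substantive difference is your preliminary step---the clean characterization of $S_{\mathcal{A}}$ as exactly the first $k$ running top-$k$ records after time $t$---which reduces the swap analysis to counting running records on a prefix; the paper instead tracks the algorithm's internal state $R$ and $S_{\mathcal{A}}$ directly through the execution, which makes its Case~2 ($a>b$) noticeably heavier (it has to argue about $R^a_\pi[k]\ge R^a_{\pi'}[k]$ and that the same elements are selected between the two positions). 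Your characterization buys a more transparent case split, while the paper's version avoids proving that extra lemma; the underlying injection is identical.
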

\begin{proof}
    In order to prove that \algoname \ is monotone as defined in Definition \ref{monotone_defn} we must prove that $p_i \geq p_j$ (where $p_i$ is the probability of picking the item $i$) for any two items where $v_i > v_j$. Without loss of generality let us consider a decreasing ordering of $n$-elements based on their values ---i.e. $v_1 > v_{2} > \dots > v_n$.

    We prove that $p_i \geq p_{i+1}$ for all $i \in [1, \dots, n-1]$ by showing that for each input sequence where $v_{i+1}$ is accepted, there exists a unique input sequence where $v_i$ is accepted. Let us consider a permutation $\pi$ where $v_{i+1}$ appeared and was accepted at time step $a$ while $v_i$ appeared at time step $b$. By swapping $v_i$ and $v_{i+1}$ we obtain a new permutation $\pi'$ where $v_i$ now appears at $a$ and $v_{i+1}$ at $b$. We now study the two following cases.
    
    \xhdr{Case 1: $a < b$}
    
    If $a < b$ notice that the reference set, $R$, and the selected set $S_{\mathcal{A}}$, are exactly the same at time step $a$ for both permutations $\pi$ and $\pi'$. Therefore, if $v_{i+1}$ was accepted at time step $a$ in permutation $\pi$ then $v_i$ will also be accepted at time step $a$ in permutation $\pi'$ since $v_i > v_{i+1}$.   

    \xhdr{Case 2: $a > b$}
    
    If $a > b$ notice that $R$---by definition of \algoname---at time step $a$ contains top-$k$ elements observed in the first $a-1$ time steps. 
    Now the $k$-th element in $R$ at time-step $a$ must satisfy,
    \begin{equation*}
        R^a_{\pi}[k] \geq R^a_{\pi'}[k],
    \end{equation*}
    where $R^a_{[\cdot]}[k]$ corresponds to the $k$-element in the reference set for a specific permutation at time step $a$. Hence, we know that $v_{i} > v_{i+1} \geq R^a_{\pi}[k] \geq R^a_{\pi'}[k]$ as $v_{i+1}$ was assumed to be picked.

    Furthermore, the $S_{\mathcal{A}}$ and $R$ is the same for permutations $\pi$ and $\pi'$ at time-step $b$. Now by our primary assumption that $v_{i+1}$ is picked at time-step $a>b$ in $\pi$ this means that $v_i$ must be  $v_i \geq R^b_{\pi}[k]$ since $v_i > v_{i+1}$. However, observe that $v_i$ and $R^b_{\pi}[k]$ cannot be consecutive in value as $v_{i+1}$ appears at time-step $a > b$ in permutation $\pi$. This implies that $v_{i+1}$ must also be selected at time step $b$ in permutation $\pi'$ since $v_i$ and $v_{i+1}$ are consecutive in value. By a similar argument based on consecutive order of values between time steps $a$ and $b$ precisely the same elements will be selected in both $\pi$ and $\pi'$. The argument that   $v_{i} > v_{i+1} \geq R^a_{\pi}[k] \geq R^a_{\pi'}[k]$ implies that if $v_{i+1}$ is selected in permutation $\pi$, $v_i$ will also be selected in permutation $\pi'$. The claim then follows by applying the inequality $p_i \geq p_{i+1}$ in an iterative fashion.
\end{proof}
\begin{lemma} 
\label{lemma_three}
Let $f_i \,, \,i = 1 \ldots k$ be decreasing positive functions then we have 
\begin{equation}
    \sum_{p_1=a_1}^{b_1} \ldots \sum_{p_k=a_k}^{p_{k-1}} f_1(p_1) \ldots f_k(p_k)
    \geq \int_{x_1=a_1}^{b_1+1} \ldots \int_{x_k = a_k}^{x_{k-1}+1}  f_1(x_1) \ldots f_k(x_k) dx_1\dots dx_k
\end{equation}
\end{lemma}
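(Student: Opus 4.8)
The plan is to prove the inequality by induction on $k$, reducing at each step to the elementary one–dimensional comparison between a Riemann sum and an integral of a decreasing function. For the base case $k=1$: since $f_1$ is decreasing, $f_1(p)=\int_p^{p+1}f_1(p)\,dx\ge\int_p^{p+1}f_1(x)\,dx$ for every integer $p$, and summing over $p=a_1,\dots,b_1$ gives $\sum_{p_1=a_1}^{b_1}f_1(p_1)\ge\int_{a_1}^{b_1+1}f_1(x_1)\,dx_1$. This is exactly the decreasing-function, zero-error instance of the refined lemma already invoked in the $k=2$ proof.

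For the inductive step I would assume the statement for $k-1$ decreasing functions and all choices of limits. First fix the outermost index $p_1\in\{a_1,\dots,b_1\}$ and apply the inductive hypothesis to the inner $(k-1)$-fold nested sum in $p_2,\dots,p_k$, using $p_1$ as the top limit of its outermost sum, to get
\[
\sum_{p_2=a_2}^{p_1}\!\cdots\!\sum_{p_k=a_k}^{p_{k-1}}\!f_2(p_2)\cdots f_k(p_k)\;\ge\;\int_{a_2}^{p_1+1}\!\cdots\!\int_{a_k}^{x_{k-1}+1}\!f_2(x_2)\cdots f_k(x_k)\,dx_k\cdots dx_2\;=:\;\Psi(p_1).
\]
Extended to a real argument, $\Psi$ is nonnegative and nondecreasing. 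Multiplying by $f_1(p_1)\ge 0$ and summing over $p_1$, the claim reduces to the one–dimensional inequality $\sum_{p_1=a_1}^{b_1}f_1(p_1)\Psi(p_1)\ge\int_{a_1}^{b_1+1}f_1(x_1)\Psi(x_1)\,dx_1$, which I would attack by splitting $[a_1,b_1+1]$ into the unit intervals $[p_1,p_1+1]$ and comparing each $\int_{p_1}^{p_1+1}f_1(x_1)\Psi(x_1)\,dx_1$ with the summand $f_1(p_1)\Psi(p_1)$, using $f_1(x_1)\le f_1(p_1)$ on the interval.

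\textbf{This last step is where I expect the real difficulty.} Because $\Psi$ is \emph{increasing}, the naive bound only gives $f_1(p_1)\int_{p_1}^{p_1+1}\Psi\le f_1(p_1)\Psi(p_1+1)$, which overshoots the desired summand by the boundary increment $f_1(p_1)\int_{p_1}^{p_1+1}\bigl(\Psi(x_1)-\Psi(p_1)\bigr)\,dx_1$; in the geometric picture, the lattice cubes anchored at the points of the discrete order-cone $\{a_i\le p_i\le p_{i-1}\}$ do not cover the continuous region $\{a_i\le x_i\le x_{i-1}+1\}$, since thin diagonal slivers near $x_i=x_{i-1}+1$ are left out. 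The resolution has to use that \emph{all} of $f_2,\dots,f_k$ are decreasing, not just $f_1$: this makes $\Psi$ concave with small unit-interval increments, so the integrand over the uncovered slivers is small relative to the diagonal terms $p_i=p_{i-1}$, which are generously over-counted. Concretely I would either (i) strengthen the inductive hypothesis to carry, alongside the main bound, a controlling estimate on $\int f_1(x_1)\bigl(\Psi(x_1)-\Psi(\lfloor x_1\rfloor)\bigr)\,dx_1$ in terms of already-counted summands, or (ii) perform the bookkeeping directly, transferring the sliver integrals onto the diagonal terms level by level and then concluding by monotonicity of the integral. Once this boundary accounting is carried through, the induction closes.
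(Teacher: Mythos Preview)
The obstacle you isolate is not a technicality to be patched: the inequality as stated, with inner upper limits $x_{i-1}+1$, is false. Take $k=2$, $f_1=f_2\equiv 1$, $a_1=a_2=0$, $b_1=1$; the left side equals $\sum_{p_1=0}^{1}(p_1+1)=3$ while the right side is $\int_0^2(x_1+1)\,dx_1=4$. Your proposed resolutions (i) and (ii) therefore cannot succeed. What the paper's own argument actually establishes---read the intermediate displays of its proof---is the version with inner upper limits $x_{i-1}$ (no ``$+1$''); the extra ``$+1$'' in the lemma statement and in the last line of the proof is a slip, and that weaker bound is all that is used downstream.

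For the corrected statement your outside-in induction closes with no boundary accounting at all, and the sliver issue evaporates. The $(k-1)$-dimensional hypothesis, applied with outer bound $p_1$, gives $\sum_{p_2=a_2}^{p_1}\cdots\ge G(p_1+1)$ where $G(u):=\int_{a_2}^{u}\int_{a_3}^{x_2}\cdots\int_{a_k}^{x_{k-1}}f_2\cdots f_k$; the target $k$-fold integral is $\int_{a_1}^{b_1+1}f_1(x_1)G(x_1)\,dx_1$, now with $x_2$-upper-limit $x_1$ rather than $x_1+1$. On each $[p_1,p_1+1]$ one has both $f_1(x_1)\le f_1(p_1)$ and $G(x_1)\le G(p_1+1)$, whence $\int_{p_1}^{p_1+1}f_1 G\le f_1(p_1)G(p_1+1)$, and summing over $p_1$ finishes the step. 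The paper instead peels from the inside out: convert $\sum_{p_k}f_k$ to $\int_{a_k}^{p_{k-1}+1}f_k$, replace $f_{k-1}(p_{k-1})$ by $\int_{p_{k-1}}^{p_{k-1}+1}f_{k-1}(x_{k-1})\,dx_{k-1}$, and then shrink the inner limit from $p_{k-1}+1$ down to $x_{k-1}$ using $x_{k-1}\le p_{k-1}+1$ and $f_k\ge 0$. The two orderings are mirror images; the essential trade in both is ``gain a $+1$ when turning a sum into an integral, spend it when linking to the next variable.''
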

\begin{proof}
    The main proof step involves in first noticing that since the functions $f_i \,, \,i = 1 \ldots k$ are decreasing and are positive we have, 
    \begin{equation}
        f_1(p_1) \ldots f_k(p_k) \geq  f_1(p_1) \ldots  f_{k-1}(p_{k-1})\int_{x_k = p_k}^{p_k+1} f_k(x_k) dx_k
    \end{equation}
    Thus, by summing this inequality for $p_k= a_k \ldots p_{k-1}$, we get
    \begin{equation}
    \sum_{p_k=a_k}^{p_{k-1}} f_1(p_1) \ldots f_k(p_k)
    \geq  f_1(p_1) \ldots  f_{k-1}(p_{k-1})  \int_{x_k = a_k}^{p_{k-1}+1} f_k(x_k) dx_k
    \end{equation}
    Now, because the functions $f_i \,, \,i = 1 \ldots k$ are decreasing and positive we have, 
     \begin{align}
    \mathcal{S} 
    &= \sum_{p_k=a_k}^{p_{k-1}} f_1(p_1) \ldots f_k(p_k)\\
    &\geq  f_1(p_1) \ldots  f_{k-2}(p_{k-2})  \int_{x_{k-1} = p_{k-1}}^{p_{k-1}+1} f_{k-1}(x_{k-1})  \int_{x_k = a_k}^{p_{k-1}+1} f_k(x_k) dx_{k-1}dx_k \\
    &\geq f_1(p_1) \ldots  f_{k-2}(p_{k-2})  \int_{x_{k-1} = p_{k-1}}^{p_{k-1}+1} f_{k-1}(x_{k-1})  \int_{x_k = a_k}^{x_{k-1}} f_k(x_k) dx_{k-1}dx_k 
    \end{align}
    where for the last inequality we used the fact that $x_{k-1} \in [p_{k-1}, p_{k-1}+1]$.
    Finally, by summing for $p_{k-1} = a_{k-1} \ldots p_{k-2}$, we get,
    \begin{align}
    \sum_{p_{k-1}=a_{k-1}}^{p_{k-2}}\mathcal{S} &= \sum_{p_{k-1}=a_{k-1}}^{p_{k-2}} \sum_{p_k=a_k}^{p_{k-1}} f_1(p_1) \ldots f_k(p_k)\\
    &\geq f_1(p_1) \ldots  f_{k-2}(p_{k-2})  \int_{x_{k-1} = p_{k-1}}^{p_{k-1}+1} f_{k-1}(x_{k-1})  \int_{x_k = a_k}^{x_{k-1}} f_k(x_k) dx_{k-1}dx_k
    \end{align}
    Using a recursive argument we finally get,
    \begin{equation}
    \sum_{p_1=a_1}^{b_1} \ldots \sum_{p_k=a_k}^{p_{k-1}} f_1(p_1) \ldots f_k(p_k)
    \geq \int_{x_1=a_1}^{b_1+1} \ldots \int_{x_k = a_k}^{x_{k-1}+1}  f_1(x_1) \ldots f_k(x_k) dx_1\dots dx_k
\end{equation}
\end{proof}

\clearpage
\subsection{Analytic computation of $C_k$ for \algoname}

\begin{table}[ht!]
    \centering
    \caption{Values of the Competitive ratio $C_k$ and the associated optimal $\alpha_k$ needed to compute the threshold for \algoname. Note that for $5\leq k\leq 100$ the competitive ratio of \textsc{Single-Ref} provided by~\citet{albers2020new} outperforms \algoname's competitive ratio. However, our analysis provides a tractable way to scale the analytic computation of the competitive ratio with $k$ as the function to optimize (and its gradients) in Theorem~\ref{thm:general_k_theorem1} is $\mathcal{O}(k)$.}
    \vspace{2pt}
    \begin{tabular}{ccccccccccc}
     \toprule
          $k$ & 2 &3 & 4 & 5 & 100 & 200 &300 &400 & 500 & 600\\
         \midrule
        $C_k$ & .4273&.4575&.4769&.4906&.5959&.6062&.6108&.6136&.6156&.6170 \\
        $\alpha_k$ & .3824 &.3867&.3884&.3890&.3781 &.3755 &.3743&.3735&.3729&.3726 \\
        \bottomrule
    \end{tabular}
    \label{tab:C_k}
\end{table}

\section{Proof of Theorem~\ref{thm:stochastic_secretary} and empirical verification}
\label{appendix:proof_thm2}
We now prove Theorem \ref{thm:stochastic_secretary} in detail, reproduced here for convenience. 

\begin{reptheorem}{thm:stochastic_secretary}
Let us assume that \algoname\ observes independent random variables $\mathcal{V}_{i}$ following Assumption~\ref{assump:feas}. Its stochastic competitive ratio $C_{s}$ can be bounded as follows,
%  algorithm that observes independent random variables $\mathcal{V}_{i},\, i\in [n]$ satisfying Eq.~\ref{eq:concentration}. If $\Delta = \tfrac{1}{2}\min_{1\leq i \neq j \leq  n } |v_i - v_j|$ is positive, then $\mathcal{A}$ has a non-vanishing stochastic competitive ratio $C_{s}$ of at least:  
\begin{equation}
  C \geq C_{s} \geq \gamma C 
\end{equation}
\end{reptheorem}

\begin{proof}
In the Stochastic case we use the same beginning proof as in the non-stochastic case until~\eqref{p_picked_equal_not_filled}. Let us consider $i_a$ such that $\mathcal{V}_{i_a} \in \topk\{\mathcal{V}_i\}$,
\begin{align}
    \mathbb{P}(i_a \in S_\mathcal{A}) 
    &= \sum_{j=t}^{n-1} \mathbb{P}(i_a \in S_\mathcal{A} \text{ at time-step }j+1) 
\end{align}
Now, in the stochastic case $\mathbb{P}(i_a \in S_\mathcal{A} \text{ at time-step }j+1)$ not only depends on the set $\mathcal{S}_\mathcal{A}$ not being full but also that $i_a$ corresponds to a top-$k$ elements in $\{v_i\}$. Because of the expectation over permutations, the event of the knapsack being fulled at time step $j+1$ is independent from what happens at timestep $j+1$. Thus we can write that  
\begin{align*}
     \mathbb{P}(i_a \in S_\mathcal{A} \text{ at time-step }j+1)  
     &= \mathbb{P}( |S_\mathcal{A}|<k \text{ at time-step }j+1)
     \sP[\mathcal{V}_{i} \in \topk\{\mathcal{V}_i\}\,|\,v_i \in \topk\{v_i\}] \\
     &\geq  \mathbb{P}( |S_\mathcal{A}|<k \text{ at time-step }j+1) \gamma \,.
\end{align*}
Finally, we just need to notice that $\sP[\mathcal{V}_{i} \in \topk\{\mathcal{V}_i\}\,|\,v_i \in \topk\{v_i\}]$ does not depend on the value observed and thus leave to the same computation as in the non-stochastic case.

Similarly, we have 
\begin{align*}
     \mathbb{P}(i_a \in S_\mathcal{A} \text{ at time-step }j+1)  
     &= \mathbb{P}( |S_\mathcal{A}|<k \text{ at time-step }j+1)
     \sP[\mathcal{V}_{i} \in \topk\{\mathcal{V}_i\}\,|\,v_i \in \topk\{v_i\}] \\
     &\leq  \mathbb{P}( |S_\mathcal{A}|<k \text{ at time-step }j+1) \,.
\end{align*}

In conclusion it leads to 
\begin{equation}
  C \geq C_{s} \geq \gamma C 
\end{equation}

\end{proof}

\subsection{Empirical Quantification of Assumption 1}
\label{app:empirical_quantification_assumption_one}
Assumption \ref{assump:feas} requires that a percentage of top-$k$ true values $v_i$ remain top-$k$ under noise. We now empirically quantify the strength of this assumption in both of our datasets MNIST and CIFAR-10. Note that unlike in theorem we cannot enforce any structure on the random variables that act as surrogate losses as provided by $f_s$. Despite this, we find that in all cases the overlap between the top-$k$ sets is non-zero which enables the effective use of online algorithms for picking candidate attack points as shown in table \ref{app:table_with_overlap}.

\begin{table*}[ht]
%\begin{small}
\footnotesize
\caption{Number of top-$k$ elements in $\{v_i\}$ that are also top-$k$ elements in $\{\mathcal{V}_{i}\}$ for the different setting considered in the paper. $\vert \topk\{\mathcal{V}_i\} \cap \topk\{v_i\} \vert$ Note that $n=10000$.}
%\label{appendix:lambda_non_robust_fig}
 \begin{center}\begin{tabular}{ c c c c | c c c }
 \toprule
 & \multicolumn{3}{c}{MNIST} & \multicolumn{3}{c}{CIFAR-10}\\
  & $k=10$ & $k=100$ & $k=1000$ & $k=10$ & $k=100$ & $k=1000$ \\
 \midrule
 FGSM & 0.9 $\pm$ 0.1 & 16.1 $\pm$ 0.7 & 324.0 $\pm$ 7.0 & 0.60 $\pm$ 0.03 & 12.5 $\pm$ 0.2 & 333.2 $\pm$ 2.8\\
 PGD & 0.58 $\pm$ 0.03 & 7.1 $\pm$ 0.3 & 229.9 $\pm$ 3.3 & 0.59 $\pm$ & 10.0 $\pm$ 0.3 & 227.1 $\pm$ 3.6\\
 \bottomrule
\label{app:table_with_overlap}
\end{tabular}\end{center} 
%\end{small}
\end{table*}

\section{Classical Online Algorithms for Secretary Problems}
\label{appendix:classical_online_algorithms}

All single threshold online algorithm described in this paper include: \textsc{Virtual}, \textsc{Optimistic} and \textsc{Single-Ref}. Each online algorithm consists of two phases ---\textbf{sampling phase} followed by \textbf{selection phase}--- and an optimal stopping point $t$ which is used by the algorithm to transition between the phases. We now briefly summarize these two phases for the aforementioned online algorithms. 

\xhdr{Sampling Phase - \textsc{Virtual}, \textsc{Optimistic} and \textsc{Single-Ref}}
In the sampling phase, the algorithms passively observe all data points up to a pre-specified time index $t$, but also maintains a sorted reference list $R$ consisting of the $k$ elements with the largest values $\mathcal{V}(i)$ seen. Thus the $R$ contains a list of elements sorted by decreasing value. That is $R[k]$ is the index of the $k$-th largest element in $R$ and $\mathcal{V}(R[k])$ is its corresponding value. The elements in $R$ are kept for comparison but are crucially \textit{not} selected in the sampling phase.

\subsection{\textsc{Virtual} Algorithm}

\xhdr{Selection Phase - \textsc{Virtual} algorithm}
Subsequently, in the selection phase, $i > t$, when an item with value $\mathcal{V}(i)$ is observed an irrevocable decision is made of whether the algorithm should select $i$ into $S$. To do so, the Virtual algorithm simply checks if the value of the $k$-th smallest element in $R$, $\mathcal{V}(R[k])$, is smaller than $\mathcal{V}(i)$ in addition to possibly updating the set $R$. The full Virtual algorithm is presented in Algorithm 1.

\begin{algorithm}[ht]
\textbf{Inputs:} $t\in[k\dots n-k]$, $R = \emptyset$, $S_{\mathcal{A}} = \emptyset$
\newline
\textbf{Sampling phase:} Observe the first $t$ data points and construct a list $R$ with the indices of the top $k$ data points seen.  $\texttt{sort}$ ensures: $ \mathcal{V}(R[1]) \geq \mathcal{V}(R[2]) \dots \geq \mathcal{V}(R[k]).$

\textbf{Selection phase (at time $i>t$):}

\begin{algorithmic}[1]
\IF {$ \mathcal{V}(i) \geq  \mathcal{V}(R[k])$
and $R[k] > t$} 
        \STATE $R$ = $\texttt{sort}\{R \cup \{i\} \setminus \{R[k]\}\}$ \hfill\COMMENT{// Update $R$ with element $i$ and also take out $R[k]$}
\ELSIF{ $ \mathcal{V}(i) \geq  \mathcal{V}(R[k])$
and $R[k] \leq t$}
        \STATE $R$ = $\texttt{sort}\{R \cup \{i\} \setminus \{R[k]\}\}$ \hfill\COMMENT{// Update $R$ with element $i$ and also take out $R[k]$}
        \STATE $S_\mathcal{A} = \{ S_\mathcal{A} \cup \{i \}\}$ \hfill\COMMENT{// Select element $i$}
\ENDIF 
\STATE
$i\gets i + 1$
\end{algorithmic}
 \caption{\textsc{Virtual Algorithm}}
\end{algorithm}

\subsection{\textsc{Optimistic} Algorithm}

\xhdr{Selection Phase - \textsc{Optimistic} algorithm}
In the optimistic algorithm, i is selected if and only if $\mathcal{V}(i) \geq \mathcal{V}(R[last])$. Whenever $i$ is selected, $R[last]$
is removed from the list
$R$, but no new elements are ever added to $R$. Thus, intuitively, elements are selected when they beat one of the remaining reference
points from $R$.
We call this algorithm “optimistic” because it removes the reference
point $R[last]$ even if $ \mathcal{V}(i)$ exceeds, say, $ \mathcal{V}(R[1])$. Thus, it implicitly assumes
that it will see additional very valuable elements in the future, which
will be added when their values exceed those of the remaining, more
valuable, $R[a]\,,\,a \in [k]$.

\begin{algorithm}[ht]
\textbf{Inputs:} $t\in[k\dots n-k]$, $R = \emptyset$, $S_{\mathcal{A}} = \emptyset$.

\textbf{Sampling phase (up to time $t$):} Observe the first $t$ data points and construct a list $R$ with the indices of the top $k$ data points seen. $\texttt{sort}$ ensures: $ \mathcal{V}(R[1]) \geq \mathcal{V}(R[2]) \dots \geq\mathcal{V}(R[k]).$
Set $last = k$, to be the index of the last element in $R$.

\textbf{Selection phase (at time $i>t$):}

\begin{algorithmic}[1]
\IF {$\mathcal{V}(i) \geq \mathcal{V}(R[last])$} 
        \STATE $R$ = $\{R \setminus \{R[last]\}\}$ \hfill\COMMENT{// Update $R$ by taking out $R[k]$}
        \STATE $S_\mathcal{A} = \{ S_\mathcal{A} \cup \{i \}\}$ \hfill\COMMENT{// Select element $i$}
        \STATE $last = last - 1$
\ENDIF 
$i\gets i + 1$
\end{algorithmic}
 \caption{\textsc{Optimistic Algorithm}}
\end{algorithm}

\subsection{\textsc{Single-Ref} Algorithm}
\xhdr{Selection Phase - \textsc{Single-Ref} algorithm}
In the \textsc{Single-Ref} algorithm, $i$ is selected if and only if $\mathcal{V}(i) \geq \mathcal{V}(R[r])$ and we haven't already selected $k$ elements. 
We call this algorithm single reference algorithm because we always compare incoming elements to one single reference element, that was determined in the sampling phase.

\begin{algorithm}[ht]
\textbf{Inputs:} $t\in[k\dots n-k]$, $R = \emptyset$, $S_{\mathcal{A}} = \emptyset$, $r \in [k]$ (reference rank)

 \textbf{Sampling phase (up to time $t$):} Observe the first $t$ data points and construct a list $R$ with the indices of the top $k$ data points seen. Let $s_r = R[r]$ be the $r$-th best item from the sampling phase.

\textbf{Selection phase (at time $i>t$):} 
\begin{algorithmic}[1]
\IF {$\mathcal{V}(i) \geq s_r$ and $ |S_{\mathcal{A}}| \leq  k$} 
        \STATE $S_\mathcal{A} = \{ S_\mathcal{A} \cup \{i \}\}$ \hfill\COMMENT{// Choose the first $k$ items better than $s_r$}

\ENDIF 

$i\gets i + 1$
\end{algorithmic}
 \caption{\textsc{Single-Ref Algorithm}}
\end{algorithm}

\section{Additional Experimental Results}
\label{appendix:additional_results}

In this appendix we provide detailed results on all experiments against non-robust models. For MNIST and CIFAR-10 we compute the average online fool rate over $1000$ runs while for Imagenet we used $5$ runs due to the increased computational resources required. Our pretrained models for MNIST and CIFAR-10 can be found at the footnote link below \footnote{\tiny \texttt{https://drive.google.com/drive/folders/1RLjWmkmZ5DC\_0sFfpqCdZH2zcG7lgWcH?usp=sharing}}

\subsection{Detailed results on Non-robust results}
\label{appendix:detailed_non_robust_results}

\begin{table*}[ht]
\vspace{-10pt}
%\begin{small}
\footnotesize
\caption{Online fool rate of various online algorithms on non-robust models. For a given attack and value of $k$: {\color{g1} $\mathbf{\bullet}$ } at least 97\%,
\textbf{\color{g2} $\mathbf{\bullet}$} at least 95\%, \textbf{\color{g3}$\mathbf{\bullet}$} at least 90\%, \textbf{\color{g4} $\mathbf{\bullet}$} less than 90\% of the optimal performance.}% achievable given by \textsc{Opt} in \textbf{bold}.}
\vspace{-5pt}
\label{table:non_robust_table1}
 \begin{center}\begin{tabular}{ c c c c c c c c }
 \toprule
 & & \multicolumn{3}{c}{MNIST (Online fool rate in \%)} & \multicolumn{3}{c}{CIFAR-10  (Online fool rate in \%)}\\
%  \cmidrule{3-8}
 & Algorithm & $k=10$ & $k=100$ & $k=1000$ & $k=10$ & $k=100$ & $k=1000$ \\
 \midrule
 \multirow{6}{*}{\rotatebox[origin=c]{90}{FGSM}}
 %\multirow{6}{*}{FGSM}
 & \textsc{Naive}& 64.1 $\pm$ 33 & 47.8 $\pm$ 31 & 45.7 $\pm$ 31 & 60.7 $\pm$ 17 & 59.2 $\pm$ 6.2 & 59.2 $\pm$ 4.3\\
 & \textsc{Opt}  & \textbf{87.0 $\pm$ 0.5} & \textbf{84.7 $\pm$ 0.5} &  \textbf{83.6 $\pm$ 0.4} & \textbf{86.6 $\pm$ 0.4} & \textbf{87.3 $\pm$ 0.3} &  \textbf{86.5 $\pm$ 0.2} \\
 \cmidrule{2-8}
 %\bottomrule
 %& \textsc{Best Online} & \cellcolor{g1} 76.3 $\pm$ 27.6 & 55.0 $\pm$ 28.8 & 52.9 $\pm$ 26.7\\
 %& \textsc{Worst Online} & \cellcolor{g3} 69.8 $\pm$ 31.9 & 52.6 $\pm$ 29.7 & 49.0 $\pm$ 28.7 \\
 %  \cmidrule{2-8}
 & \textsc{Optimistic} & \cellcolor{g3}79.0 $\pm$ 0.5 & \cellcolor{g3}77.6 $\pm$ 0.4 &\cellcolor{g3} 75.3 $\pm$ 0.4 &\cellcolor{g4} 75.3 $\pm$ 0.5 & \cellcolor{g4} 72.8 $\pm$ 0.2 &\cellcolor{g4} 71.9 $\pm$ 0.2\\
 & \textsc{Virtual} & \cellcolor{g3}78.6 $\pm$ 0.5 &\cellcolor{g3} 79.1 $\pm$ 0.4 &\cellcolor{g3} 77.4 $\pm$ 0.4 & \cellcolor{g4} 76.1 $\pm$ 0.5 &\cellcolor{g4} 77.1 $\pm$ 0.2 & \cellcolor{g4}75.4 $\pm$ 0.2\\
 & \textsc{Single-Ref} &\cellcolor{g2}85.1 $\pm$ 0.5 & \cellcolor{g1}83.0 $\pm$ 0.5 &\cellcolor{g4} 72.3 $\pm$ 0.5 &\cellcolor{g3} 80.4 $\pm$ 0.5 &\cellcolor{g2} 84.0 $\pm$ 0.3 & \cellcolor{g4}66.0 $\pm$ 0.2\\
 & \algoname & \cellcolor{g3}80.4 $\pm$ 0.5 &\cellcolor{g1} 82.5 $\pm$ 0.4 & \cellcolor{g1} 82.9 $\pm$ 0.4 &\cellcolor{g2} 82.9 $\pm$ 0.5 & \cellcolor{g1}86.3 $\pm$ 0.3 &  \cellcolor{g1}85.2 $\pm$ 0.2\\
 \midrule
 %\multirow{6}{*}{PGD}
  \multirow{6}{*}{\rotatebox[origin=c]{90}{PGD}}
 & \textsc{Naive} & 69.7  $\pm$ 16 & 67.2 $\pm$ 20 & 67.9 $\pm$ 18 & 72.5 $\pm$ 18 & 70.4 $\pm$ 9.4 & 68.6 $\pm$ 6.3\\
 & \textsc{Opt} & \textbf{73.6 $\pm$ 0.9} & \textbf{49.8 $\pm$ 0.8} & \textbf{49.6 $\pm$ 0.8} & \textbf{83.7 $\pm$ 0.6} & \textbf{80.6 $\pm$ 0.6} & \textbf{79.9 $\pm$ 0.5}\\
 \cmidrule{2-8}
 %\bottomrule
 & \textsc{Optimistic} & \cellcolor{g3} 66.2 $\pm$ 1.1 & \cellcolor{g2} 48.2 $\pm$ 0.8 &\cellcolor{g3} 45.1 $\pm$ 0.9 &\cellcolor{g3} 79.1 $\pm$ 0.6 & \cellcolor{g3}76.6 $\pm$ 0.4 & \cellcolor{g3}76.0 $\pm$ 0.4\\
 & \textsc{Virtual} &\cellcolor{g4} 63.4 $\pm$ 1.1 & \cellcolor{g4}46.2 $\pm$ 0.9 &\cellcolor{g2} 46.8 $\pm$ 0.8 & \cellcolor{g3}78.3 $\pm$ 0.6 &\cellcolor{g3} 77.5 $\pm$ 0.5 &\cellcolor{g3} 76.9 $\pm$ 0.4\\
 &\textsc{Single-Ref} & \cellcolor{g1} 71.5 $\pm$ 0.9 & \cellcolor{g1} 49.7 $\pm$ 0.8 & \cellcolor{g4}42.9 $\pm$ 0.9 & \cellcolor{g2}80.2 $\pm$ 0.6 &\cellcolor{g1} 79.6 $\pm$ 0.5 & \cellcolor{g4} 74.5 $\pm$ 0.4\\
 & \algoname         &\cellcolor{g2} 68.2 $\pm$ 1.0 & \cellcolor{g1}49.3 $\pm$ 0.8 & \cellcolor{g1} 49.7 $\pm$ 0.8 & \cellcolor{g1} 81.2 $\pm$ 0.6 & \cellcolor{g1} 80.1 $\pm$ 0.6 &\cellcolor{g1}79.5 $\pm$ 0.5\\
 \bottomrule
\end{tabular}\end{center} 
%\end{small}
\vspace{-5mm}
\end{table*}

\begin{table*}[ht]
%\begin{small}
\footnotesize
\caption{Competitive ratio on non-robust models using FGSM and PGD attacker and various online algorithms on ImageNet.}
\label{appendix:comp_ratio_non_robust}
 \begin{center}\begin{tabular}{ c c c c c c c c }
 \toprule
 & & \multicolumn{3}{c}{Imagenet (Online Fool Rate in \%)} & \\
 & Algorithm & $k=10$ & $k=100$ & $k=1000$ &  \\
 \midrule
 \multirow{6}{*}{\rotatebox[origin=c]{90}{FGSM}}
 & \textsc{Naive} & 66.7 $\pm$ 7.7 & 66.3 $\pm$ 2.1 & 65.0 $\pm$ 2.2 \\
 & \textsc{Opt} & \textbf{98.7 $\pm$ 0.9} & \textbf{95.3 $\pm$ 1.8} & \textbf{96.2 $\pm$ 1.1} \\
 \cmidrule{2-5}
 & \textsc{Optimistic} &  \cellcolor{g4} 86.0 $\pm$ 2.8 &  \cellcolor{g4}80.4 $\pm$ 1.6 &  \cellcolor{g4} 79.9 $\pm$ 1.4 \\
 & \textsc{Virtual} &  \cellcolor{g4} 85.3 $\pm$ 2.6 &  \cellcolor{g4} 84.9 $\pm$ 1.7 &  \cellcolor{g4} 84.3 $\pm$ 1.2 \\
 & \textsc{Single-Ref} &  \cellcolor{g2} 94.0 $\pm$ 2.5 & \cellcolor{g2} 92.4 $\pm$ 1.9 &\cellcolor{g4}  72.5 $\pm$ 1.7 \\
 & \algoname &   \cellcolor{g1} 96.0 $\pm$ 1.6 & \cellcolor{g1} 95.0 $\pm$ 1.2 & \cellcolor{g1}  95.8 $\pm$ 1.0\\
 \midrule
 \multirow{6}{*}{\rotatebox[origin=c]{90}{PGD}}
 & \textsc{Naive} &72.5 $\pm$ 5.4  & 72.5 $\pm$ 3.8 & 73.8 $\pm$ 4.8 \\
 & \textsc{Opt} &  \textbf{82.5 $\pm$ 7.4} & \textbf{80.2 $\pm$ 4.7} & \textbf{76.8 $\pm$ 5.5}\\
 \cmidrule{2-5}
 & \textsc{Optimistic} &  \cellcolor{g1}87.5 $\pm$ 5.4 & \cellcolor{g1}78.0 $\pm$ 4.3 & \cellcolor{g1}74.5 $\pm$ 5.1 \\
 & \textsc{Virtual} & \cellcolor{g2}80.0 $\pm$ 9.4 & \cellcolor{g3}74.0 $\pm$ 4.8 & \cellcolor{g1}75.6 $\pm$ 5.1\\
 &\textsc{Single-Ref} & \cellcolor{g3}77.5 $\pm$ 5.4 & \cellcolor{g1}79.5 $\pm$ 5.9 & \cellcolor{g1}75.2 $\pm$ 5.1  \\
 & \algoname & \cellcolor{g3} 77.5 $\pm$ 8.2 & \cellcolor{g1}79.0 $\pm$ 6.6 & \cellcolor{g1}76.4 $\pm$ 5.4\\
 \bottomrule
\end{tabular}\end{center} 
%\end{small}
\end{table*}

\subsection{Additional Results on Synthetic Data}
\label{appendix:synthetic_additional_results}
We now provide additional results on Synthetic Data with varying levels of noise added to each item in $\mathcal{D}$. In particular, we investigate in figure \ref{fig:additional_results_synthetic_data} online algorithms in the face of no noise ---i.e. $\sigma^2 = 0$, $\sigma^2 = 1$, and $\sigma^2 = 5$ in addition to $\sigma^2 = 10$ reported in figure \ref{fig:synthetic_data}. The deterministic setting corresponds to $\sigma^2 = 0$ while $\sigma^2=1$ and $\sigma^2=1$ correspond to the stochastic setting as introduced in section \ref{stochastic_k_secretary}.

\begin{figure}[ht]
    \centering
    \includegraphics[width=0.32\linewidth]{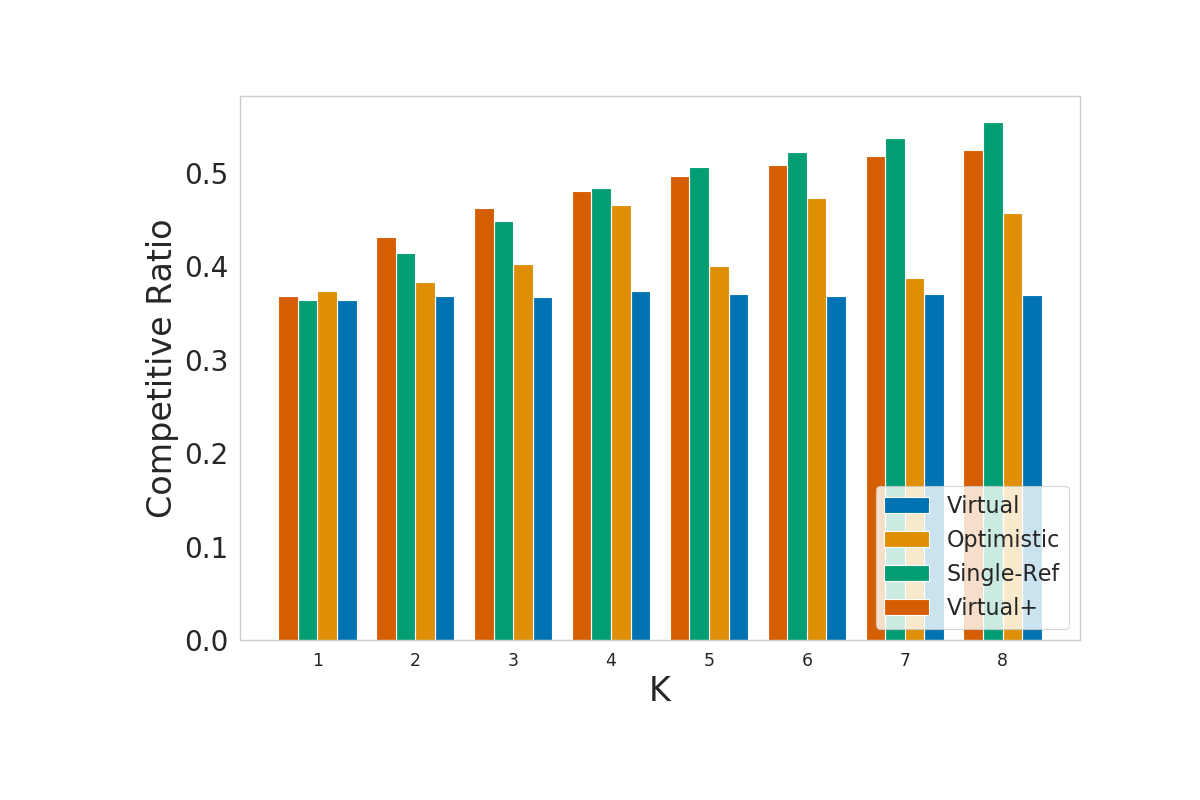}
    \includegraphics[width=0.32\linewidth]{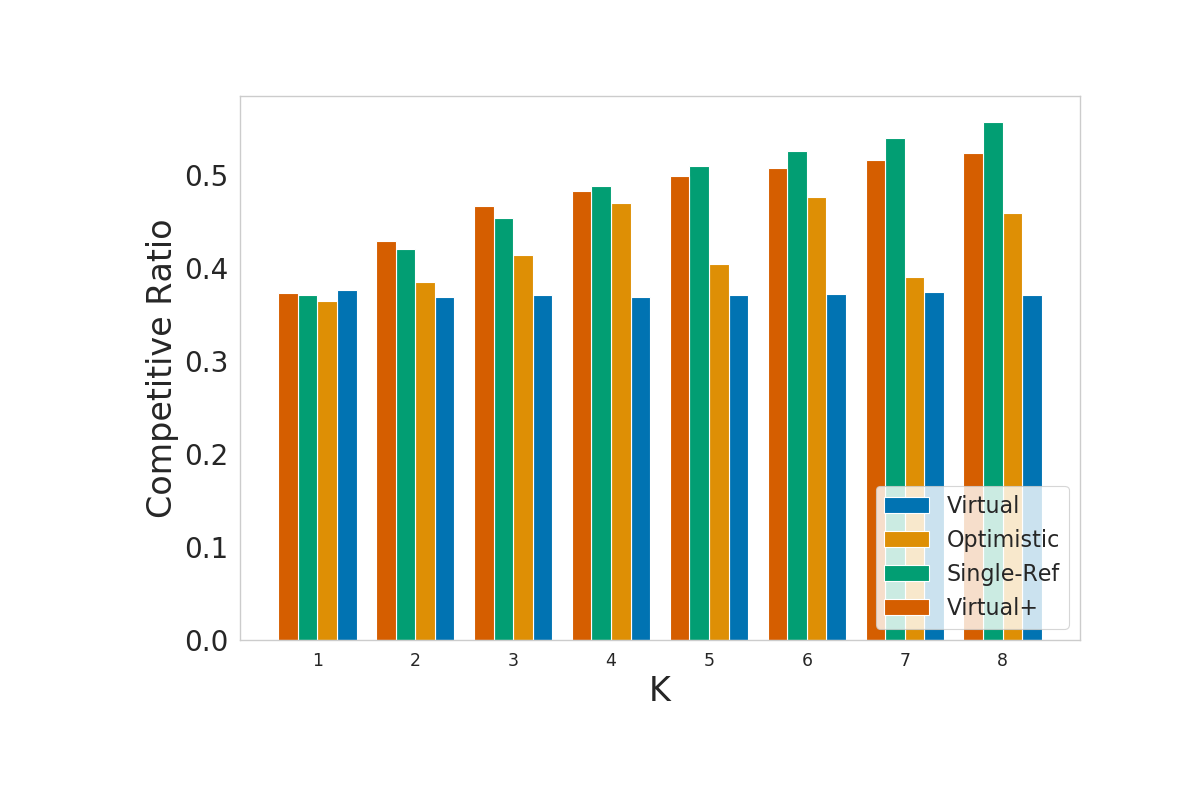}
    \includegraphics[width=0.32\linewidth]{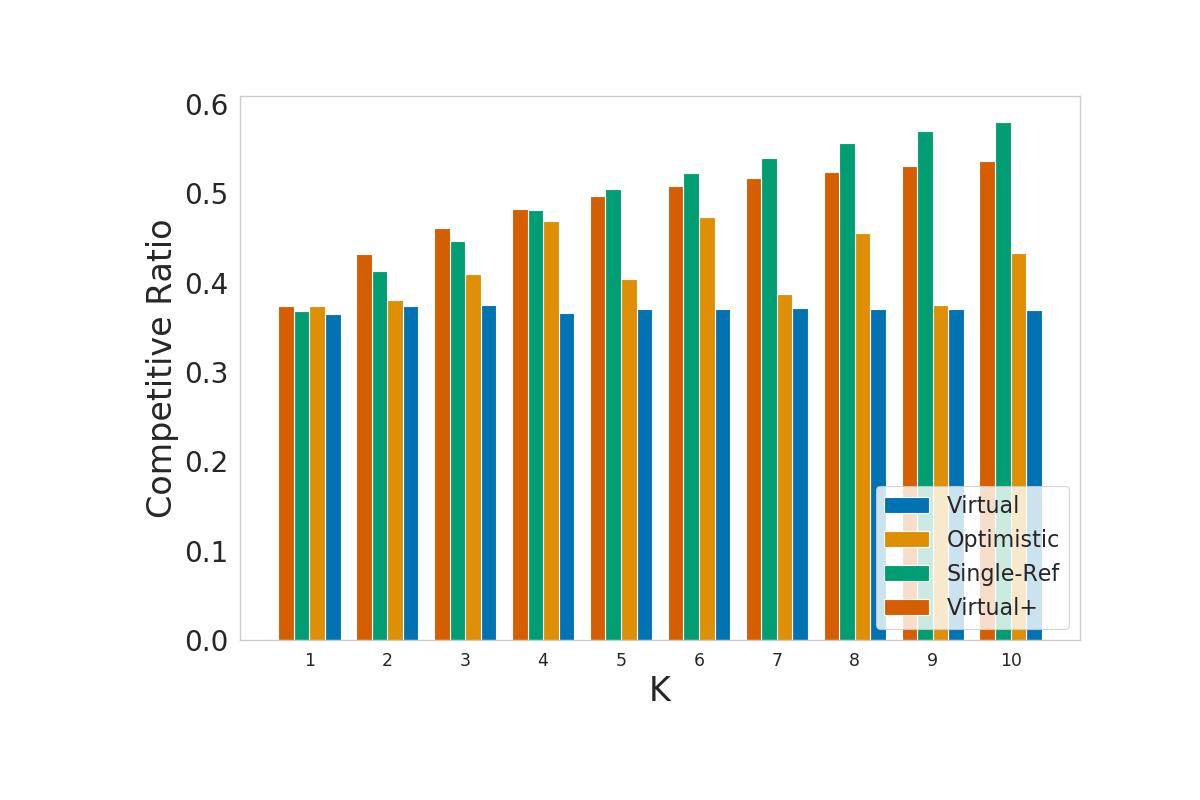}
    \caption{Estimation of the competitive ratio of online algorithms under various noise levels. \textbf{Left:} Deterministic setting with $\sigma^2=0$. \textbf{Middle:} Stochastic setting with $\sigma^2 = 1$. \textbf{Right:} Stochastic setting with $\sigma^2 = 5$.}
    \label{fig:additional_results_synthetic_data}
\end{figure}

\subsection{Experimental Details}
\label{appendix:additional_results_larger_datasets}
We provide more details about the experiments presented in section~\ref{section:experiments}. For further details we also invite the reader to look at the code provided with the supplementary materials. The complete code to reproduce results can be found \url{https://anonymous.4open.science/r/OnlineAttacks-4349} 

\paragraph{Attack strategies} We use two different attack strategies the Fast Gradient Sign Method (FGSM) \citep{goodfellow2014explaining} and 40 iterations of the PGD attack \citep{madry2017towards} with $l_\infty$.

\paragraph{Hyper-parameters of online algorithms} All the online algorithms except \textsc{Single-Ref} have a single hyper-parameters to choose which is the length of the sampling phase $t$. For \textsc{Virtual} and \textsc{Optimistic} we use $t=\floor{ \frac{t}{e}}$ which is the value suggested by theory in \citet{babaioff2007knapsack}. For \algoname \ we use $t=\alpha n$  as found by solving the maximization problem for a specific $k$ in Theorem~\ref{thm:general_k_theorem1}. \textsc{Single-Ref} has two hyper-parameters to choose the threshold $t$ ($c$ in the original paper) and reference rank $r$. For $k=1...100$ the values are given in \citet{albers2020new} and are numerical solutions to combinatorial optimization problems. However, for $k = 1000$ no values are specified and we choose $c=0.13$ and $r=40$ through grid search. Indeed, these values may not be optimal ones but we leave the choice of better values as future work.

\paragraph{MNIST model architectures} For table~\ref{table:non_robust_table1}, $f_s$ and $f_t$ are chosen randomly from an ensemble of trained classifiers. The ensemble is composed of five different architectures described in table~\ref{appendix:mnist_ens_adv_training_table}, with 5 trained models per architecture.

\begin{table}[h]
    \centering
          
            \begin{tabular}{ccccc}
            \toprule
                   &  A  & B  & C & D \\
                    \midrule
                   & Conv(64, 5, 5) + Relu & Dropout(0.2) & Conv(128, 3, 3) + Tanh & \multirow{2}{*}{\shortstack{FC(300) + Relu \\ Dropout(0.5)}}\\
                   & Conv(64, 5, 5) + Relu &  Conv(64, 8, 8) + Relu &  MaxPool(2,2) & \\
                   & Dropout(0.25) & Conv(128, 6, 6) + Relu  & Conv(64, 3, 3) + Tanh & \multirow{2}{*}{\shortstack{FC(300) + Relu \\ Dropout(0.5)}} \\
                   & FC(128) + Relu & Conv(128, 6, 6) + Relu & MaxPool(2,2) & \\
                   & Dropout(0.5) &  Dropout(0.5)  &  FC(128) + Relu &  \multirow{2}{*}{\shortstack{FC(300) + Relu \\ Dropout(0.5)}}\\
                   & FC + Softmax &  FC + Softmax &  FC + Softmax & \\
                   & & & & \multirow{2}{*}{\shortstack{FC(300) + Relu \\ Dropout(0.5)}} \\
                   & & & & \\
                   & & & & FC + Softmax \\
            \bottomrule
            \end{tabular}
            \caption{The different MNIST Architectures used for $f_s$ and $f_t$}
            \label{appendix:mnist_ens_adv_training_table}
    \end{table}

\paragraph{CIFAR and Imagenet model architectures} For table~\ref{table:non_robust_table1}, $f_s$ and $f_t$ are chosen randomly from an ensemble of trained classifiers. The ensemble is composed of five different architectures: VGG-16 \citep{simonyan2014very}, ResNet-18 (RN-18) \citep{he2016deep}, Wide ResNet (WR) \citep{zagoruyko2016wide}, DenseNet-121 (DN-121) \citep{huang2017densely} and Inception-V3 architectures (Inc-V3) \citep{szegedy2016rethinking}, with 5 trained models per architecture.

\subsection{Additional metrics}
In addition to the results provided in table~\ref{table:non_robust_table1}, we also provide two other metrics here: the stochastic competitive ratio in table~\ref{appendix:comp_ratio_non_robust} and the knapsack ratio table~\ref{appendix:knap_ratio_non_robust}. Where the knapsack ratio is defined as the sum value of $S_{\mathcal{A}}$ ---i.e. the sum of total loss, as selected by the online algorithm divided by the value of $S^*$ selected by the optimal offline algorithm.
We observe that the competitive ratio is not always a good metric to compare the actual performance of the different algorithms, since sometimes the online algorithm with the best competitive ratio is not the algorithm with the best fool rate. The knapsack ratio on the other hand seems to be a much better proxy for the actual performance of the algorithms, this is due to the fact that we're interested in picking elements that have have a good chance to fool the target classifier but are not necessarily the best possible attack.

\begin{table*}[ht]
%\begin{small}
\footnotesize
\caption{Competitive ratio on non-robust models using FGSM and PGD attacker and various online algorithms.}
\label{appendix:comp_ratio_non_robust}
 \begin{center}\begin{tabular}{ c c c c c c c c }
 \toprule
 & & \multicolumn{3}{c}{MNIST (competitive ratio)} & \multicolumn{3}{c}{CIFAR-10 (competitive ratio)}\\
 & Algorithm & $k=10$ & $k=100$ & $k=1000$ & $k=10$ & $k=100$ & $k=1000$ \\
 \midrule
 \multirow{6}{*}{\rotatebox[origin=c]{90}{FGSM}}
 & \textsc{Naive} & .006 $\pm$ .001 & .010 $\pm$ .000 & .098 $\pm$ .000 & .002 $\pm$ .000 & .010 $\pm$ .000 & .100 $\pm$ .000\\
 \cmidrule{2-8}
 & \textsc{Optimistic} & .063 $\pm$ .004 & .083 $\pm$ .003 & .197 $\pm$ .003 & .035 $\pm$ .002 & .064 $\pm$ .001 & .203 $\pm$ .001\\
 & \textsc{Virtual} & .048 $\pm$ .003 & .079 $\pm$ .003 & .201 $\pm$ .003 & .030 $\pm$ .002 & .073 $\pm$ .001 & .212 $\pm$ .001\\
 & \textsc{Single-Ref} & .070 $\pm$ .004 & .135 $\pm$ .006 & .181 $\pm$ .003 & .045 $\pm$ .002 & .109 $\pm$ .002 & .174 $\pm$ .001\\
 & \algoname & .072 $\pm$ .004 & .124 $\pm$ .005 & .270 $\pm$ .005 & .043 $\pm$ .002 & .107 $\pm$ .002 & .287 $\pm$ .002\\
 \midrule
 \multirow{6}{*}{\rotatebox[origin=c]{90}{PGD}}
 & \textsc{Naive} & .005 $\pm$ .001 & .010 $\pm$ .000 & .098 $\pm$ .000 & .001 $\pm$ .000 & .010 $\pm$ .000 & .100 $\pm$ .000\\
 \cmidrule{2-8}
 & \textsc{Optimistic} & .023 $\pm$ .002 & .036 $\pm$ .001 & .156 $\pm$ .001 & .033 $\pm$ .002 & .052 $\pm$ .002 & .157 $\pm$ .002\\
 & \textsc{Virtual} & .011 $\pm$ .001 & .049 $\pm$ .001 & .173 $\pm$ .001 & .028 $\pm$ .002 & .056 $\pm$ .002 & .160 $\pm$ .002\\
 &\textsc{Single-Ref} & .032 $\pm$ .002 & .067 $\pm$ .002 & .135 $\pm$ .001 & .042 $\pm$ .003 & .087 $\pm$ .003 & .145 $\pm$ .001\\
 & \algoname & .023 $\pm$ .002 & .059 $\pm$ .002 & .215 $\pm$ .002 & .040 $\pm$ .002 & .081 $\pm$ .003 & .200 $\pm$ .003\\
 \bottomrule
\end{tabular}\end{center} 
%\end{small}
\end{table*}

\begin{table*}[ht]
\footnotesize
\caption{Knapsack ratio on non-robust models using FGSM and PGD attacker and various online algorithms.}
\label{appendix:knap_ratio_non_robust}
 \begin{center}\begin{tabular}{ c c c c c c c c }
 \toprule
 & & \multicolumn{3}{c}{MNIST (knapscak ratio in \%)} & \multicolumn{3}{c}{CIFAR-10 (knapsack ratio in \%)}\\
 & Algorithm & $k=10$ & $k=100$ & $k=1000$ & $k=10$ & $k=100$ & $k=1000$ \\
 \midrule
 \multirow{6}{*}{\rotatebox[origin=c]{90}{FGSM}}
 & \textsc{Naive} & 19.0 $\pm$ 0.3 & 19.5 $\pm$ 0.2 & 29.9 $\pm$ 0.2 & 16.8 $\pm$ 0.2 & 20.3 $\pm$ 0.1 & 28.7 $\pm$ 0.1\\
 \cmidrule{2-8}
 & \textsc{Optimistic} & 33.0 $\pm$ 0.6 & 33.1 $\pm$ 0.3 & 42.1 $\pm$ 0.3 & 32.7 $\pm$ 0.4 & 34.1 $\pm$ 0.2 & 42.8 $\pm$ 0.2\\
 & \textsc{Virtual} & 30.8 $\pm$ 0.5 & 34.2 $\pm$ 0.3 & 42.9 $\pm$ 0.3 & 32.9 $\pm$ 0.4 & 37.8 $\pm$ 0.2 & 45.0 $\pm$ 0.2\\
 & \textsc{Single-Ref} & 39.7 $\pm$ 0.6 & 41.5 $\pm$ 0.6 & 40.2 $\pm$ 0.3 & 37.5 $\pm$ 0.5 & 45.7 $\pm$ 0.4 & 37.9 $\pm$ 0.1\\
 & \algoname & 36.2 $\pm$ 0.6 & 41.1 $\pm$ 0.6 & 51.4 $\pm$ 0.5 & 39.4 $\pm$ 0.5 & 47.1 $\pm$ 0.4 & 55.5 $\pm$ 0.3\\
 \midrule
 \multirow{6}{*}{\rotatebox[origin=c]{90}{PGD}}
 & \textsc{Naive} & 27.2 $\pm$ 0.6 & 15.5 $\pm$ 0.3 & 25.9 $\pm$ 0.3 & 22.5 $\pm$ 0.3 & 26.8 $\pm$ 0.2 & 36.3 $\pm$ 0.2\\
 \cmidrule{2-8}
 & \textsc{Optimistic} & 37.2 $\pm$ 0.9 & 24.2 $\pm$ 0.5 & 35.8 $\pm$ 0.4 & 35.3 $\pm$ 0.6 & 35.6 $\pm$ 0.4 & 43.1 $\pm$ 0.4\\
 & \textsc{Virtual} & 35.6 $\pm$ 0.9 & 27.5 $\pm$ 0.6 & 38.8 $\pm$ 0.4 & 35.5 $\pm$ 0.6 & 37.2 $\pm$ 0.5 & 43.9 $\pm$ 0.4\\
 &\textsc{Single-Ref} & 46.9 $\pm$ 1.1 & 34.3 $\pm$ 0.8 & 32.3 $\pm$ 0.5 & 39.0 $\pm$ 0.7 & 42.6 $\pm$ 0.6 & 41.2 $\pm$ 0.3\\
 & \algoname & 41.5 $\pm$ 1.1 & 32.3 $\pm$ 0.8 & 46.5 $\pm$ 0.6 & 40.9 $\pm$ 0.7 & 42.9 $\pm$ 0.6 & 48.8 $\pm$ 0.5\\
 \bottomrule
\end{tabular}\end{center} 
\end{table*}

\begin{table*}[ht]
\footnotesize
\caption{Competitive ratio on robust models using FGSM and PGD attacker and various online algorithms.}
\label{appendix:comp_ratio_robust}
 \begin{center}\begin{tabular}{ c c c c c c c c }
 \toprule
 & & \multicolumn{3}{c}{MNIST (competitive ratio)} & \multicolumn{3}{c}{CIFAR-10 (comptetitive ratio)}\\
 & Algorithm & $k=10$ & $k=100$ & $k=1000$ & $k=10$ & $k=100$ & $k=1000$ \\
 \midrule
 \multirow{6}{*}{\rotatebox[origin=c]{90}{FGSM}}
 & \textsc{Naive} & 0.00 $\pm$ 0.00 & 0.01 $\pm$ 0.00 & 0.10 $\pm$ 0.00 & 0.00 $\pm$ 0.00 & 0.01 $\pm$ 0.00 & 0.10 $\pm$ 0.00\\
 \cmidrule{2-8}
 & \textsc{Optimistic} & 0.24 $\pm$ 0.00 & 0.17 $\pm$ 0.00 & 0.33 $\pm$ 0.00 & 0.05 $\pm$ 0.00 & 0.21 $\pm$ 0.00 & 0.33 $\pm$ 0.00\\
 & \textsc{Virtual} & 0.18 $\pm$ 0.00 & 0.17 $\pm$ 0.00 & 0.33 $\pm$ 0.00 & 0.09 $\pm$ 0.00 & 0.22 $\pm$ 0.00 & 0.33 $\pm$ 0.00\\
 & \textsc{Single-Ref} & 0.27 $\pm$ 0.00 & 0.31 $\pm$ 0.00 & 0.28 $\pm$ 0.00 & 0.07 $\pm$ 0.00 & 0.39 $\pm$ 0.00 & 0.28 $\pm$ 0.00\\
 & \algoname & 0.25 $\pm$ 0.00 & 0.27 $\pm$ 0.00 & 0.49 $\pm$ 0.00 & 0.11 $\pm$ 0.00 & 0.35 $\pm$ 0.00 & 0.49 $\pm$ 0.00\\
 \midrule
 \multirow{6}{*}{\rotatebox[origin=c]{90}{PGD}}
& \textsc{Naive}& 0.00 $\pm$ 0.00 & 0.01 $\pm$ 0.00 & 0.10 $\pm$ 0.00 & 0.00 $\pm$ 0.00 & 0.01 $\pm$ 0.00 & 0.10 $\pm$ 0.00\\
 \cmidrule{2-8}
 & \textsc{Optimistic} & 0.10 $\pm$ 0.00 & 0.13 $\pm$ 0.00 & 0.32 $\pm$ 0.00 & 0.01 $\pm$ 0.00 & 0.15 $\pm$ 0.00 & 0.31 $\pm$ 0.00\\
 & \textsc{Virtual} & 0.09 $\pm$ 0.00 & 0.14 $\pm$ 0.00 & 0.32 $\pm$ 0.00 & 0.02 $\pm$ 0.00 & 0.16 $\pm$ 0.00 & 0.32 $\pm$ 0.00\\
 & \textsc{Single-Ref} & 0.12 $\pm$ 0.00 & 0.23 $\pm$ 0.00 & 0.27 $\pm$ 0.00 & 0.01 $\pm$ 0.00 & 0.25 $\pm$ 0.00 & 0.27 $\pm$ 0.00\\
 & \algoname & 0.13 $\pm$ 0.00 & 0.21 $\pm$ 0.00 & 0.48 $\pm$ 0.00 & 0.02 $\pm$ 0.00 & 0.25 $\pm$ 0.00 & 0.47 $\pm$ 0.00\\
 \bottomrule
\end{tabular}\end{center} 

\end{table*}

\begin{table*}[ht]
\footnotesize
\caption{Knapsack ratio on robust models using FGSM and PGD attacker and various online algorithms.}
\label{appendix:knap_ratio_robust}
 \begin{center}\begin{tabular}{ c c c c c c c c }
 \toprule
 & & \multicolumn{3}{c}{MNIST (knapscak ratio in \%)} & \multicolumn{3}{c}{CIFAR-10 (knapsack ratio in \%)}\\
 & Algorithm & $k=10$ & $k=100$ & $k=1000$ & $k=10$ & $k=100$ & $k=1000$ \\
 \midrule
 \multirow{6}{*}{\rotatebox[origin=c]{90}{FGSM}}
  & \textsc{Naive} & 1.2 $\pm$ 0.1 & 2.2 $\pm$ 0.0 & 10.5 $\pm$ 0.1 & 9.9 $\pm$ 0.2 & 12.5 $\pm$ 0.1 & 19.7 $\pm$ 0.0\\
 \cmidrule{2-8}
 & \textsc{Optimistic} & 38.0 $\pm$ 0.5 & 26.5 $\pm$ 0.1 & 44.6 $\pm$ 0.1 & 48.8 $\pm$ 0.5 & 45.2 $\pm$ 0.1 & 48.3 $\pm$ 0.0\\
 & \textsc{Virtual} & 35.6 $\pm$ 0.4 & 27.0 $\pm$ 0.1 & 38.0 $\pm$ 0.1 & 50.9 $\pm$ 0.3 & 52.5 $\pm$ 0.1 & 50.0 $\pm$ 0.0\\
 &\textsc{Single-Ref} & 46.9 $\pm$ 0.5 & 45.2 $\pm$ 0.2 & 46.4 $\pm$ 0.1 & 59.7 $\pm$ 0.6 & 73.1 $\pm$ 0.3 & 41.3 $\pm$ 0.1\\
 & \algoname & 49.2 $\pm$ 0.4 & 41.2 $\pm$ 0.1 & 58.6 $\pm$ 0.1 & 66.2 $\pm$ 0.4 & 74.5 $\pm$ 0.1 & 70.5 $\pm$ 0.0\\
 \midrule
 \multirow{6}{*}{\rotatebox[origin=c]{90}{PGD}}
 & \textsc{Naive} & 1.3 $\pm$ 0.1 & 2.4 $\pm$ 0.0 & 10.7 $\pm$ 0.1 & 11.9 $\pm$ 0.6 & 14.6 $\pm$ 0.2 & 21.8 $\pm$ 0.1\\
 \cmidrule{2-8}
 & \textsc{Optimistic} & 31.1 $\pm$ 0.5 & 24.4 $\pm$ 0.1 & 42.7 $\pm$ 0.1 & 46.0 $\pm$ 1.4 & 45.3 $\pm$ 0.3 & 49.2 $\pm$ 0.1\\
 & \textsc{Virtual} & 29.9 $\pm$ 0.4 & 26.3 $\pm$ 0.1 & 37.9 $\pm$ 0.1 & 49.4 $\pm$ 1.2 & 52.4 $\pm$ 0.3 & 51.6 $\pm$ 0.1\\
 &\textsc{Single-Ref} & 39.5 $\pm$ 0.5 & 41.8 $\pm$ 0.2 & 43.3 $\pm$ 0.1 & 56.1 $\pm$ 2.1 & 69.5 $\pm$ 0.9 & 42.0 $\pm$ 0.4\\
 & \algoname & 41.3 $\pm$ 0.4 & 39.7 $\pm$ 0.1 & 57.9 $\pm$ 0.1 & 63.4 $\pm$ 1.2 & 72.7 $\pm$ 0.3 & 71.2 $\pm$ 0.1\\
 \bottomrule
\end{tabular}\end{center} 
\end{table*}

\clearpage
\subsection{Additional results}
\label{app:additional_results_small_k}
\paragraph{Same architecture} In addition to table~\ref{table:non_robust_table1} we also provide some results on MNIST where $f_s$ and $f_t$ always have the same architecture but have different weights. This is a slightly less challenging setting as shown in \citet{bose2020adversarial}, we also observe that in this setting the adversaries are very effective against the target model.

\begin{table*}[ht]
\footnotesize
\caption{Fool rate on non-robust models, where $f_s$ and $f_t$ have the same architecture, using FGSM and PGD attacker and various online algorithms.}
\label{appendix:comp_ratio_same_arch}
 \begin{center}\begin{tabular}{ c c c c c }
 \toprule
 & & \multicolumn{3}{c}{MNIST (Fool rate in \%)}\\
 & Algorithm & $k=10$ & $k=100$ & $k=1000$ \\
 \midrule
 \multirow{6}{*}{\rotatebox[origin=c]{90}{FGSM}}
  & \textsc{Naive} (lower bound) & 73.5 $\pm$ 0.5 & 72.3 $\pm$ 0.4 & 72.6 $\pm$ 0.4\\
  & \textsc{Opt} (Upper-bound) & 100.0 $\pm$ 0.0 & 99.7 $\pm$ 0.0 & 98.6 $\pm$ 0.1\\
 \cmidrule{2-5}
 & \textsc{Optimistic} & 89.8 $\pm$ 0.4 & 86.0 $\pm$ 0.2 & 84.9 $\pm$ 0.2\\
 & \textsc{Virtual} & 90.3 $\pm$ 0.3 & 90.0 $\pm$ 0.2 & 88.1 $\pm$ 0.2\\
 &\textsc{Single-Ref} & 94.0 $\pm$ 0.3 & 96.3 $\pm$ 0.2 & 79.3 $\pm$ 0.3\\
 & \algoname & 96.9 $\pm$ 0.2 & 98.6 $\pm$ 0.1 & 97.5 $\pm$ 0.1\\
 \midrule
 \multirow{6}{*}{\rotatebox[origin=c]{90}{PGD}}
 & \textsc{Naive} (lower bound) & 91.1 $\pm$ 0.5 & 90.2 $\pm$ 0.4 & 90.0 $\pm$ 0.3\\
 & \textsc{Opt} (Upper-bound) & 98.5 $\pm$ 0.2 & 98.0 $\pm$ 0.1 & 97.4 $\pm$ 0.1\\
 \cmidrule{2-5}
 & \textsc{Optimistic} & 95.3 $\pm$ 0.3 & 93.8 $\pm$ 0.2 & 93.5 $\pm$ 0.2\\
 & \textsc{Virtual} & 95.5 $\pm$ 0.3 & 95.2 $\pm$ 0.2 & 94.4 $\pm$ 0.2\\
 &\textsc{Single-Ref} & 96.7 $\pm$ 0.3 & 96.9 $\pm$ 0.2 & 92.0 $\pm$ 0.3\\
 & \algoname & 97.1 $\pm$ 0.3 & 97.6 $\pm$ 0.1 & 97.0 $\pm$ 0.1\\
 \bottomrule
\end{tabular}\end{center} 
\end{table*}

\begin{table*}[ht]
\footnotesize
\caption{Competitive ratio on non-robust models for $k=4$}
\label{appendix:fool_rate_k=4}
 \begin{center}\begin{tabular}{ c c c c}
 \toprule
 & & MNIST (competitive ratio) & CIFAR (competitive ratio)\\
 & Algorithm & $k=4$ & $k=4$\\
 \midrule
 \multirow{6}{*}{\rotatebox[origin=c]{90}{FGSM}}
  & \textsc{Naive} & 0.004 $\pm$ 0.002 & 0.002 $\pm$ 0.001\\
 \cmidrule{2-4}
 & \textsc{Optimistic} & 0.194 $\pm$ 0.016 & 0.152 $\pm$ 0.012\\
 & \textsc{Virtual} & 0.147 $\pm$ 0.013 & 0.121 $\pm$ 0.011\\
 &\textsc{Single-Ref} & 0.200 $\pm$ 0.016 & 0.160 $\pm$ 0.013\\
 & \algoname & 0.199 $\pm$ 0.016 & 0.147 $\pm$ 0.012\\
 \midrule
 \multirow{6}{*}{\rotatebox[origin=c]{90}{PGD}}
 & \textsc{Naive} & 0.001 $\pm$ 0.001 & 0.000 $\pm$ 0.000\\
 \cmidrule{2-4}
 & \textsc{Optimistic} & 0.119 $\pm$ 0.013 & 0.075 $\pm$ 0.021\\
 & \textsc{Virtual} & 0.089 $\pm$ 0.010 & 0.070 $\pm$ 0.019\\
 &\textsc{Single-Ref} & 0.119 $\pm$ 0.013 & 0.059 $\pm$ 0.019\\
 & \algoname & 0.132 $\pm$ 0.013 & 0.086 $\pm$ 0.022\\
 \bottomrule
\end{tabular}\end{center} 
\end{table*}

\begin{table*}[ht]
\footnotesize
\caption{Knapsack ratio on non-robust models for $k=4$}
\label{appendix:knapsack_ratio_k=4}
 \begin{center}\begin{tabular}{ c c c c}
 \toprule
 & & MNIST (Knapsack ratio in \%) & CIFAR (Knapsack ratio in \%)\\
 & Algorithm & $k=4$ & $k=4$\\
 \midrule
 \multirow{6}{*}{\rotatebox[origin=c]{90}{FGSM}}
  & \textsc{Naive} & 12.5 $\pm$ 0.3 & 15.6 $\pm$ 0.3\\
 \cmidrule{2-4}
 & \textsc{Optimistic} & 34.5 $\pm$ 0.7 & 36.7 $\pm$ 0.6\\
 & \textsc{Virtual} & 31.4 $\pm$ 0.6 & 33.2 $\pm$ 0.5\\
 &\textsc{Single-Ref} & 34.9 $\pm$ 0.7 & 37.4 $\pm$ 0.6\\
 & \algoname & 37.2 $\pm$ 0.7 & 38.9 $\pm$ 0.6\\
 \midrule
 \multirow{6}{*}{\rotatebox[origin=c]{90}{PGD}}
 & \textsc{Naive} & 21.0 $\pm$ 0.6 & 71.3 $\pm$ 1.6\\
 \cmidrule{2-4}
 & \textsc{Optimistic} & 39.6 $\pm$ 1.0 & 82.8 $\pm$ 1.6\\
 & \textsc{Virtual} & 35.5 $\pm$ 0.9 & 81.8 $\pm$ 1.6\\
 &\textsc{Single-Ref} & 40.8 $\pm$ 1.0 & 81.7 $\pm$ 1.5\\
 & \algoname & 42.7 $\pm$ 1.0 & 84.4 $\pm$ 1.5\\
 \bottomrule
\end{tabular}\end{center} 
\end{table*}

\begin{table*}[ht]
\footnotesize
\caption{Fool rate on non-robust models for $k=4$}
\label{appendix:fool_rate_k=4}
 \begin{center}\begin{tabular}{ c c c c}
 \toprule
 & & MNIST (Fool rate in \%) & CIFAR (Fool rate in \%)\\
 & Algorithm & $k=4$ & $k=4$\\
 \midrule
 \multirow{6}{*}{\rotatebox[origin=c]{90}{FGSM}}
  & \textsc{Naive} (lower bound) & 59.2 $\pm$ 0.9 & 59.6 $\pm$ 0.8\\
  & \textsc{Opt} (upper bound) & 92.6 $\pm$ 0.5 & 86.6 $\pm$ 0.6 \\
 \cmidrule{2-4}
 & \textsc{Optimistic} & 83.7 $\pm$ 0.7 & 79.8 $\pm$ 0.7\\
 & \textsc{Virtual} & 80.6 $\pm$ 0.7 & 76.4 $\pm$ 0.7\\
 &\textsc{Single-Ref} & 84.9 $\pm$ 0.7 & 80.5 $\pm$ 0.7\\
 & \algoname & 86.5 $\pm$ 0.6 & 82.7 $\pm$ 0.7\\
 \midrule
 \multirow{6}{*}{\rotatebox[origin=c]{90}{PGD}}
 & \textsc{Naive} (lower bound) & 59.9 $\pm$ 1.2 & 71.3 $\pm$ 1.6 \\
 & \textsc{Opt} (Upper-bound) & 79.3 $\pm$ 1.0 & 87.7 $\pm$ 1.3 \\
 \cmidrule{2-4}
 & \textsc{Optimistic} & 72.1 $\pm$ 1.1 & 82.8 $\pm$ 1.6\\
 & \textsc{Virtual} & 70.0 $\pm$ 1.1 & 81.8 $\pm$ 1.6\\
 &\textsc{Single-Ref} & 74.1 $\pm$ 1.1 & 81.7 $\pm$ 1.5\\
 & \algoname & 74.5 $\pm$ 1.1 & 84.4 $\pm$ 1.5\\
 \bottomrule
\end{tabular}\end{center} 
\end{table*}

\clearpage

\section{Distribution of Values Observed By Online Algorithms}
\label{appendix:visualization_of_values_observed} 
In this section we further investigate performance disparity of online algorithms against robust and non-robust models for CIFAR-10 as observed in Tables \ref{table:non_robust_table1} and \ref{table:madry_challenge}. We hypothesize that one possible explanation can be found through analyzing the ratio distribution of values $\mathcal{V}_i$'s for unsuccessful and successful attacks as observed by the online algorithm when attacking each model type. However, note that eventhough an online adversary may employ a fixed attack strategy to craft an attack $x'=\textsc{ATT}(x)$ the scale of values in each setting are not strictly comparable as the attack is performed on different model types.
In other words, given an $\textsc{ATT}$ it is significantly more difficult to attack a robust model and thus we can expect a lower $\mathcal{V}_i$ when compared to attacking a non-robust model. Thus to investigate the difference in efficacy of online attacks we pursue a distributional argument. %
%More precisely, we focus on the ratio distribution of values for unsuccessful and successful attacks of the entire data stream. 

Indeed, distributions of $\mathcal{V}_i$'s observed, for a specific permutation of $\mathcal{D}$, may drastically affect the performance of the online algorithms. Consider for instance, if the $\mathcal{V}_i$'s that correspond to successful attacks cannot be distinguished from the ones that are unsuccessful. In such a case one cannot hope to use an online algorithm---that only observes $\mathcal{V}_i$'s---to always correctly pick successful attacks. In Figure \ref{fig:distrib_values} we visualize the ratio of $\mathcal{V}_i$'s of unsuccessful and successful attacks as the ratio of the densities of unsuccessful versus successful attack vectors (y-axis) as provided by a kernel density estimator for CIFAR-10 robust and non-robust models. It provides a non-normalized value of the ratio of unsuccessful attacks for a given value of $\mathcal V_i$. 
As observed, there is a significant amount of non-successful attacks for large values of $\mathcal V_i$ in the non-robust case which indicates that there are many data points with high values that lead to unsuccessful attacks. Furthermore, this also suggests one explanation for the higher efficacy of online algorithms against robust models: fewer attacks are successful but they are easier to differentiate from unsuccessful ones because of their relatively larger loss value. Importantly, this implies that given an online attack budget $k \ll n$ higher online fool rates can be achieved against robust models as the selected data points turn adversarial with higher probability when compared to non-robust models.

%As observed, the tail of the distribution for unsuccessful attacks is heavier in the non-robust case than in the robust case indicating even when $\mathcal{A}$ observes high values a fraction of selected data points still lead to unsuccessful attack vectors. 

\begin{figure}[H]
\centering
    \includegraphics[width=\textwidth]{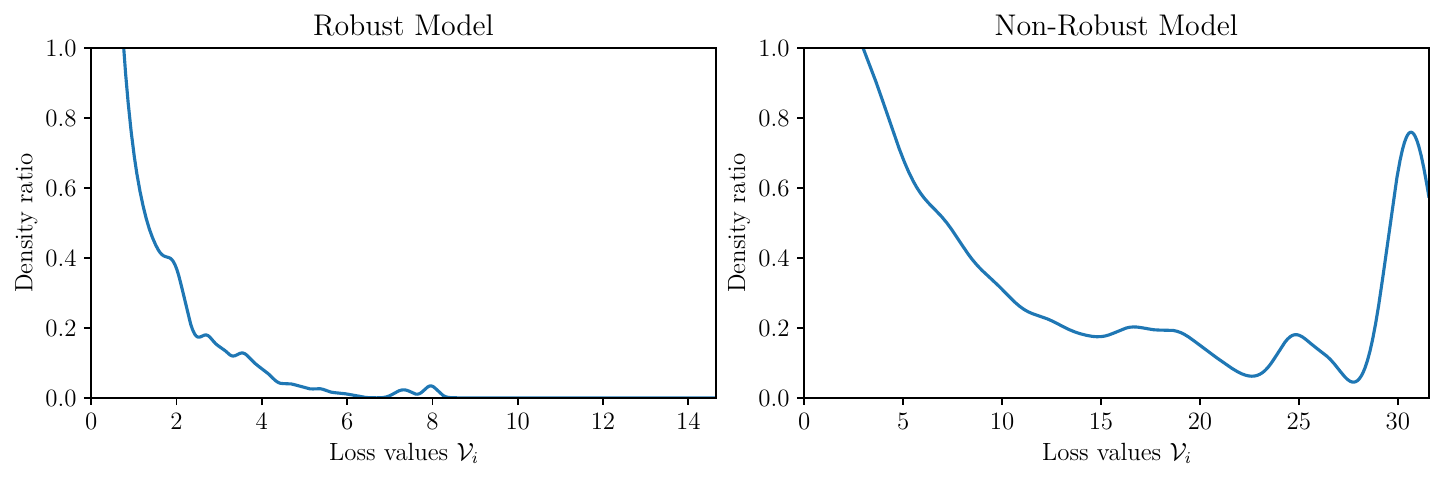}
    \caption{Distribution of the values for robust and non-robust models. We use a gaussian kernel density estimator to estimate the density.}
\label{fig:distrib_values}
\end{figure}

\section{Related Work}
%\section{Related Work}
%\vspace{-10pt}
%Adversarial attacks against deep neural networks were first introduced in \citep{szegedy2013intriguing,goodfellow2014explaining} and since then the field has experienced an explosion of interest in both whitebox settings \cite{madry2017towards,moosavi2016deepfool,carlini2017towards} and more realistic blackbox settings \cite{chen2017zoo,ilyas2018black, jiang2019black,bose2020adversarial, bhambri2019survey,chakraborty2018adversarial}. 

\xhdr{Adversarial attacks} The idea of attacking deep networks was first introduced in \citep{szegedy2013intriguing,goodfellow2014explaining}, and recent years have witnessed the introduction of challenging threat models, such as blackbox \cite{chen2017zoo,ilyas2018black, jiang2019black,bose2020adversarial,chakraborty2018adversarial} as well as defense strategies \cite{madry2017towards,tramer2017ensemble,Ding2020MMA}.
Closest to our setting are adversarial attacks against real-time systems \cite{gong2019real, gong2019remasc} and deep reinforcement learning agents \cite{lin2017tactics,sun2020stealthy}. However, unlike our work, these are not online algorithms and do not impose online constraints (see~\S\ref{appendix:further_related_work}).
%Finally, prevalent defence strategies against adversarial attacks include forms of adversarial training which can be interpreted as training classifiers via robust optimization \cite{madry2017towards,tramer2017ensemble,Ding2020MMA}.

\xhdr{$k$-secretary} The classical secretary problem was originally proposed by \cite{gardner1960mathematical} and later solved in \cite{dynkin1963optimum} with an $(1/e)$-optimal algorithm. \citet{kleinberg2005multiple} introduced the $k$-secretary problem and an asymptotically optimal algorithm achieving a competitive ratio of $1 - \Theta(\sqrt{1/k})$. 
%In the small $k$-regime, \citep{babaioff2007knapsack} proposed both the \textsc{Virtual} and \textsc{Optimistic} algorithms which are $(1/e)$-competitive.\textsc{Single-Ref} \cite{albers2019improved} further improves upon this result by introducing a new hyperparameter called the reference rank and a tuned threshold. 
As outlined in \S\ref{connection_to_prior_work} for general $k$ an optimal algorithms exist \cite{chan2014revealing}, but requires the analysis of involved LPs that grow with the size of $n$. A parallel line of work dubbed the prophet secretary problem, considers online problems where---unlike \S\ref{stochastic_secretary_algorithms_section}---some information on the distribution of values is known \textit{a priori}~\cite{azar2014prophet, azar2018prophet, esfandiari2017prophet}. Secretary problems have also been applied to machine learning by informing the online algorithm about the inputs before execution \cite{antoniadis2020secretary,dutting2020secretaries}. 
Finally, other interesting secretary settings include playing with adversaries \cite{bradac2019robust, kaplan2020competitive}.

\label{appendix:further_related_work}
 While we consider ---to the best of our knowledge---that our work is the only truly online threat model. Our setting is the only one considering that data points are only ever observed once, and a decision to attack must be made at the moment and cannot be reversed retroactively. For example, \citep{gong2019remasc} consider replay attacks on Voice-Controlled Systems whereby streamed audio input is captured with a recording device, and then the entire sequence is spoofed and replayed back. Unlike online attacks that we consider, they can manipulate the whole sequence retroactively and do not have to make an irreversible decision to attack at a given timestep. Similarly, both \citep{lin2017tactics, sun2020stealthy} consider adversarial attacks against deep reinforcement learning agents. Like us, they consider an adversarial budget that limits the number of points to attack to avoid detection. However, unlike us, they require whitebox access to the target model in order to train another predictive model by interacting with the environment, which can later be used to inform “when to attack”. Thus the datapoints appearing at test time may already be seen and scored during the training period. The dichotomy between collecting data for potentially an infinite time horizon before attacking is at odds with our online threat model as a data point can only be observed once. As a result, none of these works can be used within our online threat model and are not appropriate baselines.
\end{document}